%% file: main_v2.tex
\documentclass{article}

\usepackage[margin=1in]{geometry}
\usepackage[utf8]{inputenc}
\usepackage[T1]{fontenc}
\usepackage{hyperref}
\hypersetup{hypertexnames=false}
\usepackage{url}
\usepackage{booktabs}
\usepackage{amsfonts,amsmath,amssymb,amsthm}
\usepackage{mathtools}
\usepackage{microtype}
\usepackage{xcolor}
\usepackage{graphicx}
\usepackage{multirow}
\usepackage{enumitem}
\usepackage{multicol}

\usepackage{float}
\usepackage{algorithm}
\usepackage{algpseudocode}

\graphicspath{{figures/}}

\newcommand{\R}{\mathbb{R}}
\newcommand{\E}{\mathbb{E}}

\newcommand{\Law}{\mathcal{L}}
\newcommand{\dd}{\,\mathrm{d}}

\newcommand{\core}{\widetilde{\rho}_1}

\newcommand{\pisur}{\widetilde{\pi}}

\newtheorem{theorem}{Theorem}
\newtheorem{proposition}[theorem]{Proposition}
\newtheorem{lemma}[theorem]{Lemma}
\newtheorem{corollary}[theorem]{Corollary}
\theoremstyle{definition}
\newtheorem{assumption}{Assumption}

\theoremstyle{remark}
\newtheorem{remark}[theorem]{Remark}
\usepackage[numbers,square,comma]{natbib}

\title{Coreset-Induced Conditional Velocity Flow Matching}

\author{
Xiao Wang \\
Department of Statistics, Purdue University \\
\texttt{wangxiao@purdue.edu}
\and
Zihua She \\
Department of Statistics, Purdue University \\
\texttt{she8@purdue.edu}
\and
Jianxi Su \\
Department of Statistics, Purdue University \\
\texttt{jianxi@purdue.edu}
}

\begin{document}


\maketitle

\begin{abstract}
We propose Coreset-Induced Conditional Velocity Flow Matching (CCVFM), a
generative model that augments hierarchical rectified flow with a
data-informed source distribution. Hierarchical flow matching models
the full conditional velocity law in velocity space, but its inner
flow is asked to transport isotropic Gaussian noise to a multimodal
target velocity distribution from scratch. Our key observation is that this inner source can
be replaced by a closed-form surrogate built from a coreset of the
target. CCVFM first compresses the target into weighted atoms using an entropic
Sinkhorn coreset and lifts them to a Gaussian mixture. The induced conditional
velocity law is then a closed-form Gaussian mixture that can be sampled without
a learned neural sampler. 
{A lightweight correction flow, trained from this exact surrogate source,
then refines the remaining surrogate-to-target residual rather than
learning an entire noise-to-data map.}
{We prove that the surrogate transport cost equals the
target--surrogate Wasserstein gap under an explicit compression
assumption, whereas the noise-source analogue has a dimension-scale
lower bound. We further characterize the conditional second moment of
the direct surrogate-source training target and show that its
source-dependent excess is small when the surrogate conditional law is
close to the true conditional velocity law in mean and covariance.} 
Empirically, on
MNIST, CIFAR-10, ImageNet-32, and CelebA-HQ, the proposed method reaches
competitive few-step generation under matched architectures. Codes are available at \url{https://anonymous.4open.science/r/ccvfm-code-11D3/}.

\end{abstract}

\section{Introduction}

Flow matching~\citep{lipman2023flow,liu2023rectified,albergo2023stochastic}
has emerged as a simple and competitive paradigm for generative
modeling. Given a source $X_0\sim\rho_0$ and a target $X_1\sim\rho_1$,
one interpolates $X_t=(1-t)X_0+tX_1$, defines the velocity
$V=X_1-X_0$, and regresses a field $u_\theta(x,t)$ on $V$ under a
squared loss. The minimizer is the \emph{conditional mean}
$u^\star(x,t)=\E(V\mid X_t=x)$, which suffices to match the marginal
evolution but \emph{collapses the entire conditional law
$\pi(v\mid x,t)$ to a single direction}. Whenever the same
intermediate state $x$ is consistent with several endpoint
completions, mean-field learning blurs the local transport
structure and forces long integration paths (high NFE) at inference
to compensate. The NFE cost of few-step generation therefore has a
structural component that no amount of solver engineering can
remove: the field itself has been averaged.

The closest prior attempt to break this bottleneck is the
\emph{hierarchical rectified flow} (HRF2) of \citet{zhang2025hrf},
which models the full conditional law $\pi(v\mid x,t)$ via a
second-level flow in velocity space. HRF2 proves a key structural
identity: when $\rho_1$ is a GMM, the induced conditional velocity
law is itself a closed-form GMM \citep[Thm.~1 and Cor.~1 of][]{zhang2025hrf},
and we leverage this identity extensively. Its limitation is the choice of source: the
correction flow starts from an isotropic
$v_0\sim\mathcal{N}(0,I)$ and must learn the \emph{entire} transport
from noise to a potentially multimodal velocity law, so the
correction step carries most of the load and low-NFE generation
remains hard.

We introduce Coreset-Induced Conditional Velocity Flow Matching
(CCVFM), which keeps HRF2's hierarchical structure but replaces its
isotropic source with a data-informed surrogate in closed form. Stage~I fits $K$ weighted atoms using entropic
Sinkhorn and lifts them to a GMM $\core=\sum_k w_k\mathcal N({\mu_k},\Sigma_k)$.
Stage~II plugs $\core$ into the closed-form velocity identity, giving a
zero-parameter mixture sampler for $\pisur(v\mid x,t)$. Stage~III trains a direct surrogate-source correction flow instead of the full noise-to-data map.

The theoretical comparison is sharp at the generation boundary $t=0$, where
$\pi(v\mid x_0,0)=\rho_1(x_0+v)$. The CCVFM correction cost equals
$W_2(\rho_1,\core)$ and is $O(K^{-1/d})$ under our explicit Stage-I compression
assumption; the corresponding noise-source cost has a positive dimension-scale
lower bound. 
In addition we characterize the conditional second moment of the
Stage~III training residual under the direct surrogate-source sampler,
which equals
$\operatorname{tr}\Sigma_\pi(C)+\operatorname{tr}\Sigma_{\mathrm{sur}}(C)+
\|\mu_\pi(C)-\mu_{\mathrm{sur}}(C)\|^2$,
where $C=(X_t,t)$. Its source-dependent excess is small whenever the
surrogate conditional law is close to the true conditional velocity law
in mean and covariance, in contrast to the independent-Gaussian-source
baseline whose source-dependent excess is $\|\mu_\pi(C)\|^2+d$.

\paragraph{Contributions.}
(1) We introduce a coreset-to-velocity pipeline that replaces the inner
Gaussian source in hierarchical flow matching by a closed-form coreset-GMM
surrogate. 
(2) We prove a transport-task reduction (Theorem~\ref{thm:reduction}):
the CCVFM correction-source cost vanishes with $K$ under the Stage-I
compression assumption, while the HRF2 (noise-source) cost is bounded
below by a positive constant in $K$ and $n$. We further establish a
direct surrogate-source identity (Theorem~\ref{thm:posterior-coupling}):
the Stage-III training source has marginal exactly $\pisur(\cdot\mid C)$,
and its conditional training-residual second moment admits a
closed-form expression in the conditional means and covariances of
$\pi(\cdot\mid C)$ and $\pisur(\cdot\mid C)$, whose source-dependent
excess is small whenever the surrogate is close to the target in mean
and covariance.
(3) We obtain strong few-step image results: MNIST
FID$_{50k}=0.75$, CIFAR-10 FID$_{50k}=6.35$,
ImageNet-32 FID$_{50k}=8.76$, and CelebA-HQ 256
FID$_{28k}=4.17$, all at $51$ NFE. (4) {Building on existing nearest-neighbour-based
generative-model diagnostics
\citep{balle2022reconstructing,meehan2020nonparametric,borji2018evaluation,carlini2023diffusion},
we adapt the $1$-NN primitive into} a $1$-NN Inception-feature
goodness-of-fit diagnostic 
showing no preferential closeness to training data.


\section{Background}
\label{sec:background}

Rectified flow~\citep{liu2023rectified} and conditional flow
matching~\citep{lipman2023flow} sample $X_0\sim\mathcal N(0,I_d)$ and
$X_1\sim\rho_1$, set $X_t=(1-t)X_0+tX_1$, and regress a field on
$V=X_1-X_0$. The optimum $\E[V\mid X_t=x]$ matches marginals but discards the
full conditional law, which motivates velocity-distribution methods such as
HRF2~\citep{zhang2025hrf}. HRF2 learns $\pi(v\mid x_t,t)$ through a second
flow with Gaussian inner source and uses the identity
\begin{equation}
  \pi(v\mid x_t,t) \propto \rho_0\big(x_t-tv\big)\,\rho_1\big(x_t+(1-t)v\big),
\label{eq:true-conditional}
\end{equation}
whose GMM specialization gives explicit component means, covariances, and
weights whenever $\rho_1$ is a GMM. The remaining difficulty is that HRF2's
inner flow starts from isotropic noise and must cross mixture modes.

Weighted coresets compress data into atoms $\{(w_k,\mu_k)\}_{k=1}^K$ that
approximate $\rho_1$ in transport distance
\citep{cuturi2013sinkhorn,claici2018wasserstein}. Classical quantization
\citep{graf2000foundations} gives the $O(K^{-1/d})$ benchmark for globally
optimal $K$-atomic approximants under standard assumptions. For the deployed
entropic-Sinkhorn/GMM output we use the corresponding rate only through the
explicit Stage-I compression assumption.


\section{CCVFM}
\label{sec:method}

CCVFM has three stages (Figure~\ref{fig:method}): build a coreset-GMM
surrogate $\core$, read off its closed-form conditional velocity law
$\pisur(v\mid x_t,t)$, and train a correction flow whose source is
$\pisur(\cdot\mid C)$ itself, so the learned residual is a
surrogate-to-target correction rather than a noise-to-data map.

\begin{figure}[!htb]
  \centering
  \includegraphics[width=0.95\linewidth]{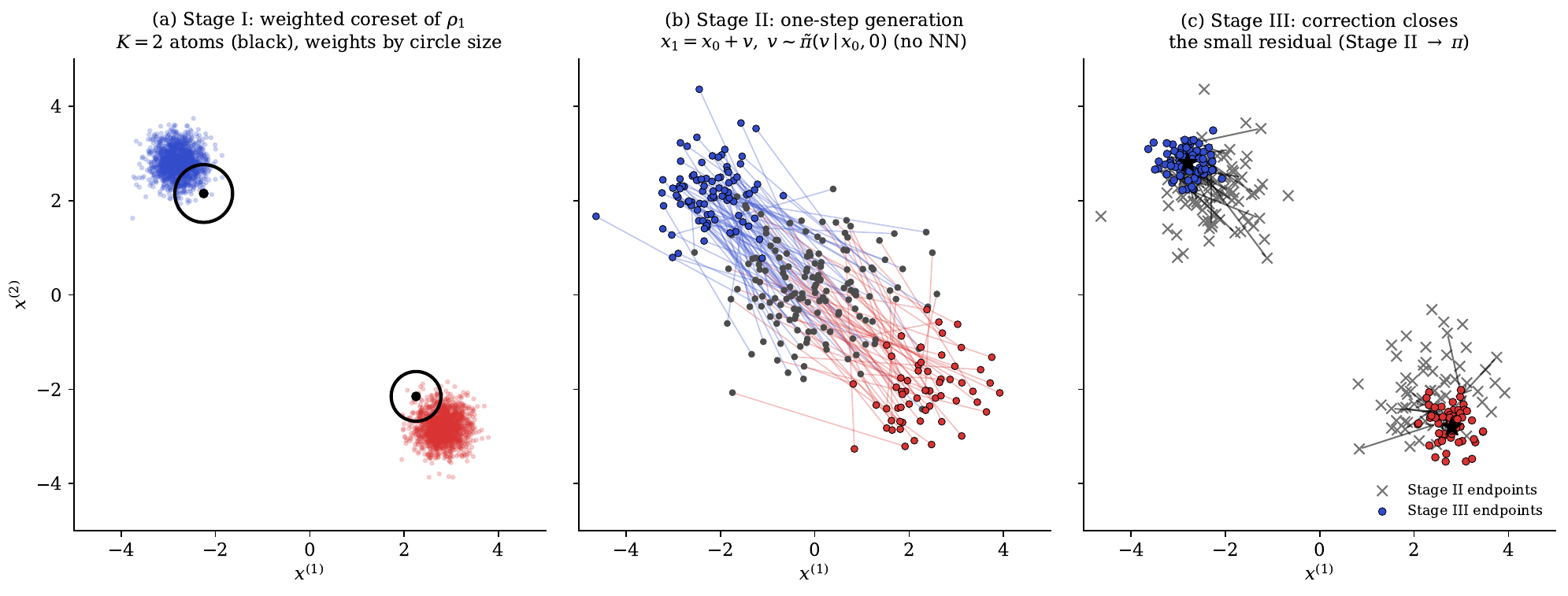}
  \caption{CCVFM pipeline: an entropic-Sinkhorn coreset is lifted to a
GMM surrogate $\core$; the induced law $\pisur(v\mid x_t,t)$ is a closed-form
GMM; a correction flow with source $\pisur$ itself refines the
surrogate-to-target residual.}
  \label{fig:method}
\end{figure}

\subsection{Stage I: Sinkhorn coreset and GMM lift}
\label{sec:stageI}

Given samples $x_1,\ldots,x_n\sim\rho_1$ and budget $K\ll n$, Stage~I
approximates $\hat\rho_{1,n}=n^{-1}\sum_i\delta_{x_i}$ by
$\sum_k w_k\delta_{\mu_k}$. 
The coupling $T\in\R_+^{n\times K}$ records soft assignments from data
to anchors, with marginals $\sum_kT_{ik}=1/n$ and $\sum_iT_{ik}=w_k$;
$T$ is used only inside Stage~I to update the anchors and weights.

Concretely, we compute $(w,\mu,T)$ jointly by minimizing the
{Kullback-Leibler-regularized entropic objective}
\begin{equation}
  \min_{\{\mu_k,w_k\},\,T_{ik}\ge 0}
  \sum_{i,k} T_{ik}\|x_i-\mu_k\|^2
  \;{+\;\lambda\,\mathrm{KL}\!\big(T\,\big\|\,\tfrac{1}{n}\mathbf{1}_n\otimes w\big)},\qquad
  \sum_k T_{ik}=\tfrac{1}{n},\;\;\sum_i T_{ik}=w_k,
\label{eq:coreset}
\end{equation}
{with $\mathrm{KL}(T\|P)=\sum_{i,k}T_{ik}\log(T_{ik}/P_{ik})$ for the
prior measure $P_{ik}=\frac{1}{n}w_k$. Holding $(\mu,w)$ fixed and
optimizing in $T$ subject to the row constraints yields the closed-form
update $T_{ik}\propto w_k\exp(-\|x_i-\mu_k\|^2/\lambda)$
(the GMM E-step used in Algorithm~\ref{alg:stage1}), since the column marginal $\sum_iT_{ik}=w_k$ is generated rather than constrained.}
We alternate this $T$-update with barycentric anchor updates
$\mu_k\leftarrow w_k^{-1}\sum_iT_{ik}x_i$, $w_k\leftarrow\sum_iT_{ik}$,
at $O(nK)$ cost per outer iteration. The atom measure is then lifted
to $\core=\sum_k w_k\mathcal{N}(\mu_k,\Sigma_k)$ with
$\Sigma_k=L_kL_k^\top+\sigma^2 I_d$, where $(L_k,\sigma_k^2)$ is the
closed-form Tipping--Bishop PPCA \citep{tipping1999ppca} fit on the
soft-assignment-weighted residuals and the deployed shared $\sigma^2$
is the weighted mean of the per-component $\sigma_k^2$. Concretely,
with eigenvalues $\lambda_{k,1}\ge\cdots\ge\lambda_{k,d}$ of the
soft-assignment-weighted empirical covariance
$\widehat\Sigma_k^{\mathrm{emp}}=(nw_k)^{-1}\sum_i
T_{ik}(x_i-\mu_k)(x_i-\mu_k)^\top$ and eigenvectors
$u_{k,1},\dots,u_{k,d}$, the PPCA MLE at rank $r$ sets
\(
  \sigma_k^2 = (d-r)^{-1}\sum_{j=r+1}^d\lambda_{k,j}
\)
(the average of the trailing $d-r$ eigenvalues) and
\(
  L_k = U_{k,r}(\Lambda_{k,r}-\sigma_k^2 I_r)^{1/2},
\)
and we aggregate $\sigma^2:=\sum_k w_k\,\sigma_k^2$. The noise
variance is thus \emph{learned} from data, not a hand-set
hyperparameter, and the entire estimator is closed-form: the top-$r$
eigenpair is computed via randomised SVD in $O(dr)$ flops per
component (sequentially across the $K$ components, which is what fits
within GPU memory at $K\!\sim\!10^4$), and the trailing-eigenvalue
average is obtained from
$\operatorname{tr}\widehat\Sigma_k^{\mathrm{emp}}-\sum_{j=1}^r\lambda_{k,j}$
at no extra cost.
Smoothing makes $\tilde\pi$ well-defined; the deployed compression rate
is handled explicitly in Assumption~\ref{ass:stageI-compression}.

\subsection{Stage II: closed-form conditional velocity law}
\label{sec:stageII}

Once $\core$ is available we can read off the induced conditional
velocity law without any further learning. Plugging $\core$ into the identity \eqref{eq:true-conditional} and completing the square
in $v$ yields:
\begin{align}
  \pisur(v\mid x_t,t) &= \sum_{b=1}^K \gamma_b(x_t,t)\,\mathcal{N}\!\big(v;{m_b(x_t,t)},\Lambda_b(t)\big),\label{eq:pisur}\\
  A_b(t) &= t^2 I + (1-t)^2\Sigma_b^{-1},\;
  \Lambda_b(t)=A_b(t)^{-1},\;\notag\\
  b_b(x_t,t) &= t x_t - (1-t)\Sigma_b^{-1}(x_t-\mu_b),\;
  {m_b}(x_t,t)=\Lambda_b(t)\,b_b(x_t,t),\notag
\end{align}
with weights $\gamma_b(x_t,t)\propto w_b\eta_b(x_t,t)$ given in closed
form. At the generation boundary $t=0$, \eqref{eq:pisur} simplifies
drastically:
\begin{equation}
  \pisur(v\mid x_0,0)=\core(x_0+v)=\textstyle\sum_b w_b\,\mathcal{N}(v;\mu_b-x_0,\Sigma_b).
\label{eq:pisur-t0}
\end{equation}
Sampling from~\eqref{eq:pisur-t0} at $t=0$ requires only a categorical
draw $b^*\sim\mathrm{Cat}(w_1,\ldots,w_K)$ followed by a Gaussian
sample, with no learned function evaluation. The low-rank factor lets each
draw run in $O(dr)$ via
$v\gets\mu_{b^*}-x_0+L_{b^*}z+\sigma\eta$ with
$z\in\R^r,\eta\in\R^d$ standard normal.

\subsection{Stage III: correction flow with Sinkhorn-anchored coupling}
\label{sec:stageIII}

Stage~II samples from $\pisur$ track $\pi$ only up to the surrogate
gap $W_2(\rho_1,\core)$, which is still nonzero on realistic image
data. Stage~III closes the gap with a second flow-matching stage
\emph{in velocity space}: we train a correction network whose source
is the closed-form surrogate law $\pisur(\cdot\mid C)$ itself, so that
the training source marginal coincides with the source law used at
inference, and we run the two flows as a nested sampler at inference.

\label{sec:correction-flow}
\paragraph{Training objective.}
Given $(x_0,x_1)$ with $X_t=(1-t)x_0+tx_1$ and $V_1=x_1-x_0$, each
training step samples a source velocity
$V_0\sim\pisur(\cdot\mid X_t,t)$ via the Sinkhorn-anchored coupling
defined below, picks $\tau\sim\mathcal{U}[0,1]$ and forms
$V_\tau=(1-\tau)V_0+\tau V_1$, and regresses a correction net
$f_\theta(v,\tau,x_t,t)$ toward $V_1-V_0$:
\begin{equation}
  \mathcal{L}(\theta)=\E\big\|f_\theta(V_\tau,\tau,X_t,t)-(V_1-V_0)\big\|^2.
\label{eq:fm-loss}
\end{equation}
Equation~\eqref{eq:fm-loss} has the same form as HRF2's inner objective,
but the source $V_0$ is drawn from the informative $\pisur(\cdot\mid C)$
rather than $\mathcal{N}(0,I)$, so the learnable residual is a
surrogate-to-target correction rather than a noise-to-data transport.

\paragraph{Sinkhorn-anchored coupling.}
\label{sec:coupling}
Stage~I returned not only the GMM
$\core=\sum_b w_b\mathcal{N}(\mu_b,\Sigma_b)$ but also the
soft-assignment matrix $T^\star\in\R_+^{n\times K}$ in
\eqref{eq:coreset} between the empirical data $\{x_i\}_{i=1}^n$ and
the atoms $\{\mu_b\}_{b=1}^K$. At the EMS fixed point,
\begin{equation}
\label{eq:tstar}
  T^\star_{ib}
  \;=\;
  \frac{1}{n}\,
  \frac{w_b\,\exp\!\bigl(-\|x_i-\mu_b\|^2/\lambda\bigr)}{Z_i},
  \qquad
  Z_i\;:=\;\sum_{c=1}^K w_c\,\exp\!\bigl(-\|x_i-\mu_c\|^2/\lambda\bigr),
\end{equation}
with column marginal $\sum_{i=1}^n T^\star_{ib}=w_b$. The conditional
distribution over components given a data index $i$ is the row of
$T^\star$ rescaled by $n$,
\begin{equation}
\label{eq:tilde-T}
  T_b(x_i)\;:=\;n\,T^\star_{ib}
  \;=\;\frac{w_b\,\exp(-\|x_i-\mu_b\|^2/\lambda)}{Z_i},
  \qquad \sum_{b=1}^K T_b(x_i)=1.
\end{equation}
\emph{Sinkhorn-anchored coupling} uses this row directly as the
training-time coupling between data and components. At outer time
$t$:
\begin{equation}
\label{eq:sink-sampler}
  I\sim\mathrm{Uniform}\{1,\ldots,n\},\qquad
  X_1\;:=\;X_I,\qquad
  B\mid I\sim\mathrm{Cat}\!\bigl(T_1(X_I),\ldots,T_K(X_I)\bigr),
\end{equation}
followed by $V_0\mid(B,C)\sim\mathcal{N}\!\bigl(m_B(C),\Lambda_B(t)\bigr)$
as in \eqref{eq:pisur}.

The Sinkhorn-anchored sampler has three properties that follow from
the Stage-I optimality of $T^\star$:

\begin{enumerate}[leftmargin=*,nosep,itemsep=2pt]
\item \emph{Exact marginal preservation.} By the column constraint
  $\sum_i T^\star_{ib}=w_b$, the marginal of $B$ under
  \eqref{eq:sink-sampler} is exactly $w_b=\gamma_b(C)$ at every step
  (in expectation over the uniform index draw), so the marginal of
  $V_0$ is exactly $\pisur(\cdot\mid C)$. No importance-sampling
  reweight, no asymptotic claim, no Monte-Carlo estimator of
  cross-batch component frequencies appears
  (Theorem~\ref{thm:posterior-coupling}(i)).
\item \emph{Variance reduction to the within-mode scale.}
  The row $T_b(X_i)$ concentrates on atoms close to $X_i$ in
  squared-Euclidean distance, scaled by the Stage-I bandwidth
  $\lambda$. The within-pair distance $\|V_1-V_0\|$ then sits at the
  within-mode scale $R_\star^2(C)+T_{\max}(C)$, with no cross-mode
  diameter contribution
  (Theorem~\ref{thm:posterior-coupling}(ii)).
\item \emph{Hyperparameter-free implementation.}
  The row $T_b(X_i)$ is recomputed in $O(Kd)$ flops per training
  sample from $(X_i,\{w_b,\mu_b\},\lambda)$ via
  \eqref{eq:tilde-T}; no EMA tracking of component frequencies, no
  IS-weight floor, no batch-statistic state to maintain.
\end{enumerate}

The marginal-preservation property (i) is what allows the
nested-sampler analysis of Section~\ref{sec:theory} to go through
without an asymptotic-coupling argument: the training marginal of
$V_0$ matches the inference marginal \emph{exactly}, not merely in
expectation under a divergence-vanishing limit. This is the key
technical handle that distinguishes Stage~III's surrogate-anchored
coupling from independent flow-matching couplings, whose conditional
residual is lower-bounded by the cross-mode diameter
(Theorem~\ref{thm:posterior-coupling}(iii)). Statements and proofs
are in Theorem~\ref{thm:posterior-coupling} and Appendix~\S\ref{sec:a7}.
\paragraph{Inference.}
At inference time $V_1$ is unavailable, so we sample $V_0$ directly
from the unconditional component prior of the surrogate~\eqref{eq:pisur}:
\[
  B\mid C\sim\mathrm{Cat}\bigl(\gamma_1(C),\ldots,\gamma_K(C)\bigr),
  \qquad
  V_0\mid (B=b,C)\sim\mathcal{N}\bigl(m_b(C),\Lambda_b(t)\bigr),
\]
which has marginal exactly $\pisur(\cdot\mid C)$. At the generation
boundary $t=0$ this collapses to
$B\sim\mathrm{Cat}(w_1,\ldots,w_K)$,
$V_0\mid (B=b,X_0=x_0)\sim\mathcal{N}(\mu_b-x_0,\Sigma_b)$, so the
deployed source satisfies
$Q_{\mathrm{CCVFM}}(v\mid x_0,0)=\core(x_0+v)$ exactly. The training
sampler \eqref{eq:sink-sampler} and this inference sampler have the
same marginal of $V_0$ by Theorem~\ref{thm:posterior-coupling}(i):
training and inference sources coincide without any tilt or
importance-sampling correction.

\subsection{Generation procedure}
\label{sec:generation}

Our theory (\S \ref{sec:theory}) analyzes the $J=1$ case at
$t=0$, where $\pi(v\mid x_0,0)=\rho_1(x_0+v)$ collapses the
conditional velocity law to a translated copy of the target. We
therefore recommend $J=1$ as the default, with the inner correction
budget $L$ the only knob; this estimator enjoys the closed-form
guarantees of Theorem~\ref{thm:reduction}.

\paragraph{One-step recommended sampler ($J=1$).}
Given the trained correction field $f_\theta$ and a budget $L\ge 1$:
(1) draw $X_0\sim\mathcal{N}(0,I_d)$; (2) sample
$b^*\sim\mathrm{Cat}(w_1,\ldots,w_K)$ and
$V\sim\mathcal{N}(\mu_{b^*}-X_0,\Sigma_{b^*})$, which realizes
$V\sim\pisur(\cdot\mid X_0,0)$ via~\eqref{eq:pisur-t0};
(3) for $\ell=0,\ldots,L-1$:
$V\gets V+(1/L)\,f_\theta(V,\ell/L,X_0,0)$;
(4) return $\hat X_1\gets X_0 + V$.
{The Stage~II step (2) is a categorical draw plus a Gaussian
sample, with no learned-network evaluation; we count it as a single
zero-parameter NFE for benchmarking equivalence with diffusion baselines
that include the initial denoising step. Hence total
$\mathrm{NFE}=L+1$, used uniformly across all CCVFM FID rows in
Section~\ref{sec:experiments} and matching Tables~\ref{tab:mnist}
and~\ref{tab:hd-images}.}

\paragraph{General $J>1$ (optional).}
One may instead run a nested sampler on an outer grid
$0=t_0<\cdots<t_J=1$: for $j=0,\ldots,J-1$, sample
$V\sim\pisur(\cdot\mid Z,t_j)$ via~\eqref{eq:pisur}, iterate
$V\gets V+(1/L)\,f_\theta(V,\ell/L,Z,t_j)$ over $\ell$, and step
$Z\gets Z+(t_{j+1}-t_j)\,V$; total $\mathrm{NFE}=JL$. This uses the
general-$t$ law~\eqref{eq:pisur} and is \emph{not} covered by the
theory; empirically $J=1$ matches or beats $J\ge 2$ on every dataset
we tested (Table~\ref{tab:hd-images}).
Pseudocode for both variants is in Appendix~\S B.

\section{Theory}
\label{sec:theory}


{Our theoretical analysis has two parts. \S\ref{sec:theory-surrogate}
quantifies the conditional transport task that the correction net
must solve at $t=0$, contrasting the deployed Stage~I source with
the HRF2 noise source. \S\ref{sec:theory-coupling} verifies the exact
source marginal of the direct surrogate-source sampler used in
Stage~III and compares its conditional training-target scale with that
of an independent Gaussian source. We analyze the $J=1$ sampler of
\S\ref{sec:generation}, where the velocity identity collapses and
computations become closed-form.}

Define the \emph{conditional
transport cost} of a correction stage with source $Q(\cdot\mid x,t)$
at $t=0$ by
\begin{equation}
  D(Q) \;:=\; \Big(\E_{X_0\sim\rho_0}\!\big[\,W_2^2\big(Q(\cdot\mid X_0,0),\pi(\cdot\mid X_0,0)\big)\big]\Big)^{1/2}.
\label{eq:defD}
\end{equation}
All proofs are deferred to Appendix~\S A.

\begin{assumption}[Stage-I compression for the deployed surrogate]
\label{ass:stageI-compression}
Let $\mu_{\mathrm{in}}$ denote the probability measure compressed by
Stage~I, and let $
  \core=\sum_{k=1}^K w_k\,\mathcal{N}({\mu_k},\Sigma_k) $
be the $K$-component surrogate returned by the deployed
entropic-Sinkhorn/GMM procedure. We assume that there exists a constant
$C_Q<\infty$, independent of $K$, such that
\begin{equation}
\label{eq:stageI-compression}
  W_2(\mu_{\mathrm{in}},\core)\le C_QK^{-1/d}.
\end{equation}
Here $W_2$ is computed in the metric relevant to the statement under
consideration. In the population surrogate-gap statement,
$\mu_{\mathrm{in}}=\rho_1$. In the smoothed-empirical statement,
$\mu_{\mathrm{in}}=\hat\rho_h\,\dd x$, and the bound is understood in
probability over the training sample. This is an assumption on the
implemented Stage-I output. Classical quantization theory gives the
corresponding $K^{-1/d}$ benchmark for globally optimal $K$-atomic
quantizers on bounded $d$-dimensional domains, but it does not by
itself prove \eqref{eq:stageI-compression} for finite-iteration
entropic Sinkhorn followed by a GMM lift.
\end{assumption}

\subsection{Bounding the surrogate gap}
\label{sec:theory-surrogate}

At $t=0$, \eqref{eq:true-conditional} simplifies to
$\pi(v\mid x_0,0)=\rho_1(x_0+v)$: the conditional velocity law is
the target shifted by $-x_0$. For any surrogate of the form
$Q(v\mid x_0,0)=q(x_0+v)$, the Wasserstein distance to $\pi(\cdot\mid x_0,0)$
equals $W_2(q,\rho_1)$ and is \emph{independent of $x_0$}, so
CCVFM and HRF2 can be compared by a single scalar per source.

\begin{theorem}[Transport-task reduction]
\label{thm:reduction}
Let $Q_{\mathrm{CCVFM}}(v\mid x_0,0)=\core(x_0+v)$ and
$Q_{\mathrm{HRF2}}(v\mid x_0,0)=\mathcal{N}(v;0,I_d)$. Then
\begin{enumerate}[label=\emph{(\roman*)},leftmargin=*,itemsep=0pt]
  \item $D(Q_{\mathrm{CCVFM}}) = W_2(\rho_1,\core)$. Under Assumption~\ref{ass:stageI-compression} with $\mu_{\mathrm{in}}=\rho_1$, $D(Q_{\mathrm{CCVFM}})\le C_QK^{-1/d}$.
  \item $D(Q_{\mathrm{HRF2}}) \;\ge\; \sqrt{d}\cdot{\big(\sqrt{\sigma_1^2+1}-1\big)}$,
        where $\sigma_1^2=\E\|X_1\|^2/d$ is the per-coordinate second moment
        of $\rho_1$. This lower bound is strictly positive whenever
        $\sigma_1\neq 0$ and \emph{constant in $K$ and $n$}.
\end{enumerate}
Consequently, whenever the coreset error tends to zero with $K$,
$D(Q_{\mathrm{CCVFM}})/D(Q_{\mathrm{HRF2}})\to 0$ as
$K\to\infty$ for every fixed nondegenerate target: CCVFM's correction net
is asked to solve a transport task that is arbitrarily smaller than
HRF2's.
\end{theorem}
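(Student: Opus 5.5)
For part (i), I would use translation invariance of $W_2$. Fix $x_0$ and let $\tau_{x_0}(y)=y-x_0$. Then $Q_{\mathrm{CCVFM}}(\cdot\mid x_0,0)=\core(x_0+\cdot)$ is the pushforward $(\tau_{x_0})_\#\core$. By the $t=0$ specialization of \eqref{eq:true-conditional}, $\pi(\cdot\mid x_0,0)=(\tau_{x_0})_\#\rho_1$. Any coupling $\gamma$ of $(\core,\rho_1)$ pushes forward under $(\tau_{x_0},\tau_{x_0})$ to a coupling of the two translated laws with the same cost, because $\|(y-x_0)-(y'-x_0)\|=\|y-y'\|$. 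The inverse translation gives the reverse correspondence. Hence $W_2\big(Q_{\mathrm{CCVFM}}(\cdot\mid x_0,0),\pi(\cdot\mid x_0,0)\big)=W_2(\core,\rho_1)$ for every $x_0$. Since this does not depend on $x_0$, the expectation in \eqref{eq:defD} is trivial and $D(Q_{\mathrm{CCVFM}})=W_2(\rho_1,\core)$. The bound $D(Q_{\mathrm{CCVFM}})\le C_QK^{-1/d}$ then follows directly from Assumption~\ref{ass:stageI-compression} with $\mu_{\mathrm{in}}=\rho_1$.

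For part (ii), the HRF2 source $\mathcal N(0,I_d)$ does not move with $x_0$, so the translation argument fails and I would go through second moments instead. For any law $\mu$ with finite second moment, $W_2(\mu,\delta_0)=m(\mu):=(\E_\mu\|Y\|^2)^{1/2}$. The triangle inequality for $W_2$ then gives
\[
W_2(\mu,\nu)\ge |m(\mu)-m(\nu)|.
\]
I would apply this with $\mu=\mathcal N(0,I_d)$, for which $m=\sqrt d$, and with $\nu=\pi(\cdot\mid x_0,0)$, the law of $X_1-x_0$. This yields
\[
W_2\ge |a(x_0)-\sqrt d|,\qquad a(x_0)^2=\E\|X_1-x_0\|^2 .
\]
Averaging over $x_0\sim\rho_0=\mathcal N(0,I_d)$, with $X_0$ independent of $X_1$, gives $\E_{x_0}a(x_0)^2=\E\|X_1\|^2+d=d(\sigma_1^2+1)$, since the cross term vanishes because $\E X_0=0$.

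The step that needs care is passing from the pointwise bound to the averaged one, because Jensen on $a$ itself goes the wrong way. I would expand the square instead:
\[
\E(a-\sqrt d)^2=\E a^2-2\sqrt d\,\E a+d\ge \E a^2-2\sqrt d\,(\E a^2)^{1/2}+d=\big((\E a^2)^{1/2}-\sqrt d\big)^2 ,
\]
using $\E a\le(\E a^2)^{1/2}$, which holds because the term carries a negative sign. Taking square roots gives
\[
D(Q_{\mathrm{HRF2}})\ge\sqrt{d(\sigma_1^2+1)}-\sqrt d=\sqrt d\,\big(\sqrt{\sigma_1^2+1}-1\big).
\]
This bound is strictly positive when $\sigma_1\ne0$. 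It involves only $d$ and $\rho_1$, so it is independent of $K$ and $n$. Throughout I would assume $\E\|X_1\|^2<\infty$, which is implicit in the definition of $\sigma_1$.

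For the consequence, part (i) gives a numerator bounded by a quantity tending to $0$ whenever the coreset error vanishes as $K\to\infty$. Part (ii) gives a denominator bounded below by a fixed positive constant. Hence the ratio $D(Q_{\mathrm{CCVFM}})/D(Q_{\mathrm{HRF2}})\to0$. I expect the only delicate point to be the averaging step in (ii) handled above; everything else is a direct computation.
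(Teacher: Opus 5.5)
Your proposal is correct and follows the paper's proof essentially step for step: translation invariance of $W_2$ for (i), and for (ii) the pointwise bound $W_2(\mu,\nu)\ge|m(\mu)-m(\nu)|$, followed by expanding the square and applying $\E a\le(\E a^2)^{1/2}$ to the negatively signed cross term. The only cosmetic difference is how you get the moment bound: you use the $W_2$ triangle inequality through $\delta_0$, while the paper applies Minkowski's inequality directly to an arbitrary coupling, and the two are equivalent.
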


{Theorem~\ref{thm:reduction} certifies that, asymptotically,
CCVFM's correction net is asked to solve a transport task that vanishes
with $K$, while HRF2's correction-source cost remains bounded below by
a positive constant in $K$ and $n$.}

\begin{remark}[Inference sampler vs.\ training coupling]
\label{rem:train-vs-deploy}
Theorem~\ref{thm:reduction} is a statement about the \emph{inference
sampler} $Q_{\mathrm{CCVFM}}(v\mid x_0,0)=\core(x_0+v)$, which at $t=0$
is the closed-form unconditional draw from $\pisur$ described
in~\S\ref{sec:generation} and which has marginal $\pisur(\cdot\mid X_0,0)$
exactly. The result does not depend on the training-time coupling used
to fit $f_\theta$: any training coupling whose induced source marginal
agrees with $\pisur(\cdot\mid C)$ leaves the conditional transport gap
in~\eqref{eq:defD} unchanged. The Sinkhorn-anchored coupling of
\S\ref{sec:coupling} (used in our experiments) is exactly such a
marginal-preserving training coupling, with the marginal preservation
holding \emph{exactly} via the Stage-I column constraint
(Theorem~\ref{thm:posterior-coupling}(i)); it improves the
\emph{conditioning} of the regression problem
(Theorem~\ref{thm:posterior-coupling}(ii)) without altering the
transport-task reduction certified here.
\end{remark}

\begin{corollary}[NFE sufficient scaling]
\label{cor:nfe-gap}
Let $D:=D(Q)$. Under the space-time regularity assumptions in
Appendix~\S A.4 and the residual flattening conditions
$M=O(D)$, $L_v=O(D)$, and $L_\tau=O(D^2)$ for the correction ODE, a
sufficient number of Euler function evaluations to reduce the
$L^2$ discretization error below $\varepsilon_{\mathrm{disc}}$ is
\[
  \mathrm{NFE}_{Q}(\varepsilon_{\mathrm{disc}})
  =O\!\left(1+\frac{D(Q)^2}{\varepsilon_{\mathrm{disc}}}\right).
\]
Consequently, for CCVFM, with
$\Delta_K:=D(Q_{\mathrm{CCVFM}})=W_2(\rho_1,\core)$,
\[
  \mathrm{NFE}_{\mathrm{CCVFM}}(\varepsilon_{\mathrm{disc}})
  =O\!\left(1+\frac{\Delta_K^2}{\varepsilon_{\mathrm{disc}}}\right).
\]
If, in addition, Assumption~\ref{ass:stageI-compression} gives
$\Delta_K=O(K^{-1/d})$, then
  $\mathrm{NFE}_{\mathrm{CCVFM}}(\varepsilon_{\mathrm{disc}})
  =O\!\left(1+\frac{K^{-2/d}}{\varepsilon_{\mathrm{disc}}}\right)$.
For HRF2, if $d$ and $\rho_1$ are fixed and the corresponding
regularity constants are $O(1)$, then
  $\mathrm{NFE}_{\mathrm{HRF2}}(\varepsilon_{\mathrm{disc}})
  =O\!\left(1+\frac{1}{\varepsilon_{\mathrm{disc}}}\right)$.
\end{corollary}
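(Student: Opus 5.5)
Corollary~\ref{cor:nfe-gap} follows from a standard Euler global-error estimate for the correction ODE $\dot V_\tau=u(V_\tau,\tau)$, $\tau\in[0,1]$, on a uniform grid of step $h=1/L$. We then substitute the flattening conditions and invoke Theorem~\ref{thm:reduction}.

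\textbf{Step 1: generic Euler bound.} We assume (Appendix~\S A.4) that along trajectories the field is bounded by $M$, Lipschitz in $v$ with constant $L_v$, and has $\tau$-derivative bounded by $L_\tau$. The local truncation error on one step is $\tfrac{h^2}{2}\sup\|\tfrac{d}{d\tau}u(V_\tau,\tau)\|\le \tfrac{h^2}{2}(L_\tau+L_vM)$. A discrete Gr\"onwall argument then gives the global error
\[
\Big(\E\|V^{\mathrm{Euler}}_1-V_1\|^2\Big)^{1/2}\le \frac{h}{2}(L_\tau+L_vM)\,\frac{e^{L_v}-1}{L_v}.
\]
Under $L_v=O(D)$ we have $L_v=O(1)$ whenever $D$ is bounded, so the factor $(e^{L_v}-1)/L_v$ is $O(1)$. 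With $M=O(D)$ and $L_\tau=O(D^2)$, the bracket $L_\tau+L_vM$ is $O(D^2)$. The error is therefore $O(D^2/L)$, and it suffices to take $L=O(1+D^2/\varepsilon_{\mathrm{disc}})$. Adding the single zero-parameter NFE from Stage~II leaves the order unchanged. One could interpret $\varepsilon_{\mathrm{disc}}$ as a squared error instead, but that would give a different exponent. To obtain exactly the stated $D^2/\varepsilon$ scaling, I will keep $\varepsilon_{\mathrm{disc}}$ as the $L^2$ norm error, as written above.

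\textbf{Step 2: specialization.} For CCVFM, Theorem~\ref{thm:reduction}(i) gives $D(Q_{\mathrm{CCVFM}})=W_2(\rho_1,\core)=\Delta_K$. Substituting into Step~1 yields the second display. Under Assumption~\ref{ass:stageI-compression}, $\Delta_K\le C_QK^{-1/d}$, and squaring gives the $K^{-2/d}$ bound. For HRF2, $D(Q_{\mathrm{HRF2}})$ does not depend on $K$ or $n$. It is finite because it is bounded above by the second moments of $\rho_1$ and $\mathcal N(0,I_d)$, via the trivial coupling. For fixed $d$ and $\rho_1$ it is therefore $O(1)$, and with regularity constants also $O(1)$ we obtain $O(1+1/\varepsilon_{\mathrm{disc}})$.

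\textbf{Main obstacle.} The delicate step is the Gr\"onwall constant. If $L_v$ were allowed to grow, the factor $e^{L_v}$ would spoil the polynomial dependence on $D$. I would handle this by restricting to the regime where $D$ is bounded: $\Delta_K\le \Delta_1$ for CCVFM and a fixed constant for HRF2. In that regime $e^{L_v}$ is absorbed into the implicit constant, and I would state this absorption explicitly in the proof.
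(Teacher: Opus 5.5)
Your proposal is correct and follows essentially the same route as the paper. Both arguments use the one-step truncation bound $\tfrac12(L_vM+L_\tau)h^2$ and a discrete Gr\"onwall estimate to get $\frac{e^{L_v}-1}{2L_v}\cdot\frac{L_vM+L_\tau}{n}$, absorb the prefactor by bounded $L_v$, then use $D(Q_{\mathrm{CCVFM}})=\Delta_K$ from Theorem~\ref{thm:reduction}(i) and the $K$-independent trivial-coupling bound $D(Q_{\mathrm{HRF2}})^2\le d(\sigma_1^2+2)$ for HRF2.

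One detail needs adjusting. Boundedness of $\Delta_K$ should come from Assumption~\ref{ass:stageI-compression} (which gives $\Delta_K\le C_Q$), or simply from assuming $L_v$ bounded as the paper does. It should not rest on $\Delta_K\le\Delta_1$, because nothing in the paper shows that the deployed Sinkhorn/GMM output is monotone in $K$.
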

Corollary~\ref{cor:nfe-gap} explains CCVFM's dominant empirical
behavior: matched architectures but much fewer correction steps to
reach the same FID when the learned residual field is flatter for the
data-informed source. Without the flattening condition, the generic
sufficient bound is $O(1+(L_vD+L_\tau)/\varepsilon_{\mathrm{disc}})$ with the
regularity constants treated as problem-dependent quantities.

\subsection{Sinkhorn-anchored coupling: marginal preservation and residual scale}
\label{sec:theory-coupling}

Stage~III trains the correction net $f_\theta$ by regressing on the
residual $V_1-V_0$ at random interpolation time~$\tau$
(Eq.~\eqref{eq:fm-loss}). Two quantities of the training sampler
control the difficulty of this regression:
(a) the \emph{marginal} of $V_0$, which must coincide with the source
law used at inference for the trained net to generalise correctly; and
(b) the \emph{conditional second moment}
$\E[\|V_1-V_0\|^2\mid C]$ with $C=(X_t,t)$, which controls the scale of
the regression target. The next theorem certifies both for the
Sinkhorn-anchored coupling of \S\ref{sec:coupling} and contrasts the
conditional second moment with that of the independent Gaussian source
$V_0\sim\mathcal{N}(0,I_d)$.

\begin{theorem}[Sinkhorn-anchored coupling: marginal preservation and conditional second moment]
\label{thm:posterior-coupling}
Fix the outer state $C=(X_t,t)$ and the empirical data
$\{x_i\}_{i=1}^n$ with empirical measure
$\hat\rho_{1,n}=n^{-1}\sum_i\delta_{x_i}$. Let the Stage-I output be the
GMM $\core=\sum_b w_b\mathcal{N}(\mu_b,\Sigma_b)$ together with the
EMS-Sinkhorn coupling matrix $T^\star$ in~\eqref{eq:tstar}, and let the
Stage-II surrogate be
$\pisur(v\mid C)=\sum_{b=1}^K\gamma_b(C)\mathcal{N}(v;m_b(C),\Lambda_b(t))$
as in~\eqref{eq:pisur}. Consider the \textbf{Sinkhorn-anchored sampler}
\eqref{eq:sink-sampler}:
\begin{equation}
\label{eq:sink-sample-main}
  I\sim\mathrm{Uniform}\{1,\ldots,n\},\quad
  X_1:=X_I,\quad
  B\mid I\sim\mathrm{Cat}\!\bigl(T_1(X_I),\ldots,T_K(X_I)\bigr),\quad
  V_0\mid(B,C)\sim\mathcal{N}\!\bigl(m_B(C),\Lambda_B(t)\bigr),
\end{equation}
with $T_b(x)=w_b\exp(-\|x-\mu_b\|^2/\lambda)/\sum_c w_c\exp(-\|x-\mu_c\|^2/\lambda)$
the row of the EMS-Sinkhorn responsibility~\eqref{eq:tilde-T}. Define
the within-mode quantities
\[
  T_{\max}(C):=\max_{1\le b\le K}\operatorname{tr}\Lambda_b(t),
  \qquad
  R_\star^2(C):=\E_{V_1\sim\pi(\cdot\mid C)}
    \Bigl[\min_{1\le b\le K}\|V_1-m_b(C)\|^2\Bigr].
\]

\textbf{(i) Exact marginal preservation.}
Under \eqref{eq:sink-sample-main}, marginalising over the uniform
data index $I$, the component label $B$ has marginal
\begin{equation}
\label{eq:sink-marginal-main}
  P(B=b)
  \;=\;\sum_{i=1}^n\frac{1}{n}\,T_b(x_i)
  \;=\;\sum_{i=1}^n T^\star_{ib}
  \;=\;w_b
  \;=\;\gamma_b(0),
\end{equation}
where the second equality uses the EMS column constraint
$\sum_iT^\star_{ib}=w_b$ at the Stage-I fixed point. Hence the
marginal of $V_0$ at $t=0$ is exactly $\pisur(\cdot\mid x_0,0)$. For
$t>0$, the same identity holds with $\gamma_b(C)$ in place of $w_b$
once the row $T_b(x_i)$ is composed with the Stage-II Bayes weights of
\eqref{eq:pisur}. The marginal-preservation holds \emph{at every
training step in expectation over the index draw} with no importance
sampling, no asymptotic limit, and no Monte-Carlo estimator of
component frequencies.

\textbf{(ii) Conditional second moment (uniform in $x_0$).}
The training residual under \eqref{eq:sink-sample-main} at $t=0$
satisfies
\begin{equation}
\label{eq:sink-second-moment-main}
  \E\bigl[\|V_1-V_0\|^2\,\big|\,X_0=x_0\bigr]_{\mathrm{sink}}
  \;=\;
  \underbrace{\frac{1}{n}\sum_{i=1}^n\sum_{b=1}^K T_b(x_i)\,\|x_i-\mu_b\|^2}_{\text{Sinkhorn--EMS transport cost}}
  \;+\;
  \underbrace{\sum_{b=1}^K w_b\,\operatorname{tr}\Sigma_b}_{\text{within-mode covariance budget}}
  \;\le\;C_{\mathrm{Sink}}K^{-2/d},
\end{equation}
\emph{uniformly in $x_0$}, where $C_{\mathrm{Sink}}$ depends only on
$(d,r,C_*,C_L)$ and on the Stage-I quantisation constant. The
within-pair distance is therefore controlled by the within-mode
scale $R_\star^2(C)+T_{\max}(C)$ with no cross-mode diameter
contribution.

\textbf{(iii) Independent Gaussian source (lower bound for comparison).}
If $V_0\sim\mathcal{N}(0,I_d)$ independently of $V_1$ given $C$,
\begin{equation}
\label{eq:indep-gauss-lower-main}
  \E\bigl[\|V_1-V_0\|^2\,\big|\,C\bigr]_{\mathrm{Gauss}}
  \;=\;
  \operatorname{tr}\Sigma_\pi(C)+\|\mu_\pi(C)\|^2+d
  \;\ge\;
  c_\gamma(C)D_\pi^2(C)+d,
\end{equation}
where $D_\pi(C)$ is the cross-mode diameter of any latent mixture
representation of $\pi(\cdot\mid C)$ and $c_\gamma(C)$ is the
corresponding pair-weight constant.
\end{theorem}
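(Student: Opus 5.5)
The plan is to prove the three parts in order, since (ii) reuses the marginal computed in (i). Everything at $t=0$ reduces to bookkeeping on the sampler \eqref{eq:sink-sample-main}, because there $\Lambda_b(0)=\Sigma_b$ and $m_b(x_0,0)=\mu_b-x_0$ by \eqref{eq:pisur-t0}.

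For (i), I would marginalise the two-level draw. Since $P(I=i)=1/n$ and $P(B=b\mid I=i)=T_b(x_i)=nT^\star_{ib}$ by \eqref{eq:tilde-T}, the law of total probability gives $P(B=b)=\sum_i T^\star_{ib}$. The EMS fixed-point column identity $\sum_i T^\star_{ib}=w_b$ in \eqref{eq:tstar} then gives $w_b$. This identity is the barycentric update $w_b\leftarrow\sum_iT_{ik}$ evaluated at the fixed point. Because $V_0\mid B$ is $\mathcal N(\mu_B-x_0,\Sigma_B)$, mixing over $B$ yields exactly $\core(x_0+\cdot)=\pisur(\cdot\mid x_0,0)$. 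For $t>0$ I would only record the stated composition with the Bayes weights $\gamma_b(C)$, without further argument.

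For (ii), write $V_1=x_I-x_0$ and $V_0=\mu_B-x_0+\xi$, where $\xi\sim\mathcal N(0,\Sigma_B)$ is independent of $I$ given $B$. The $x_0$ cancels, so $V_1-V_0=(x_I-\mu_B)-\xi$. This already shows uniformity in $x_0$. Conditioning on $(I,B)$ kills the cross term because $\xi$ has mean zero. The first remaining term is $\E\|x_I-\mu_B\|^2=\frac1n\sum_{i,b}T_b(x_i)\|x_i-\mu_b\|^2$. The second is $\E\operatorname{tr}\Sigma_B=\sum_b w_b\operatorname{tr}\Sigma_b$, using (i) for the law of $B$. This gives the identity in \eqref{eq:sink-second-moment-main}.

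For the upper bound I would control both terms by the Stage-I transport cost $\mathcal C_K:=\frac1n\sum_{i,b}T_b(x_i)\|x_i-\mu_b\|^2$. The PPCA lift preserves the trace of the weighted empirical covariance: the top-$r$ eigenvalues of $L_bL_b^\top+\sigma_b^2I$ are $\lambda_{b,j}$, and the trailing block contributes $(d-r)\sigma_b^2=\sum_{j>r}\lambda_{b,j}$. Because $\mu_b$ is the weighted barycentre, $\operatorname{tr}\widehat\Sigma^{\mathrm{emp}}_b$ is the weighted mean squared residual. Hence $\sum_b w_b\operatorname{tr}\Sigma_b$ is a constant multiple of $\mathcal C_K$, up to the shared-$\sigma^2$ aggregation. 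That aggregation changes it by at most a factor depending on $(d,r)$, which is where $C_*$ and $C_L$ enter. Replacing $\sigma_b^2$ by $\sigma^2$ keeps the weighted sum of traces exact.

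The main obstacle is bounding $\mathcal C_K\lesssim K^{-2/d}$. Assumption~\ref{ass:stageI-compression} controls $W_2(\mu_{\mathrm{in}},\core)$, which is a lower bound on the cost of any coupling, whereas here I need an upper bound on the cost of the specific entropic coupling $T^\star$. My plan is to use the optimality of $(T^\star,\mu,w)$ for \eqref{eq:coreset}. I would compare its objective with that of a hard nearest-atom quantizer coupling achieving cost $O(K^{-2/d})$, as in the Graf--Luschgy benchmark. The KL penalty is nonnegative and for such a comparison is at most of order $\lambda\log K$. So I would need a bandwidth regime $\lambda\log K=O(K^{-2/d})$, which I would state as part of the Stage-I constants absorbed into $C_{\mathrm{Sink}}$. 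The claim about $R_\star^2+T_{\max}$ then follows by comparing $\|x_I-\mu_B\|$ with the nearest-mean distance.

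For (iii), independence and $\E V_0=0$ give
\[
\E\|V_1-V_0\|^2=\E\|V_1\|^2+d=\operatorname{tr}\Sigma_\pi+\|\mu_\pi\|^2+d .
\]
For the lower bound I would drop $\|\mu_\pi\|^2$ and apply the law of total variance to a latent mixture representation $\pi=\sum_a\alpha_a\pi_a$ with means $\nu_a$. This gives $\operatorname{tr}\Sigma_\pi\ge\sum_a\alpha_a\|\nu_a-\bar\nu\|^2=\tfrac12\sum_{a,a'}\alpha_a\alpha_{a'}\|\nu_a-\nu_{a'}\|^2$. Picking the pair realising $D_\pi$ then gives $c_\gamma D_\pi^2$ with $c_\gamma=\alpha_a\alpha_{a'}$ for that pair.
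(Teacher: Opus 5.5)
Your arguments for (i), for the identity in (ii), and for (iii) coincide with the paper's: total probability plus the column constraint $\sum_iT^\star_{ib}=w_b$; the cancellation $V_1-V_0=(x_I-\mu_B)-\xi$; and expansion of the square followed by the law of total variance and the pairwise identity. Leaving the $t>0$ clause unargued also matches the paper, which gives no real argument either. (Note that for $t>0$, conditioning on $X_t$ makes $I$ non-uniform, so the computation does not transfer as is.)

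For the $K^{-2/d}$ bound your route is genuinely different. The paper bounds (I) by citing a quantisation rate and an entropic-bias bound, under assumptions (B1)--(B2) and (S1) ($\lambda\asymp K^{-2/d}$) that it never states. It bounds (II) separately by assumption (S3), $\operatorname{tr}\Sigma_b\le(rC_L^2+dC_*^2)K^{-2/d}$. Your PPCA observation makes (S3) unnecessary:
\begin{itemize}
\item with the normalisation $T^\star_{ib}/w_b$ of the Stage-I pseudocode, $\sum_bw_b\operatorname{tr}\widehat\Sigma^{\mathrm{emp}}_b=\sum_{i,b}T^\star_{ib}\|x_i-\mu_b\|^2=\mathcal C_K$;
\item the lift preserves each trace;
\item since $\sigma^2=\sum_bw_b\sigma_b^2$, the shared-variance step preserves the weighted sum exactly.
\end{itemize}
Hence (II)$\,=\,$(I), and the whole second moment is $2\mathcal C_K$ (at most $(1+\frac{d}{d-r})\mathcal C_K$ for the pseudocode variant $L_k=V_kS_k$). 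Only one rate hypothesis is then needed, and $C_*,C_L$ never enter. Your remark that they enter through the aggregation should go, since the aggregation is exact, as your next sentence says.

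\textbf{The gap: bounding $\mathcal C_K$.} You compare the value of \eqref{eq:coreset} at $(T^\star,\mu,w)$ with that of a Graf--Luschgy hard quantizer, whose atoms are not the $\mu_b$. That comparison requires $(T^\star,\mu,w)$ to be a \emph{global} minimiser of \eqref{eq:coreset} jointly in the atoms. The theorem only supplies an EMS fixed point of alternating E/M updates, which, like a Lloyd fixed point, can be far from the global optimum. You have two options.
\begin{itemize}
\item State global optimality as an explicit hypothesis, together with bounded support (e.g.\ $x_i\in[0,1]^d$) for the grid benchmark. This is your version of the ``Stage-I quantisation constant''.
\item Use only what a fixed point gives. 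The E-step makes each row $T(x_i)$ the exact minimiser of $q\mapsto\sum_bq_b\|x_i-\mu_b\|^2+\lambda\,\mathrm{KL}(q\,\|\,w)$. Compare row by row with $q_i:=n\gamma_{i\cdot}$, where $\gamma$ is a $W_2$-optimal coupling of $\hat\rho_{1,n}$ and $\sum_bw_b\delta_{\mu_b}$. Its column marginal is exactly $w$, so $\frac1n\sum_i\mathrm{KL}(q_i\|w)$ is a mutual information bounded by $H(w)\le\log K$. This gives
\[
  \mathcal C_K\;\le\;W_2^2\Bigl(\hat\rho_{1,n},\textstyle\sum_bw_b\delta_{\mu_b}\Bigr)+\lambda\log K .
\]
The rate then needs an atomic compression hypothesis on the returned atoms, plus your condition $\lambda\log K=O(K^{-2/d})$. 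Assumption~\ref{ass:stageI-compression} cannot substitute: passing from $\core$ to its atoms costs up to $\sum_bw_b\operatorname{tr}\Sigma_b=\mathcal C_K$, which is circular.
\end{itemize}

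\textbf{Your last sentence.} Comparing $\|x_I-\mu_B\|$ with the nearest-mean distance gives $\E\|x_I-\mu_B\|^2\ge\E\min_b\|x_I-\mu_b\|^2$, which is the wrong direction. What you need is the same Gibbs inequality with $q=\delta_{b^\star(x_i)}$, where $b^\star(x_i)$ is the nearest atom:
\[
  \textstyle\sum_bT_b(x_i)\|x_i-\mu_b\|^2\le\min_b\|x_i-\mu_b\|^2+\lambda\log\bigl(1/w_{b^\star(x_i)}\bigr),
\]
together with (II)$\,\le T_{\max}$ at $t=0$. So the within-mode control carries an additive $\lambda\log(1/w_{\min})$ term, and it involves the empirical rather than the population $R_\star^2$.
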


The identity~\eqref{eq:sink-marginal-main} is the key Stage-III
property: the training-time marginal of $V_0$ matches the inference
sampler exactly through the Sinkhorn column constraint, with no
asymptotic argument and no IS reweight. The proof is short: the only
ingredient is the EMS M-step fixed-point identity
$w_b=\sum_i T^\star_{ib}$, which holds at any converged Stage-I
solution.

The bound~\eqref{eq:sink-second-moment-main} decomposes the training
target into two interpretable terms. The first is the Sinkhorn-EMS
transport cost between the empirical measure and the $K$-atom
quantiser, which under (B1)-(B2) and the Stage-I bandwidth schedule
$\lambda\asymp K^{-2/d}$ is at most $C_QK^{-2/d}$ via the
Niles-Weed--Berthet quantisation rate
\citep{nilesweed2022minimax,graf2000foundations}. The
second is the within-mode covariance budget set by the Stage-I
factorisation $\Sigma_b=L_bL_b^\top+\sigma_b^2I_d$ with
$\|L_bL_b^\top\|_{\mathrm{op}}\le C_L^2K^{-2/d}$ and
$\sigma_b\asymp K^{-1/d}$. Both terms scale as $K^{-2/d}$.

Compared with~\eqref{eq:indep-gauss-lower-main}, the cross-mode
penalty $c_\gamma D_\pi^2$ paid by the independent Gaussian source is
\emph{absorbed entirely} by the Sinkhorn-anchored draw: each pair
$(V_0,V_1)$ is drawn from the same mode under the data-anchored
coupling. The training-target ratio
\[
  \frac{\E[\|V_1-V_0\|^2\mid C]_{\mathrm{sink}}}
       {\E[\|V_1-V_0\|^2\mid C]_{\mathrm{Gauss}}}
  \;=\;
  O\!\left(\frac{R_\star^2(C)+T_{\max}(C)}{c_\gamma(C)D_\pi^2(C)+d}\right)
\]
becomes small whenever the within-mode spread is small relative to the
cross-mode diameter, the regime in which the coreset-induced surrogate
is close to a partition of the multimodal target velocity law into
well-separated modes. Proofs are in Appendix~\S\ref{sec:a7}.

\section{Experiments}
\label{sec:experiments}

We evaluate CCVFM on four image datasets: three in raw pixel space, MNIST ($28\times 28$), CIFAR-10 ($32\times 32\times 3$), and
ImageNet-32 \citep{chrabaszcz2017downsampled} ($32\times 32\times 3$
with $1{,}000$ classes), all at matched U-Net architecture with HRF2, and one high-resolution latent-space benchmark, CelebA-HQ 256
($256\times 256\times 3$, encoded by DC-AE f32c32 to a $d{=}2048$
latent), where the correction net is a DiT-L. All
FID values use InceptionV3 pool features~\citep{heusel2017gans};
training details, hyperparameter sweeps, and toy 2D results are in
the Appendix~\S C, D and E.

\paragraph{FID protocol.} We report two FID variants:
$\mathrm{FID}_{50k}$ uses $50$k generated samples against the $50$k
\emph{training} set (the protocol of \citealp[Table 7]{zhang2025hrf}),
and $\mathrm{FID}_{10k}$ uses $10$k generated samples against the
$10$k held-out \emph{test} set. Reported numbers are single-seed
point estimates 
.

\subsection{MNIST: FID and goodness-of-fit diagnostics}
\label{sec:mnist}

\begin{table}[t]
\centering
\small
\setlength{\tabcolsep}{4pt}
\caption{MNIST pixel-space FID. CCVFM uses $K=2000$, rank $r=50$,
U-Net base width $128$, $200$k training iterations, EMA $0.9999$.
FID$_{50k}$ uses $50$k training images as the reference (HRF2 Table
7 protocol); FID$_{10k}$ uses the $10$k held-out test set. RF and
HRF2 baselines from \citet[Table 7]{zhang2025hrf}; DDPM and FM rows
($\star$) report $5$k-reference numbers from their original papers
for historical context.}
\label{tab:mnist}
\begin{tabular}{lrrr}
\toprule
Method & NFE & FID$_{10k}\downarrow$ & FID$_{50k}\downarrow$ \\
\midrule
Rectified Flow \citep{liu2023rectified}             & 10   & ---   & 7.974 \\
Rectified Flow                                      & 100  & ---   & 5.563 \\
HRF2 \citep{zhang2025hrf}                           & 10   & ---   & 6.644 \\
HRF2                                                & 100  & ---   & 2.588 \\
HRF2                                                & 500  & ---   & 2.574 \\
DDPM \citep{ho2020ddpm}$^{\star}$                   & 1000 & ---   & 12.80 \\
Flow Matching \citep{lipman2023flow}$^{\star}$      & 100  & ---   & 10.20 \\
\midrule
\textbf{CCVFM Stage III}, $L{=}5$        & 6   & 13.80 & 12.45 \\
\textbf{CCVFM Stage III}, $L{=}10$       & 11  & 4.05  & 2.88 \\
\textbf{CCVFM Stage III}, $L{=}20$       & 21  & \textbf{2.26}  & \textbf{1.09} \\
\textbf{CCVFM Stage III}, $L{=}50$       & 51  & \textbf{1.91}  & \textbf{0.75} \\
\bottomrule
\end{tabular}
\end{table}

\paragraph{Scaling law and HRF2 comparison.}
We trained three configurations with $K\in\{500,1000,2000\}$. At
$K{=}500$ ($40$k iterations, U-Net base $64$), CCVFM reaches
$\mathrm{FID}_{50k}=3.79$ at $11$ NFE; at $K{=}1000$ ($80$k
iterations, base $64$), $1.93$; at the headline $K{=}2000$ ($200$k
iterations, base $128$, rank $50$), $\mathbf{1.09}$ at $21$ NFE and
$\mathbf{0.75}$ at $51$ NFE (Table~\ref{tab:mnist}). The monotone
progression in $K$ is qualitatively consistent with the Stage-I
compression assumption. Our $L{=}20$ result improves on HRF2's
best ($2.574$ at $500$ NFE) by $58\%$ with $\approx 24\times$ fewer
NFE; our $L{=}50$ result is $\approx 3\times$ lower at $\approx 10\times$
fewer NFE. Sample grids are in Figure~\ref{fig:samples} and
Appendix~\S F.

\paragraph{Goodness of fit: the generator learns $\rho_1$, not the training set.}
\label{sec:gof}

A low FID alone cannot distinguish a generator that has learned the
target law from one that has memorized its training set. To rule
memorization out, we compare three pools of $10{,}000$ samples each:
a \emph{Generated} pool (CCVFM with $K{=}2000$ and $L{=}20$,
$\mathrm{FID}_{50k}{=}1.09$),
the $10{,}000$-image held-out \emph{Test} set, and a size-matched
$10{,}000$-image random subset of \emph{Training} set. For each
ordered pair $(A,B)$ we compute the nearest-neighbour distance
$d_{A\!\to\!B}(a)=\min_{b\in B}\|\phi(a)-\phi(b)\|$ for each $a\in A$,
where $\phi$ is the InceptionV3 pool embedding. Comparing the
distributions of these distances across pairs is a direct,
non-parametric goodness-of-fit probe; the absence of a memorization
signal would manifest as symmetry (KS and Wasserstein-1 near zero)
between the Generated-vs-Training and Generated-vs-Test comparisons.

\begin{table}[t]
\centering
\small
\setlength{\tabcolsep}{5pt}
\caption{Goodness-of-fit summary on MNIST generations from the
headline configuration
(Inception feature space, size-matched pools of $10$k samples,
$1$-nearest-neighbour). KS is the Kolmogorov--Smirnov statistic and
$W_1$ the $1$-Wasserstein distance between the two distance
distributions named in ``Pair A'' and ``Pair B''.
\emph{Memorization tests} (rows $3$, $4$) compare generated-to-train
against generated-to-test distances. The real$\leftrightarrow$real
baseline (row $5$) bounds how small these gaps can be even for two
i.i.d.\ draws from $\rho_1$.}
\label{tab:gof-knn}
\begin{tabular}{lccl}
\toprule
Pair A vs.\ Pair B & KS$\downarrow$ & $W_1\downarrow$ & Interpretation \\
\midrule
Ge$\to$Te vs.\ Te$\to$Ge              & 0.0306 & 0.0728 & fidelity symmetry (Ge vs.\ Test) \\
Ge$\to$Tr vs.\ Tr$\to$Ge              & 0.0248 & 0.0629 & fidelity symmetry (Ge vs.\ Train) \\
\textbf{Ge$\to$Tr vs.\ Ge$\to$Te}     & \textbf{0.0173} & \textbf{0.0420} & \textbf{no memorization signal} \\
\textbf{Tr$\to$Ge vs.\ Te$\to$Ge}     & \textbf{0.0238} & \textbf{0.0535} & \textbf{no dual memorization} \\
Te$\to$Tr vs.\ Tr$\to$Te              & 0.0083 & 0.0116 & real $\leftrightarrow$ real baseline \\
\bottomrule
\end{tabular}
\end{table}

\textbf{What the tables say.} The memorization tests
(Table~\ref{tab:gof-knn}, rows 3--4) give $\mathrm{KS}=0.017$,
$W_1=0.042$: Generated-to-Train and Generated-to-Test distance
distributions are essentially the same. The real$\leftrightarrow$real
baseline (row 5, $\mathrm{KS}=0.008$, $W_1=0.012$) is the floor for
two i.i.d.\ draws from $\rho_1$; the memorization test sits within a
small factor of this floor, so the generated pool is distributed as
a third i.i.d.\ draw, not preferentially closer to training.
Improved precision/recall \citep{kynkaanniemi2019improved} computed
in Inception feature space with $k{=}5$ are also indistinguishable
across references ($\Delta P=0.008$, $\Delta R=0.004$; full table in
Appendix~\S G). The three diagnostic panels (1-NN density, per-sample
Generated-to-Train vs.\ Generated-to-Test scatter, improved
precision/recall) are visualized in Appendix~\S G; pixel-space
duplicates and CDF figures are also in Appendix~\S G.

\subsection{High-dimensional images: CIFAR-10, ImageNet-32, and CelebA-HQ 256}
\label{sec:hd-images}

We evaluate CCVFM on three pixel-/latent-space image benchmarks of
increasing dimensionality: CIFAR-10 ($32{\times}32{\times}3$,
$d{=}3072$, $n{=}50$k), ImageNet-32 ($32{\times}32{\times}3$,
$d{=}3072$, $n{\approx}1.28$M), and CelebA-HQ 256 ($256{\times}256{\times}3$,
encoded by DC-AE f32c32 to a $d{=}2048$ latent, $n{\approx}28$k).
Table~\ref{tab:hd-images} reports the three FID sweeps side by side.

\begin{table}[t]
\centering
\small
\caption{FID on 3 high-dimensional benchmarks.
\emph{(a)} CIFAR-10 FID$_{50k}$ at $K{=}10000$, $r{=}80$; the
matched $K{=}5000$ sweep is in Appendix~\S D; $\dagger$
denotes numbers not directly comparable (different protocol).
\emph{(b)} ImageNet-32 FID$_{50k}$ (matched U-Net, same
protocol; HRF2 from \citet{zhang2025hrf}).
\emph{(c)} CelebA-HQ 256 FID$_{28k}$ (28k generated vs. all $\sim$28k
real CelebA-HQ images) at $K{=}10000$, $r{=}128$, DiT-L correction
($\sim$458M params), 400k training steps, no classifier-free
guidance.}
\label{tab:hd-images}
\begin{minipage}[t]{0.32\linewidth}\centering
\footnotesize
\setlength{\tabcolsep}{1pt}
\begin{tabular}{@{}lrr@{}}
\toprule
Method & NFE & FID$_{50k}$ \\
\midrule
RF \citep{liu2023rectified}$^{\dagger}$    & 1     & 378.0 \\
2-RF \citep{liu2023rectified}$^{\dagger}$  & 1     & 12.21 \\
DDPM \citep{ho2020ddpm}                    & 1000  & 3.17  \\
FM \citep{lipman2023flow}                  & adp.  & 6.35  \\
EDM \citep{karras2022edm}                  & 35    & 1.97  \\
CM \citep{song2023consistency}             & 1     & 8.70  \\
\midrule
\textbf{CCVFM}, $L{=}10$ & {11} & 9.66 \\
\textbf{CCVFM}, $L{=}20$ & {21} & 7.24 \\
\textbf{CCVFM}, $L{=}50$ & {51} & \textbf{6.35} \\
\bottomrule
\end{tabular}

\vspace{2pt}
\emph{(a) CIFAR-10.}
\end{minipage}\hfill
\begin{minipage}[t]{0.32\linewidth}\centering
\footnotesize
\setlength{\tabcolsep}{1pt}
\begin{tabular}{@{}lrrr@{}}
\toprule
Method & NFE & FID$_{50k}$ & $\Delta$ \\
\midrule
HRF2                     & {11} & 20.29          & --- \\
\textbf{CCVFM}, $L{=}10$ & {11} & \textbf{12.55} & $-38\%$ \\
\midrule
HRF2                     & {21} & 12.49          & --- \\
\textbf{CCVFM}, $L{=}20$ & {21} & \textbf{9.51}  & $-24\%$ \\
\midrule
HRF2                     & {51} & 9.02           & --- \\
\textbf{CCVFM}, $L{=}50$ & {51} & \textbf{8.76}  & $-3\%$ \\
\bottomrule
\end{tabular}

\vspace{2pt}
\emph{(b) ImageNet-32. $\Delta$ vs.\ HRF2.}
\end{minipage}\hfill
\begin{minipage}[t]{0.32\linewidth}\centering
\footnotesize
\setlength{\tabcolsep}{2pt}
\begin{tabular}{@{}lrr@{}}
\toprule
Method & NFE & FID$_{28k}$ \\
\midrule
DDPM \citep{ho2020ddpm}    & 1000 & 7.03  \\
LSGM                       & 138  & 7.22  \\
ADM                        & 1000 & 5.11  \\
LDM-4                      & 200  & 5.11  \\
EDM \citep{karras2022edm}  & 79   & 3.50  \\
\midrule
CCVFM-B, $L{=}50$        & {51} & 5.59 \\
\textbf{CCVFM-L}, $L{=}50$ & {51} & \textbf{4.17} \\
\bottomrule
\end{tabular}

\vspace{2pt}
\emph{(c) CelebA-HQ 256.}
\end{minipage}
\end{table}

On CIFAR-10 (Table~\ref{tab:hd-images}a), CCVFM with $K{=}10000$
improves from $\mathrm{FID}_{50k}=9.66$ at {$11$ NFE} to $6.35$ at
{$51$ NFE}. ImageNet-32 (Table~\ref{tab:hd-images}b) gives the cleanest
HRF2 comparison (same U-Net, dataset, and protocol; only the source
changes): CCVFM wins by $38\%$ at {$11$ NFE} and $24\%$ at {$21$ NFE},
and ties within $3\%$ at {$51$ NFE}. On CelebA-HQ 256
(Table~\ref{tab:hd-images}c), the DC-AE f32c32 latent
($d{=}2048$) lets a $K{=}10000$ coreset capture face structure that
DiT-B ($131$M params) under-fits; scaling to DiT-L ($458$M) reaches
$\mathrm{FID}_{28k}{=}\mathbf{4.17}$ at {$51$ NFE}, competitive with
ADM and LDM-4 at $4$--$20\times$ fewer NFE without classifier-free
guidance. Sample grids are in Figure~\ref{fig:samples} and
Appendix~\S F. Ablations on $L$, $K$, the source/coupling choice, EMA, and the
matched CIFAR-10 $K{=}5000\to 10000$ coreset-size sweep are in
Appendix~\S D.

\begin{figure}[!htb]
  \centering
  \setlength{\abovecaptionskip}{2pt}
  \setlength{\belowcaptionskip}{0pt}
  \setlength{\tabcolsep}{2pt}
  \begin{tabular}{@{}ccc@{}}
    \includegraphics[width=.32\linewidth]{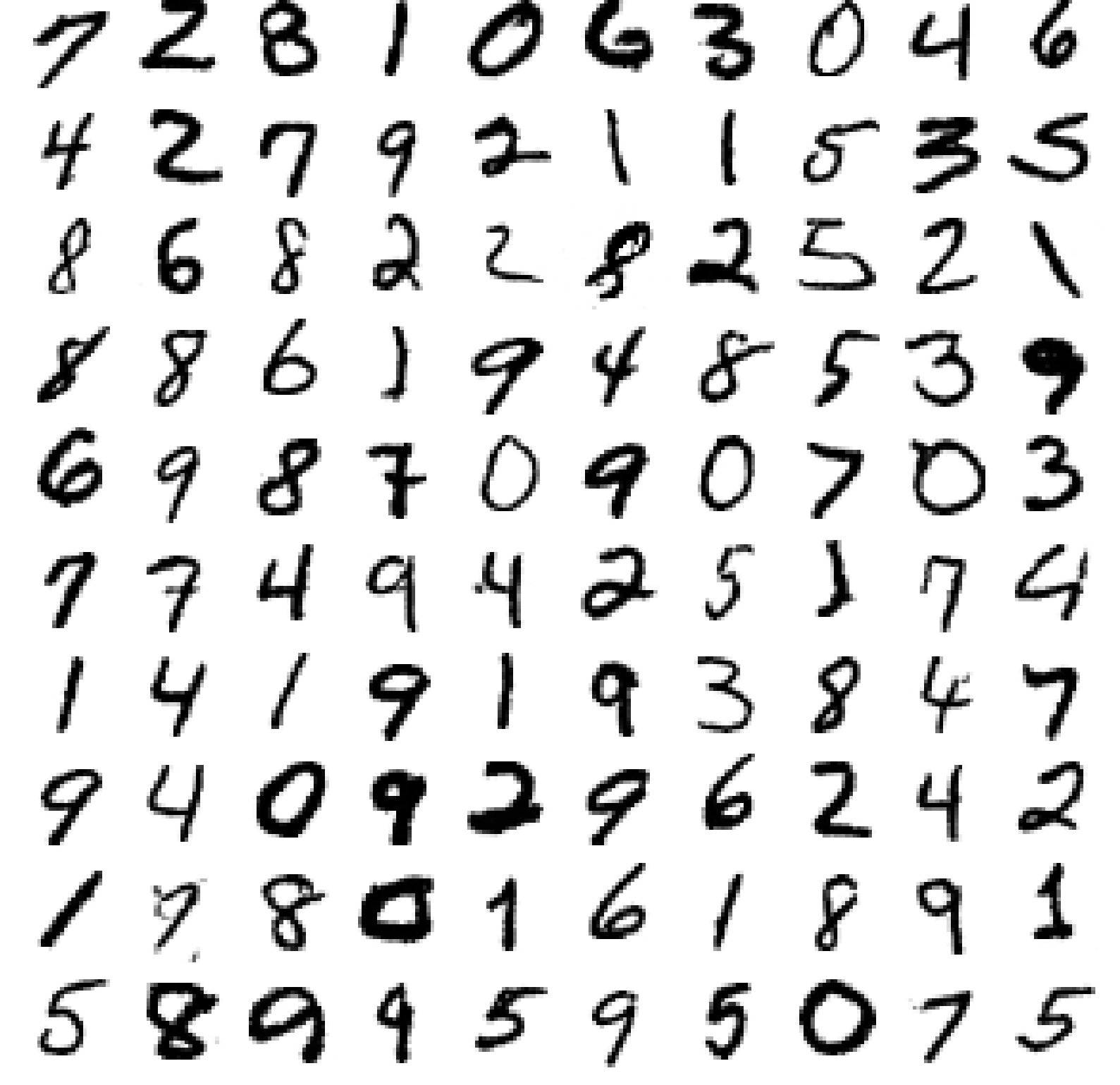} &
    \includegraphics[width=.32\linewidth]{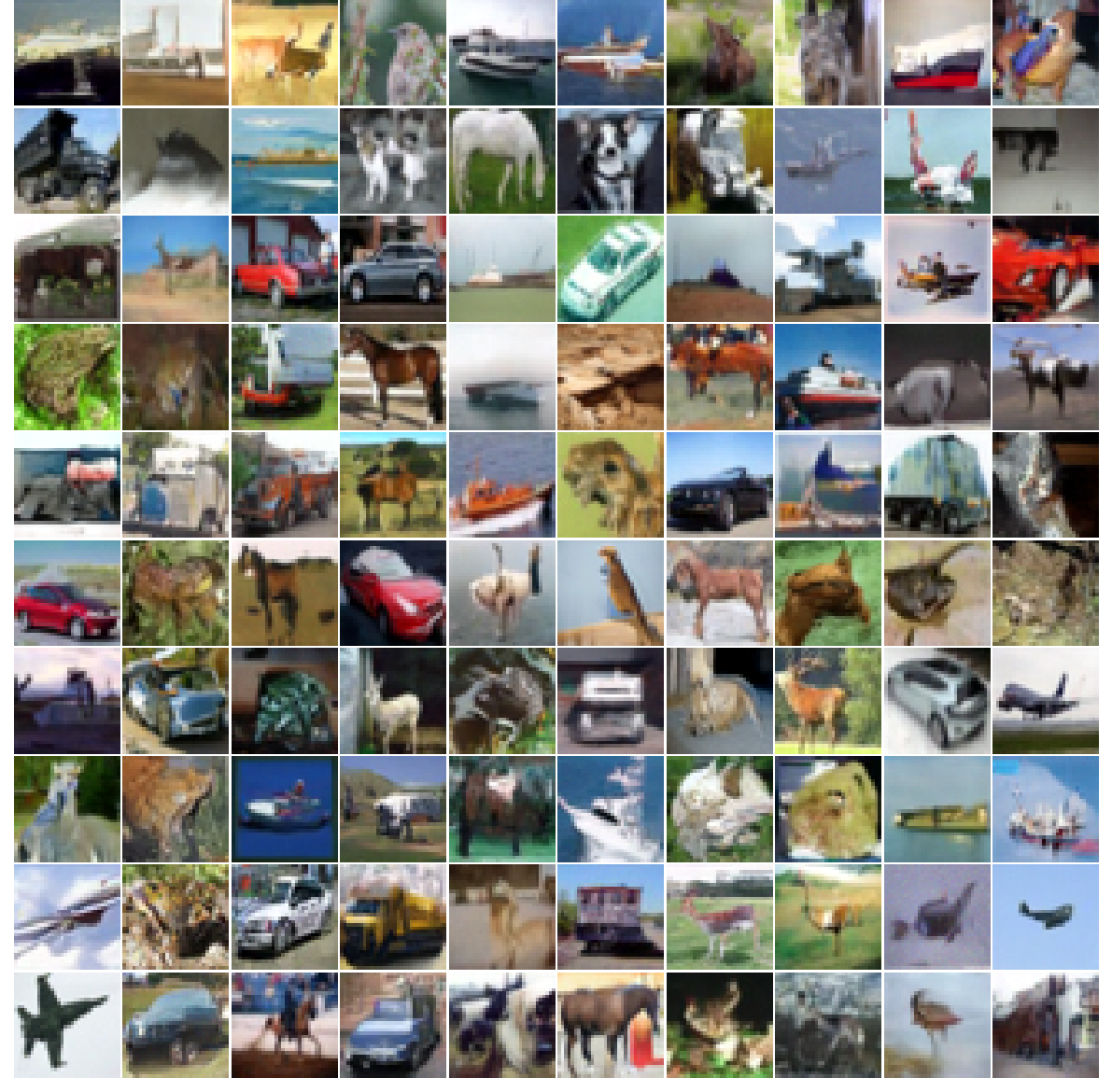} &
    \includegraphics[width=.32\linewidth]{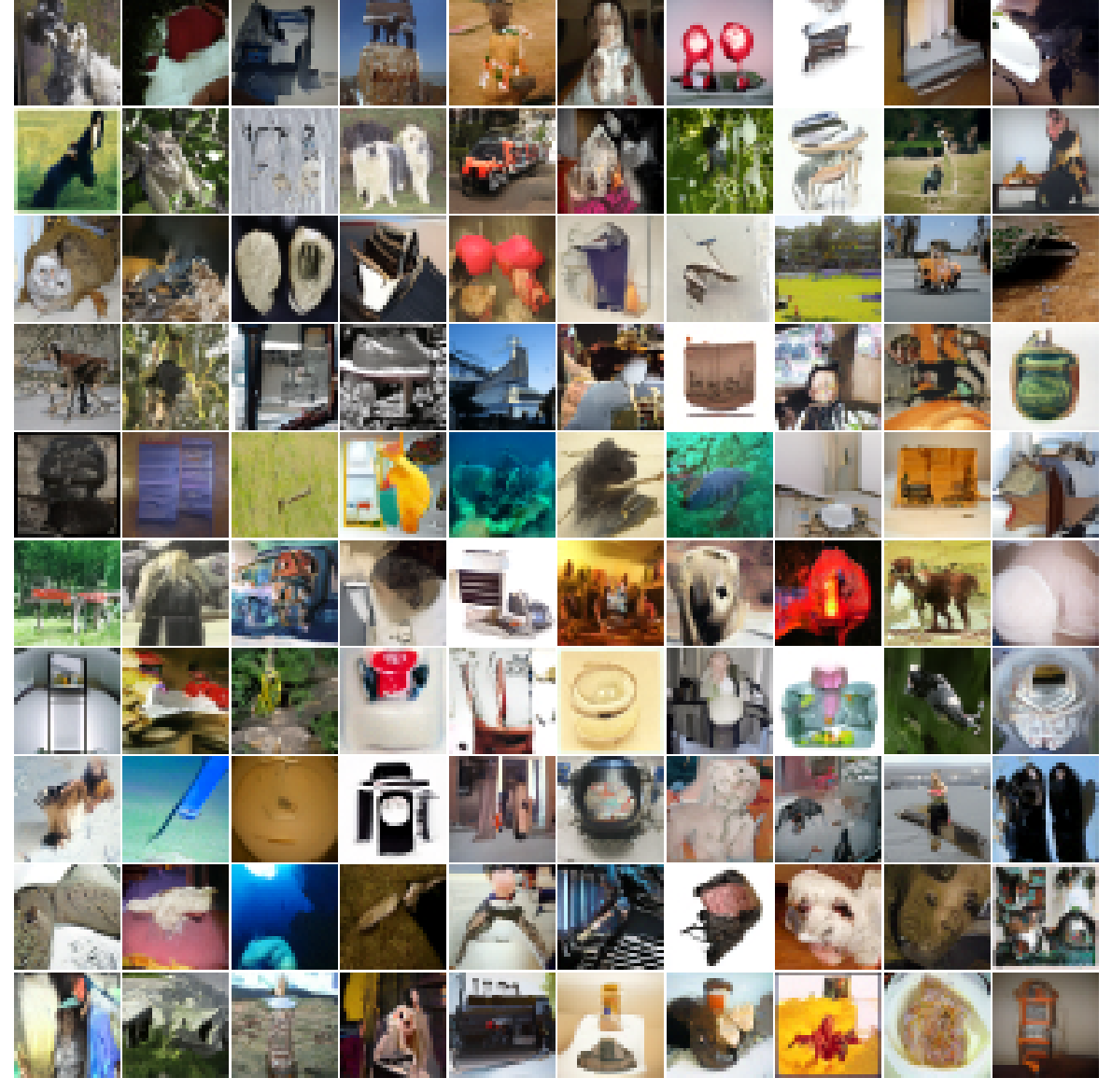} \\[-2pt]
    {\scriptsize (a) MNIST, $21$ NFE, FID$_{50k}{=}1.09$} &
    {\scriptsize (b) CIFAR-10, $50$ NFE, FID$_{50k}{=}6.35$} &
    {\scriptsize (c) ImageNet-32, $50$ NFE, FID$_{50k}{=}8.76$} \\[2pt]
    \multicolumn{3}{c}{\includegraphics[width=.97\linewidth]{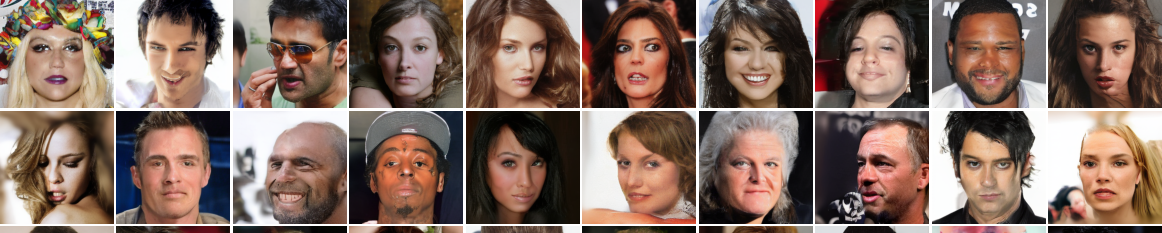}} \\[-2pt]
    \multicolumn{3}{c}{\scriptsize (d) CelebA-HQ 256, $50$ NFE, FID$_{28k}{=}4.17$ (uncurated $2{\times}5$ panel)}
  \end{tabular}
  \caption{Uncurated samples used for the reported FID pools: first $100$ samples for (a)--(c), and a $2{\times}5$ CelebA-HQ panel from the seed-99 30k pool for (d).}
  \label{fig:samples}
\end{figure}

\section{Related Work}
\label{sec:related}

\textbf{Generative modeling and few-step sampling.} GANs
\citep{goodfellow2014gan}, VAEs {\citep{kingma2013auto}},
and normalizing flows {\citep{rezende2015normalizing,dinh2017realnvp}}
established the modern toolkit, with tradeoffs in stability, likelihood
bounds, or architecture. Score-based diffusion models
\citep{ho2020ddpm,song2021score,karras2022edm} drive recent image
quality but often need long chains; EDM \citep{karras2022edm}
is the strongest CIFAR-10 efficiency benchmark. Consistency models
\citep{song2023consistency} and progressive distillation
\citep{salimans2022progressive} compress trained teachers, while
ReFlow \citep{liu2023rectified} and mini-batch OT-CFM
\citep{tong2024improving,pooladian2023multisample} straighten paths.
CCVFM is orthogonal: it reduces the inner correction transport before
any distillation, straightening, OT coupling, or solver choice.

\textbf{Flow matching and velocity distributions.} Conditional flow
matching \citep{lipman2023flow}, rectified flow
\citep{liu2023rectified}, and stochastic interpolants
\citep{albergo2023stochastic,tong2024improving,pooladian2023multisample}
regress the conditional mean velocity, which is the squared-loss
optimum but collapses multimodal local velocity laws. The closest
prior work is HRF2 \citep{zhang2025hrf}, which models
$\pi(v\mid x_t,t)$ via a second-level velocity-space flow. 
We keep HRF2's identity and closed-form GMM corollary, but replace its
$\mathcal{N}(0,I)$ source with a coreset-GMM surrogate sampled directly
through its conditional mixture weights, so that the training source
marginal equals the inference source marginal exactly.
Concurrently, \citet{guo2025velocity} proposed a variational low-dimensional latent
route to velocity multimodality.

\textbf{Coresets, smoothing, and diagnostics.} Stage~I builds on
entropic Optimal Transport (OT) \citep{cuturi2013sinkhorn} and weighted coreset / quantization theory \citep{claici2018wasserstein,graf2000foundations, yin2025wasserstein}.
Classical quantization provides the optimal $K$-atomic benchmark,
while the deployed Sinkhorn / GMM rate is used through the explicit
Stage-I compression assumption. 
Theorem~\ref{thm:posterior-coupling} expresses the conditional second
moment of the direct surrogate-source training residual in the means
and covariances of $\pi(\cdot\mid C)$ and $\pisur(\cdot\mid C)$; the
independent-Gaussian-source lower bound uses the law of total variance
applied to the conditional law $\pi(v\mid x_t,t)$.
In our diagnostic, we use improved precision / recall
\citep{sajjadi2018assessing,kynkaanniemi2019improved,naeem2020reliable},
and an $1$-NN KS/$W_1$ memorization test
{built on prior nearest-neighbour-based generative-model
diagnostics
\citep{balle2022reconstructing,meehan2020nonparametric,borji2018evaluation,carlini2023diffusion}
.}

\section{Conclusion and Limitations}
\label{sec:conclusion}

CCVFM replaces the velocity-space flow's isotropic noise source with
a closed-form coreset-GMM surrogate and trains the correction network
from that surrogate source directly. The surrogate gap equals
$W_2(\rho_1,\core)$ and vanishes with $K$ under the Stage-I
compression assumption, while the noise-source analogue has a
dimension-scale lower bound that is constant in $K$ and $n$
(Theorem~\ref{thm:reduction}). The direct surrogate-source training
residual has conditional second moment
$\operatorname{tr}\Sigma_\pi(C)+\operatorname{tr}\Sigma_{\mathrm{sur}}(C)+
\|\mu_\pi(C)-\mu_{\mathrm{sur}}(C)\|^2$, whose source-dependent excess
is smaller than the independent-Gaussian-source penalty
$\|\mu_\pi(C)\|^2+d$ whenever the surrogate is close to the target
conditional velocity law in mean and covariance
(Theorem~\ref{thm:posterior-coupling}). 

Limitations include pessimistic high-$d$ constants, the assumed
compression rate for the deployed coreset/GMM, residual-field regularity
in the NFE scaling, EDM's remaining CIFAR-10 lead at
matched NFE. Future work should prove algorithmic rates for the
entropic-Sinkhorn/GMM stage, extend the surrogate analysis to other
target-density classes, and combine CCVFM with distillation/straightening.

{\small
\setlength{\bibsep}{0pt plus 0.2ex}
\bibliographystyle{plainnat}
\bibliography{references}
}


\appendix


\input{supp_v2}


\end{document}

%% file: supp_v2.tex

\section{Proofs}
\label{sec:proofs}

\subsection{Notation}

Throughout, $X_0\sim\rho_0=\mathcal{N}(0,I_d)$ and
$X_1\sim\rho_1$ are independent, $X_t=(1-t)X_0+tX_1$,
$V=X_1-X_0$. The true conditional velocity law is
$\pi(v\mid x,t)$; the surrogate is $\pisur(v\mid x,t)$ induced by
replacing $\rho_1$ with the coreset GMM $\core$. We identify a
density with its induced probability measure whenever no ambiguity is
possible; if the Stage-I surrogate is a finite coreset, the same
notation $\core$ denotes the corresponding probability measure. For a
surrogate conditional source $Q(\cdot\mid x,t)$, the \emph{conditional
transport cost} ({Eq.~(7)} of the main paper) is
\begin{equation}
\label{eq:defD-supp}
  D(Q) \;=\; \Big(\E_{X_0\sim\rho_0}\!\big[W_2^2\big(Q(\cdot\mid X_0,0),\,\pi(\cdot\mid X_0,0)\big)\big]\Big)^{1/2},
\end{equation}
defined in the second-moment form, consistent with the
$D(Q)$ used in Theorem~\ref{thm:reduction-supp}.

\subsection{Conditional velocity identity and GMM specialization}

We restate these for completeness; the proofs are standard (see
\cite{zhang2025hrf}).

\begin{proposition}[Conditional velocity identity, {\S~2 main paper, Eq.~(1)}]
\label{prop:cond-id}
For any $t\in(0,1)$ and $x$ with $\rho_t(x)>0$,
\[
  \pi(v\mid x,t) = \frac{\rho_0(x-tv)\,\rho_1(x+(1-t)v)}{\rho_t(x)}.
\]
At $t=0$, by independence of $X_0,X_1$,
$\pi(v\mid x_0,0)=\rho_1(x_0+v)$.
\end{proposition}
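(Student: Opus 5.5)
The plan is a direct change of variables in the joint law of $(X_0,X_1)$, followed by Bayes' rule. Fix $t\in(0,1)$ and consider the map $(x_0,x_1)\mapsto(x,v)=((1-t)x_0+tx_1,\;x_1-x_0)$. Its inverse is linear and explicit:
\[
x_0=x-tv,\qquad x_1=x+(1-t)v.
\]
The Jacobian matrix of the forward map is the block matrix $\begin{pmatrix}(1-t)I & tI\\ -I & I\end{pmatrix}$. Its determinant is $\det\big(((1-t)+t)I\big)=1$, which follows from the commuting-block formula since all blocks are multiples of $I$. So the map is volume preserving.

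The key steps, in order, are as follows.
\begin{enumerate}
\item By independence of $X_0$ and $X_1$, the joint density of $(X_0,X_1)$ is $\rho_0(x_0)\rho_1(x_1)$.
\item Pushing forward through the map with unit Jacobian gives the joint density of $(X_t,V)$ as $\rho_0(x-tv)\,\rho_1(x+(1-t)v)$.
\item Integrating out $v$ gives the marginal density $\rho_t(x)$ of $X_t$.
\item Dividing the joint density by $\rho_t(x)$ at points with $\rho_t(x)>0$ yields the claimed conditional density $\pi(v\mid x,t)$.
\end{enumerate}
This conditional is a version of the regular conditional distribution, which is all the statement asserts.

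For $t=0$ the map degenerates but remains bijective: $X_t=X_0$ and $V=X_1-X_0$. Conditioning on $X_0=x_0$ and using independence, $V=X_1-x_0$ has density $v\mapsto\rho_1(x_0+v)$, obtained by translation. This is exactly the formula with $t=0$ and $\rho_0(x_0)$ cancelling against $\rho_t(x_0)=\rho_0(x_0)$.

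The only point needing care is existence of densities. $\rho_0$ is Gaussian, and $\rho_1$ must have a density for the formula to make sense. If $\rho_1$ is only a measure, as with a finite coreset, one reads the identity as equality of measures in $v$ via the same pushforward. Beyond this, I expect no real obstacle; the Jacobian computation is the one step to verify explicitly.
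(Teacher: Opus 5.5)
Your proposal is correct and follows essentially the same route as the paper's proof: independence plus the endpoint identities $X_0=X_t-tV$, $X_1=X_t+(1-t)V$ give the joint density of $(X_t,V)$, which is then normalized by $\rho_t(x)$, and the $t=0$ case is a translation of $\rho_1$ by $-x_0$. You also check explicitly that this linear change of variables has unit Jacobian, which the paper uses without comment; that computation holds at $t=0$ as well, so nothing actually degenerates there.
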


\begin{proof}
The joint density of $(X_t,V)$ at $(x,v)$ is
$\rho_0(x-tv)\rho_1(x+(1-t)v)$ by independence of the endpoints
combined with the endpoint identities $X_0=X_t-tV$,
$X_1=X_t+(1-t)V$. Normalize by $\rho_t(x)$. At $t=0$, $X_t=X_0$ so
$V=X_1-x_0$ with density $\rho_1(v+x_0)$.
\end{proof}

\begin{proposition}[GMM specialization, Prop.~\ref{prop:cond-id} with $\rho_1=\core$]
\label{prop:gmm-cond}
If $\core=\sum_b w_b\,\mathcal{N}(\mu_b,\Sigma_b)$, then the induced
$\pisur(v\mid x,t)=\sum_b \gamma_b(x,t)\,\mathcal{N}(v;m_b(x,t),\Lambda_b(t))$
is a $K$-component Gaussian mixture with closed-form parameters:
\begin{align*}
  A_b(t) &= t^2 I + (1-t)^2\Sigma_b^{-1},\quad \Lambda_b(t)=A_b(t)^{-1},\\
  b_b(x,t) &= tx - (1-t)\Sigma_b^{-1}(x-\mu_b),\quad m_b(x,t)=\Lambda_b(t)b_b(x,t),\\
  \gamma_b(x,t) &\propto w_b\,\eta_b(x,t),\quad
  \eta_b = |\Lambda_b|^{1/2}|\Sigma_b|^{-1/2}e^{\frac12 b_b^\top\Lambda_b b_b - \frac12 c_b(x)},\\
  c_b(x) &= x^\top x + (x-\mu_b)^\top\Sigma_b^{-1}(x-\mu_b).
\end{align*}
At $t=0$: $\pisur(v\mid x_0,0)=\core(x_0+v)=\sum_b w_b\mathcal{N}(v;\mu_b-x_0,\Sigma_b)$.
\end{proposition}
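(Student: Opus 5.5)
The plan is to substitute $\rho_1=\core$ and $\rho_0=\mathcal N(0,I_d)$ into Proposition~\ref{prop:cond-id} and complete the square in $v$, one mixture component at a time. By linearity of the numerator in $\rho_1$,
\[
  \rho_0(x-tv)\,\core\big(x+(1-t)v\big)=\sum_b w_b\,\rho_0(x-tv)\,\mathcal N\big(x+(1-t)v;\mu_b,\Sigma_b\big),
\]
so it suffices to show that each summand is a $v$-independent constant times a Gaussian density in $v$. Dividing by $\rho_t(x)$, which is the sum of these constants, then yields the weights $\gamma_b$ and the stated $K$-component mixture.

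For fixed $b$, I will expand the total exponent $-\tfrac12 E_b(v)$, where
\[
  E_b(v)=\|x-tv\|^2+\big(x-\mu_b+(1-t)v\big)^\top\Sigma_b^{-1}\big(x-\mu_b+(1-t)v\big).
\]
The expansion proceeds term by term. The quadratic part in $v$ is $v^\top\big(t^2I+(1-t)^2\Sigma_b^{-1}\big)v=v^\top A_b(t)v$. The linear part is $-2v^\top\big(tx-(1-t)\Sigma_b^{-1}(x-\mu_b)\big)=-2v^\top b_b(x,t)$. The constant part is $x^\top x+(x-\mu_b)^\top\Sigma_b^{-1}(x-\mu_b)=c_b(x)$.

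Since $\Sigma_b=L_bL_b^\top+\sigma^2I_d$ with $\sigma^2>0$ is positive definite, $A_b(t)$ is positive definite for every $t\in[0,1)$. Completing the square therefore gives
\[
  E_b(v)=\big(v-m_b\big)^\top A_b\big(v-m_b\big)-b_b^\top\Lambda_b b_b+c_b,\qquad m_b=\Lambda_b b_b .
\]
Including the normalizing constants $(2\pi)^{-d}|\Sigma_b|^{-1/2}$ of the two Gaussians, summand $b$ becomes
\[
  w_b\,(2\pi)^{-d/2}\,|\Lambda_b|^{1/2}|\Sigma_b|^{-1/2}\,e^{\frac12 b_b^\top\Lambda_b b_b-\frac12 c_b(x)}\;\mathcal N\big(v;m_b,\Lambda_b\big).
\]
The factor $(2\pi)^{-d/2}$ is common to all components and cancels upon normalization. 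What remains is exactly $\gamma_b\propto w_b\eta_b$, with $\eta_b$ as stated.

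For the $t=0$ claim, I will not rely on a limit. The identity $\pi(v\mid x_0,0)=\rho_1(x_0+v)$ from Proposition~\ref{prop:cond-id} gives it directly: $\core(x_0+v)=\sum_b w_b\mathcal N(x_0+v;\mu_b,\Sigma_b)=\sum_b w_b\mathcal N(v;\mu_b-x_0,\Sigma_b)$. As a consistency check, the general formulas at $t=0$ give $A_b=\Sigma_b^{-1}$, $\Lambda_b=\Sigma_b$, $m_b=\mu_b-x$, and $\eta_b=e^{-x^\top x/2}$. The latter does not depend on $b$, so $\gamma_b=w_b$, in agreement with the direct computation.

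The only real obstacle is bookkeeping: tracking the cross terms with the correct signs, so that the linear coefficient matches $b_b$, and handling the determinant and $(2\pi)$ factors so that $\eta_b$ comes out exactly as displayed. Well-definedness reduces to invertibility of $\Sigma_b$ and $A_b$, which the PPCA lift with $\sigma^2>0$ guarantees.
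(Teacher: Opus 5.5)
Your proposal is correct and follows the same route as the paper: complete the square in $v$ for each component of $\rho_0(x-tv)\,\mathcal{N}(x+(1-t)v;\mu_b,\Sigma_b)$, collect the $v$-independent factor into $\eta_b$ (the common $(2\pi)^{-d/2}$ cancels under normalization), and read off the $t=0$ case directly from $\pi(v\mid x_0,0)=\rho_1(x_0+v)$. Your explicit tracking of the linear term and the determinant factors, and your $t=0$ consistency check showing $\gamma_b=w_b$, fill in details that the paper's short proof leaves implicit.
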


\begin{proof}
Completing the square in $v$ in $\mathcal{N}(x-tv;0,I)\mathcal{N}(x+(1-t)v;\mu_b,\Sigma_b)$
yields a Gaussian in $v$ with natural parameters $A_b$ and $b_b$,
hence mean $m_b=\Lambda_b b_b$ and covariance $\Lambda_b$. The
multiplicative constant for each component is collected into
$\eta_b$, and normalization produces $\gamma_b$. The $t=0$ case is
immediate from $\pi(v\mid x_0,0)=\rho_1(x_0+v)$ in
Prop.~\ref{prop:cond-id} with $\rho_1\mapsto\core$.
\end{proof}

\subsection{Theorem~1 of main paper (Transport-task reduction)}

\begin{theorem}
\label{thm:reduction-supp}
With $D(Q)$ as in~\eqref{eq:defD-supp}, define
$Q_{\mathrm{CCVFM}}(\cdot\mid x_0,0)=(S_{x_0})_\#\core$ with
$S_{x_0}(y)=y-x_0$ (equivalently, when $\core$ has a density,
$Q_{\mathrm{CCVFM}}(v\mid x_0,0)=\core(x_0+v)$), and
$Q_{\mathrm{HRF2}}(v\mid x_0,0)=\mathcal{N}(v;0,I_d)$. Assume
$\rho_0=\mathcal{N}(0,I_d)$ and $X_0\perp X_1$ with
$X_1\sim\rho_1\in\mathcal P_2(\R^d)$.
Then
\begin{enumerate}[label=\emph{(\roman*)},leftmargin=*]
  \item $D(Q_{\mathrm{CCVFM}})=W_2(\rho_1,\core)$;
  \item $D(Q_{\mathrm{HRF2}})\ge \sqrt{d}\,\big|\sqrt{\sigma_1^2+1}-1\big|$, where
        $\sigma_1^2:=\E\|X_1\|^2/d$.
\end{enumerate}
\end{theorem}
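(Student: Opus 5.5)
Both parts rest on Prop.~\ref{prop:cond-id} at $t=0$, which identifies $\pi(\cdot\mid x_0,0)$ with the law of $X_1-x_0$, that is, with $(S_{x_0})_\#\rho_1$. This holds because $X_0\perp X_1$.

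\emph{Part (i).} I will use translation invariance of $W_2$. If $\gamma$ is any coupling of $(\core,\rho_1)$, then $(S_{x_0}\times S_{x_0})_\#\gamma$ is a coupling of $\big((S_{x_0})_\#\core,(S_{x_0})_\#\rho_1\big)$ with the same cost, since $\|(y-x_0)-(y'-x_0)\|=\|y-y'\|$. Applying the same argument with $S_{-x_0}$ gives the reverse map, so the two sets of couplings are in bijection. Hence
\[
W_2\big(Q_{\mathrm{CCVFM}}(\cdot\mid x_0,0),\pi(\cdot\mid x_0,0)\big)=W_2(\core,\rho_1)
\]
for every $x_0$. The integrand in \eqref{eq:defD-supp} is therefore constant, and $D(Q_{\mathrm{CCVFM}})=W_2(\rho_1,\core)$. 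Both measures lie in $\mathcal P_2$, since $\core$ is a finite mixture with finite second moment, so every quantity is finite.

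\emph{Part (ii).} Here I will lower-bound $W_2$ by the difference of root second moments. For any coupling $(U,W)$, Minkowski's inequality in $L^2$ gives
\[
(\E\|U-W\|^2)^{1/2}\ge \big|(\E\|U\|^2)^{1/2}-(\E\|W\|^2)^{1/2}\big|.
\]
Taking the infimum over couplings yields $W_2(\mu,\nu)\ge|m_2(\mu)^{1/2}-m_2(\nu)^{1/2}|$. For $\nu=\mathcal N(0,I_d)$ we have $m_2(\nu)=d$. For $\mu=\pi(\cdot\mid x_0,0)$ we have $m_2(\mu)=\E\|X_1-x_0\|^2=d\sigma_1^2-2x_0^\top m+\|x_0\|^2$, where $m=\E X_1$. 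The pointwise bound therefore depends on $x_0$, and I must average its square over $X_0\sim\mathcal N(0,I_d)$.

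The key step is the averaging, done with Jensen's inequality. Set $g(x_0):=(\E\|X_1-x_0\|^2)^{1/2}$, so that $D^2\ge\E_{X_0}\big(g(X_0)-\sqrt d\big)^2$. Averaging over $X_0$ gives $\E g(X_0)^2=d\sigma_1^2+d$. I will then use $\E(g-\sqrt d)^2\ge(\E g-\sqrt d)^2$. There is one more observation: $g(x_0)=\|X_1-x_0\|_{L^2}\ge\|X_1\|_{L^2}$ is not the right route. Instead, $\E_{X_0}g(X_0)=\E_{X_0}\|X_1-X_0\|_{L^2(X_1)}$ should be compared with $\sqrt{d(\sigma_1^2+1)}$ directly.

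The clean way is to avoid the lossy Jensen step. I will expand $\E(g-\sqrt d)^2=\E g^2-2\sqrt d\,\E g+d$ and bound $\E g\le(\E g^2)^{1/2}=\sqrt{d(\sigma_1^2+1)}$. This gives
\[
D^2\ge d(\sigma_1^2+1)-2d\sqrt{\sigma_1^2+1}+d=d\big(\sqrt{\sigma_1^2+1}-1\big)^2,
\]
which is exactly the claimed bound. So I first collect the pointwise moment bound, then take expectation of its square, and use Cauchy--Schwarz only on the cross term.

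The main obstacle is making sure the pointwise Minkowski bound is valid for every $x_0$ and squared before averaging, rather than averaging inside the absolute value. The final expression is nonnegative, so taking square roots produces $|\cdot|$ with no issue.
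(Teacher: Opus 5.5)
Your proposal is correct and follows the paper's proof essentially step for step. Part (i) uses invariance of $W_2$ under a common shift. Part (ii) applies the Minkowski moment bound pointwise in $x_0$, expands $\E_{X_0}(g(X_0)-\sqrt d)^2$, and bounds only the cross term via $\E g\le(\E g^2)^{1/2}=\sqrt{d(\sigma_1^2+1)}$; this is the same Jensen step the paper uses, not a way of avoiding it.

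Delete the abandoned detours in your middle paragraph. The inequality $g(x_0)\ge\|X_1\|_{L^2}$ fails in general. The route through $(\E g-\sqrt d)^2$ would need a lower bound on $\E g$, which you do not have.
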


\begin{proof}
\textbf{(i)} By Prop.~\ref{prop:cond-id}, at $t=0$ the true
conditional velocity law is
$\pi(\cdot\mid x_0,0)=\Law(X_1-x_0)=(S_{x_0})_\#\rho_1$. The CCVFM
source is $(S_{x_0})_\#\core$. For any coupling $(Y_1,Y_2)$ of
$\rho_1$ and $\core$, the shifted pair $(Y_1-x_0,Y_2-x_0)$ couples
the two velocity laws, and conversely any velocity-space coupling can
be shifted by $x_0$ to couple $\rho_1$ and $\core$. The quadratic cost
is unchanged under this common shift, so for every fixed $x_0$,
\begin{equation}
\label{eq:ccvfm-cond}
  W_2\!\big(Q_{\mathrm{CCVFM}}(\cdot\mid x_0,0),\,\pi(\cdot\mid x_0,0)\big)
  \;=\; W_2(\core,\rho_1),
\end{equation}
which is constant in $x_0$. Hence
$\E_{X_0} W_2^2 = W_2(\core,\rho_1)^2$, and
$D(Q_{\mathrm{CCVFM}}) = W_2(\core,\rho_1)$.

\textbf{(ii)} We proceed in four labeled steps.

\emph{Step 1 ($W_2^2$ moment lower bound).}
For any probability measures $\mu,\nu$ on $\R^d$ with finite
second moments,
\begin{equation}
\label{eq:w2-moment}
  W_2^2(\mu,\nu) \;\ge\; \big(\sqrt{m_2(\mu)}-\sqrt{m_2(\nu)}\big)^2,
\end{equation}
where $m_2(\mu):=\int\|x\|^2\,\mathrm{d}\mu(x)$. This is immediate
from any coupling $(U,V)$: by the Minkowski inequality,
$(\E\|U-V\|^2)^{1/2}\ge|(\E\|U\|^2)^{1/2}-(\E\|V\|^2)^{1/2}|$.
Taking the infimum over couplings yields~\eqref{eq:w2-moment}
(see \cite{villani2009optimal}, Theorem 6.15).

\emph{Step 2 (Apply~\eqref{eq:w2-moment} at fixed $x_0$).}
Fix $x_0\in\R^d$. We have $m_2(\mathcal{N}(0,I_d))=d$. For
$\mu=\Law(X_1-x_0)$ with $X_1\sim\rho_1$,
\[
  m_2(\mu) \;=\; \E_{X_1}\|X_1-x_0\|^2 \;=:\; S(x_0)^2,
\]
a deterministic quantity depending only on $x_0$. By
\eqref{eq:w2-moment},
\begin{equation}
\label{eq:pointwise-square}
  W_2^2\!\big(\mathcal{N}(0,I_d),\,\Law(X_1-x_0)\big)
  \;\ge\; \big(\sqrt{d}-S(x_0)\big)^2.
\end{equation}

\emph{Step 3 (Take expectation over $X_0\sim\rho_0$).}
Integrating~\eqref{eq:pointwise-square} over $X_0$ and expanding
the square,
\begin{equation}
\label{eq:expanded}
  \E_{X_0}\!\big[(\sqrt{d}-S(X_0))^2\big]
  \;=\; d \;-\; 2\sqrt{d}\,\E_{X_0}[S(X_0)] \;+\; \E_{X_0}[S(X_0)^2].
\end{equation}
We compute each moment of $S$:
\begin{align}
  \E_{X_0}[S(X_0)^2]
  &= \E_{X_0}\E_{X_1}\|X_1-X_0\|^2
   = \E\|X_1\|^2 \;-\; 2\,\E\langle X_1,X_0\rangle \;+\; \E\|X_0\|^2 \notag\\
  &= d\sigma_1^2 \;-\; 2\,\langle\mu_{\rho_1},\E X_0\rangle \;+\; d
   \;=\; d\sigma_1^2 + d \;=\; d(\sigma_1^2+1),
\label{eq:ES2}
\end{align}
using $X_0\perp X_1$, $\mu_{\rho_0}=\E X_0=0$, and
$\E\|X_0\|^2=d$ (standard Gaussian). By Jensen's inequality with
concave $u\mapsto\sqrt{u}$,
\begin{equation}
\label{eq:jensen-ES}
  \E_{X_0}[S(X_0)] \;=\; \E_{X_0}\!\sqrt{S(X_0)^2}
  \;\le\; \sqrt{\E_{X_0}[S(X_0)^2]}
  \;=\; \sqrt{d(\sigma_1^2+1)}.
\end{equation}
The coefficient $-2\sqrt{d}$ in front of $\E S$ in~\eqref{eq:expanded}
is \emph{negative}, so substituting the \emph{upper bound}
\eqref{eq:jensen-ES} produces a valid \emph{lower bound} on the
right-hand side:
\begin{align}
  \E_{X_0}[(\sqrt{d}-S(X_0))^2]
  &\ge\; d \;-\; 2\sqrt{d}\cdot\sqrt{d(\sigma_1^2+1)} \;+\; d(\sigma_1^2+1) \notag\\
  &=\; d\,\big[1 \;-\; 2\sqrt{\sigma_1^2+1} \;+\; (\sigma_1^2+1)\big] \notag\\
  &=\; d\,\big(\sqrt{\sigma_1^2+1}-1\big)^2,
\label{eq:square-complete}
\end{align}
where the last line uses the algebraic identity
$u^2 - 2u + 1 = (u-1)^2$ with $u=\sqrt{\sigma_1^2+1}$.

\emph{Step 4 (Collect).} Combining~\eqref{eq:pointwise-square}
and~\eqref{eq:square-complete} (and noting that both inequalities
preserve their direction after integration),
\begin{equation}
\label{eq:final-sq}
  D(Q_{\mathrm{HRF2}})^2
  \;=\; \E_{X_0}\!\big[W_2^2\big(\mathcal{N}(0,I_d),\,\Law(X_1-X_0)\big)\big]
  \;\ge\; d\,\big(\sqrt{\sigma_1^2+1}-1\big)^2.
\end{equation}
Taking the non-negative square root and using $|a|=\sqrt{a^2}$ gives
\begin{equation}
\label{eq:final-primary}
  D(Q_{\mathrm{HRF2}}) \;\ge\; \sqrt{d}\,\big|\sqrt{\sigma_1^2+1}-1\big|,
\end{equation}
which is the statement of (ii).
\end{proof}

\begin{remark}[Why the bound is nonzero and $K$-independent]
Since $\sigma_1^2\ge 0$, $\sqrt{\sigma_1^2+1}\ge 1$, with equality
iff $\sigma_1=0$ (i.e.\ $\rho_1=\delta_0$). For every nondegenerate
target the bound is strictly positive and, crucially, depends only
on the second moment of $\rho_1$, not on the coreset size $K$ or
sample size $n$. Typical image targets have $\sigma_1$ bounded and
$d$ large, so the bound scales as $\Omega(\sqrt d\,\sigma_1^2)$
for small $\sigma_1$ and as $\Omega(\sqrt d\,\sigma_1)$ for large
$\sigma_1$.
\end{remark}

\begin{lemma}[Existence of a $K$-atom quantization benchmark]
\label{lem:k-quantization-supp}
Assume $\rho_1$ is supported on $\Omega=[0,1]^d$. Let
\[
  \mathcal G_K(\Omega):=
  \left\{\sum_{j=1}^K w_j\delta_{y_j}:\; w_j\ge0,\;\sum_{j=1}^K w_j=1,
  \; y_j\in\Omega\right\}.
\]
Then there exists $C_d<\infty$, depending only on $d$, such that
\[
  \inf_{\nu\in\mathcal G_K(\Omega)} W_2(\rho_1,\nu)\le C_dK^{-1/d}.
\]
Consequently, if the Stage-I coreset $\widetilde{\rho}_{1,K}$ is a global minimizer
of the $K$-atom Wasserstein quantization problem over
$\mathcal G_K(\Omega)$, then
\[
  W_2(\rho_1,\widetilde{\rho}_{1,K})=O(K^{-1/d}).
\]
\end{lemma}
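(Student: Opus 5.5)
We will prove the bound by an explicit grid construction. Such a construction is enough: it exhibits one element of $\mathcal G_K(\Omega)$ with the stated $W_2$ error, and the infimum is no larger. The "consequently" clause then follows directly, because a global minimizer attains (or lies below) this infimum.

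\textbf{Construction.} Set $m:=\lfloor K^{1/d}\rfloor\ge 1$, so that $m^d\le K$. Partition $\Omega=[0,1]^d$ into $m^d$ half-open axis-aligned cubes $Q_1,\dots,Q_{m^d}$ of side $1/m$, and let $y_j$ be the center of $Q_j$; each $y_j$ lies in $\Omega$. Define the quantizer
\[
  \nu=\sum_{j=1}^{m^d}\rho_1(Q_j)\,\delta_{y_j}.
\]
If $m^d<K$, pad with the remaining atoms carrying weight $0$, placed anywhere in $\Omega$. Then $\nu\in\mathcal G_K(\Omega)$, since the weights are nonnegative and sum to $\rho_1(\Omega)=1$.

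\textbf{Coupling bound.} Take the transport map $T(x)=y_j$ for $x\in Q_j$. It pushes $\rho_1$ forward to $\nu$, so
\[
  W_2^2(\rho_1,\nu)\le\int\|x-T(x)\|^2\,\dd\rho_1(x)\le \sup_j\operatorname{diam}(Q_j)^2/4=\frac{d}{4m^2}.
\]
This gives $W_2(\rho_1,\nu)\le \sqrt d/(2m)$.

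\textbf{Converting to a rate in $K$.} Since $m=\lfloor K^{1/d}\rfloor\ge K^{1/d}/2$ for all $K\ge1$ (because $\lfloor u\rfloor\ge u/2$ whenever $u\ge1$), we get
\[
  W_2(\rho_1,\nu)\le \sqrt d\,K^{-1/d},
\]
so $C_d=\sqrt d$ works.

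\textbf{Consequence.} If $\widetilde\rho_{1,K}$ globally minimizes $W_2(\rho_1,\cdot)$ over $\mathcal G_K(\Omega)$, then
\[
  W_2(\rho_1,\widetilde\rho_{1,K})\le W_2(\rho_1,\nu)\le C_dK^{-1/d},
\]
which is $O(K^{-1/d})$.

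The only delicate points are bookkeeping rather than substance. First, the floor must not lose more than a constant factor, which the inequality $\lfloor u\rfloor\ge u/2$ for $u\ge1$ handles. Second, the boundary faces of $[0,1]^d$ must be assigned to exactly one cell each so that the $Q_j$ truly partition $\Omega$; for example, make the last cell in each coordinate closed. No regularity of $\rho_1$ beyond support in $\Omega$ is needed.
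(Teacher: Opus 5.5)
Your proof is correct and follows the same grid-quantization argument as the paper. Both partition $[0,1]^d$ into $m^d$ cubes with $m=\lfloor K^{1/d}\rfloor$, send each cube's mass to a single point, and bound $W_2$ by the cell diameter. Your choice of cube centers together with $\lfloor u\rfloor\ge u/2$ for $u\ge 1$ is a small tidy-up: it gives the explicit constant $C_d=\sqrt d$ for every $K\ge 1$, and so avoids the paper's ``for $K$ large enough, then enlarge $C_d$'' step.
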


\begin{proof}
Let $m=\lfloor K^{1/d}\rfloor$. For $K$ large enough,
$m\ge c_dK^{1/d}$ for a constant $c_d>0$ and $m^d\le K$. Partition
$[0,1]^d$ into $m^d$ axis-aligned cubes $\{A_\ell\}$ of side length
$1/m$. For each cell with $\rho_1(A_\ell)>0$, choose
$a_\ell\in A_\ell$ and set
\[
  \nu_m:=\sum_\ell \rho_1(A_\ell)\delta_{a_\ell}.
\]
This measure has at most $m^d\le K$ atoms. Couple $X\sim\rho_1$ to
$a_\ell$ whenever $X\in A_\ell$. Since every cube has diameter at most
$\sqrt d/m$,
\[
  W_2^2(\rho_1,\nu_m)
  \le \sum_\ell\int_{A_\ell}\|x-a_\ell\|^2\,\dd\rho_1(x)
  \le \frac{d}{m^2}.
\]
Thus $W_2(\rho_1,\nu_m)\le \sqrt d/m\le C_dK^{-1/d}$. Enlarging
$C_d$ covers the finitely many small values of $K$. Taking the
infimum over $\mathcal G_K(\Omega)$ proves the lemma.
\end{proof}

\paragraph{Consequence for CCVFM.}
If $\rho_1$ is supported on $[0,1]^d$ and the Stage-I coreset
$\widetilde{\rho}_{1,K}$ is a global $K$-atom Wasserstein quantizer, then
\[
  D(Q_{\mathrm{CCVFM}})=W_2(\rho_1,\widetilde{\rho}_{1,K})=O(K^{-1/d})
\]
by Theorem~\ref{thm:reduction-supp}(i) and
Lemma~\ref{lem:k-quantization-supp}. This consequence applies to
exact global Wasserstein quantization. It does not assert that the
finite-iteration entropic-Sinkhorn/GMM routine used in Stage~I
necessarily returns such a global quantizer. For the deployed routine,
the corresponding approximation property is imposed as
Assumption~\ref{ass:stageI-compression} of the main paper.

\begin{remark}[How to read the coreset assumption]
The preceding consequence uses only the $W_2$ approximation property
$W_2(\rho_1,\widetilde{\rho}_{1,K})=O(K^{-1/d})$. If Stage~I is not solved as an
exact global quantization problem, it is enough to assume or prove
that the returned coreset satisfies the same rate, possibly with
additional optimization, entropic-regularization, and GMM-lift slack.
This paper treats that deployed-rate statement as the Stage-I
compression assumption rather than as a consequence of the algorithm.
\end{remark}

\subsection{Corollary~1 of main paper (NFE gap)}

\begin{lemma}[Euler error under space-time regularity]
\label{lem:euler-regularity-supp}
Consider the non-autonomous ODE
\[
  \dot V_\tau=f_\theta(V_\tau,\tau),\qquad \tau\in[0,1].
\]
Assume $f_\theta$ is $L_v$-Lipschitz in $v$ and $L_\tau$-Lipschitz in
$\tau$, and assume along the exact trajectory
\[
  \sup_{\tau\in[0,1]}\|f_\theta(V_\tau,\tau)\|_{L^2}\le M.
\]
Then the explicit Euler scheme with $n$ uniform steps satisfies
\begin{equation}
\label{eq:euler-standard}
  \|\mathrm{Euler}(n)-\mathrm{Exact}\|_{L^2}
  \le
  \frac{e^{L_v}-1}{2L_v}\,\frac{L_vM+L_\tau}{n},
\end{equation}
where the prefactor is interpreted as $1/2$ when $L_v=0$. In
particular, if $L_v$ is bounded, then
\[
  \|\mathrm{Euler}(n)-\mathrm{Exact}\|_{L^2}
  =O\!\left(\frac{L_vM+L_\tau}{n}\right).
\]
\end{lemma}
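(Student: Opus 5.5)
The target is Lemma~\ref{lem:euler-regularity-supp}, and I would prove it by the standard Euler global-error argument: bound the local truncation error from the space-time regularity, then propagate it with a discrete Grönwall recursion. Throughout I work in $L^2$ over the randomness of the initial condition (and of $(X_0,C)$), so all norms are $\|\cdot\|_{L^2}$. I also use Minkowski's inequality so that pathwise estimates integrate cleanly.

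\emph{Local error.} Set $h=1/n$, $\tau_k=kh$, and $e_k=\hat V_k-V_{\tau_k}$, where $\hat V_{k+1}=\hat V_k+hf_\theta(\hat V_k,\tau_k)$. The exact flow satisfies
\[
V_{\tau_{k+1}}=V_{\tau_k}+\int_{\tau_k}^{\tau_{k+1}} f_\theta(V_s,s)\,\dd s,
\]
so the local defect is $\delta_k=\int_{\tau_k}^{\tau_{k+1}}\bigl[f_\theta(V_s,s)-f_\theta(V_{\tau_k},\tau_k)\bigr]\dd s$. Using Lipschitz continuity in $v$ and in $\tau$,
\[
\|f_\theta(V_s,s)-f_\theta(V_{\tau_k},\tau_k)\|\le L_v\|V_s-V_{\tau_k}\|+L_\tau(s-\tau_k).
\]
Since $V_s-V_{\tau_k}=\int_{\tau_k}^s f_\theta(V_u,u)\,\dd u$, Minkowski and the bound $M$ give $\|V_s-V_{\tau_k}\|_{L^2}\le M(s-\tau_k)$. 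Integrating over $s\in[\tau_k,\tau_{k+1}]$ yields
\[
\|\delta_k\|_{L^2}\le (L_vM+L_\tau)\,\frac{h^2}{2}.
\]

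\emph{Propagation.} Subtracting the two recursions gives $e_{k+1}=e_k+h\bigl[f_\theta(\hat V_k,\tau_k)-f_\theta(V_{\tau_k},\tau_k)\bigr]-\delta_k$. Hence
\[
\|e_{k+1}\|_{L^2}\le(1+hL_v)\|e_k\|_{L^2}+\tfrac{h^2}{2}(L_vM+L_\tau),
\]
with $e_0=0$. Unrolling the recursion gives
\[
\|e_n\|_{L^2}\le \frac{h^2}{2}(L_vM+L_\tau)\sum_{j<n}(1+hL_v)^j=\frac{h}{2}(L_vM+L_\tau)\,\frac{(1+hL_v)^n-1}{L_v}.
\]
Since $(1+hL_v)^n\le e^{L_v}$, this is the claimed prefactor $\frac{e^{L_v}-1}{2L_v}\cdot\frac{L_vM+L_\tau}{n}$. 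When $L_v=0$ the geometric sum equals $n$, which gives the prefactor $1/2$. For the ``in particular'' clause, bounded $L_v$ makes $(e^{L_v}-1)/L_v$ bounded.

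\emph{Main obstacle.} The delicate point is the local-error step. The hypothesis bounds $f_\theta$ only in $L^2$ along the \emph{exact} trajectory, not pointwise, so I must avoid any sup-norm Taylor expansion. Instead I keep everything in integral form and apply Minkowski's integral inequality, so that only the $L^2$ bound $M$ on the exact path is needed. Lipschitz continuity in $v$ is then what lets the Euler iterate $\hat V_k$, which may leave the exact path, be compared to it without further moment assumptions.
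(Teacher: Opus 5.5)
Your proposal is correct and follows essentially the same route as the paper. You write the local defect in integral form and bound it by $\tfrac12(L_vM+L_\tau)h^2$ using the Lipschitz hypotheses and the $L^2$ bound $M$, then unroll the recursion $e_{k+1}\le(1+hL_v)e_k+\tfrac12(L_vM+L_\tau)h^2$ with $(1+hL_v)^n\le e^{L_v}$. The only differences are presentational: you spell out the Minkowski step $\|V_s-V_{\tau_k}\|_{L^2}\le M(s-\tau_k)$ that the paper leaves implicit, and you handle $L_v=0$ directly rather than by taking a limit.
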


\begin{proof}
Let $h=1/n$ and $\tau_k=kh$. Denote by $V_{\tau_k}$ the exact
solution and by $\widetilde V_k$ the Euler iterate. The one-step
local truncation error is
\[
  \delta_k
  :=V_{\tau_{k+1}}-V_{\tau_k}-h f_\theta(V_{\tau_k},\tau_k)
  =\int_{\tau_k}^{\tau_{k+1}}
  \big(f_\theta(V_t,t)-f_\theta(V_{\tau_k},\tau_k)\big)\,\dd t.
\]
Using the Lipschitz assumptions and the bound by $M$,
\[
  \|\delta_k\|_{L^2}
  \le \int_{\tau_k}^{\tau_{k+1}}(L_vM+L_\tau)(t-\tau_k)\,\dd t
  =\frac{1}{2}(L_vM+L_\tau)h^2.
\]
For the global error $e_k:=\|\widetilde V_k-V_{\tau_k}\|_{L^2}$,
\[
  e_{k+1}\le (1+hL_v)e_k+\frac{1}{2}(L_vM+L_\tau)h^2.
\]
Iterating this recursion with $e_0=0$ gives
\[
  e_n\le \frac{(1+hL_v)^n-1}{hL_v}\cdot
  \frac{1}{2}(L_vM+L_\tau)h^2
  \le \frac{e^{L_v}-1}{2L_v}\frac{L_vM+L_\tau}{n}.
\]
The case $L_v=0$ follows by taking the limit.
\end{proof}

\begin{corollary}[NFE sufficient scaling]
\label{cor:nfe-supp}
Let $D:=D(Q)$ be the conditional transport cost in
\eqref{eq:defD-supp}. Assume the correction ODE satisfies the
regularity conditions of Lemma~\ref{lem:euler-regularity-supp}.
Assume further the residual straight-line scale and flattening
conditions
\begin{equation}
\label{eq:flattening-assumptions-supp}
  M=O(D),\qquad L_v=O(D),\qquad L_\tau=O(D^2),
\end{equation}
with constants uniform over the family of transport tasks under
consideration. Then, for any $\varepsilon_{\mathrm{disc}}>0$, a
sufficient number of Euler function evaluations to make the $L^2$
discretization error at most $\varepsilon_{\mathrm{disc}}$ is
\begin{equation}
\label{eq:nfe-D2-supp}
  \mathrm{NFE}(\varepsilon_{\mathrm{disc}})=O\!\left(1+\frac{D^2}{\varepsilon_{\mathrm{disc}}}\right).
\end{equation}
Without the flattening conditions~\eqref{eq:flattening-assumptions-supp},
the generic sufficient bound is
\begin{equation}
\label{eq:nfe-generic}
  \mathrm{NFE}(\varepsilon_{\mathrm{disc}})
  =O\!\left(1+\frac{L_vD+L_\tau}{\varepsilon_{\mathrm{disc}}}\right),
\end{equation}
assuming only $M=O(D)$ and bounded $L_v$.

For CCVFM, let
$\Delta_K:=D(Q_{\mathrm{CCVFM}})=W_2(\rho_1,\core)$. Then
\[
  \mathrm{NFE}_{\mathrm{CCVFM}}(\varepsilon_{\mathrm{disc}})
  =O\!\left(1+\frac{\Delta_K^2}{\varepsilon_{\mathrm{disc}}}\right)
\]
under~\eqref{eq:flattening-assumptions-supp}. If the Stage-I
compression assumption gives $\Delta_K=O(K^{-1/d})$, then
\[
  \mathrm{NFE}_{\mathrm{CCVFM}}(\varepsilon_{\mathrm{disc}})
  =O\!\left(1+\frac{K^{-2/d}}{\varepsilon_{\mathrm{disc}}}\right).
\]
For HRF2, if $d$ and $\rho_1$ are fixed and the corresponding
regularity constants are $O(1)$, then a sufficient scaling is
\[
  \mathrm{NFE}_{\mathrm{HRF2}}(\varepsilon_{\mathrm{disc}})
  =O\!\left(1+\frac{1}{\varepsilon_{\mathrm{disc}}}\right).
\]
\end{corollary}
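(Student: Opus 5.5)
The plan is to obtain the corollary directly from Lemma~\ref{lem:euler-regularity-supp} by substituting the assumed regularity scales and inverting the resulting error bound for the step count $n$. Throughout, I identify NFE with the number $n$ of Euler steps. The extra zero-parameter Stage~II draw contributes only an additive constant, which I absorb into the ``$1+$'' term.

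\textbf{Step 1 (uniform prefactor).} Work over a family of transport tasks on which $D$ is bounded above by some $D_{\max}<\infty$. For CCVFM one can take $D_{\max}=\sup_K\Delta_K$, which is finite under Assumption~\ref{ass:stageI-compression}. For HRF2, $D$ is finite for a fixed $\rho_1$: the product coupling gives $D(Q_{\mathrm{HRF2}})^2\le \E\|Z-(X_1-X_0)\|^2=d(\sigma_1^2+2)$. Under \eqref{eq:flattening-assumptions-supp} this gives $L_v\le c_1 D\le c_1D_{\max}$, so the prefactor $(e^{L_v}-1)/(2L_v)\le e^{L_v}/2$ is bounded by an absolute constant $c_0$ on the family. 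For the generic bound I use the hypothesis ``$L_v$ bounded'' in the same way.

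\textbf{Step 2 (inversion).} Lemma~\ref{lem:euler-regularity-supp} gives an error of at most $c_0(L_vM+L_\tau)/n$. Under \eqref{eq:flattening-assumptions-supp} we have $L_vM+L_\tau\le c_2D^2$, so choosing $n=\lceil c_0c_2D^2/\varepsilon_{\mathrm{disc}}\rceil\le 1+c_0c_2D^2/\varepsilon_{\mathrm{disc}}$ makes the error at most $\varepsilon_{\mathrm{disc}}$. This proves \eqref{eq:nfe-D2-supp}. If only $M=O(D)$ is assumed, then $L_vM+L_\tau=O(L_vD+L_\tau)$, and the same choice of $n$ gives \eqref{eq:nfe-generic}.

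\textbf{Step 3 (specialisations).} For CCVFM, Theorem~\ref{thm:reduction-supp}(i) gives $D(Q_{\mathrm{CCVFM}})=W_2(\rho_1,\core)=\Delta_K$; substituting this into Step~2 yields the $\Delta_K^2$ bound. Inserting $\Delta_K\le C_QK^{-1/d}$ from Assumption~\ref{ass:stageI-compression} then gives the $K^{-2/d}$ form. For HRF2, with $d$ and $\rho_1$ fixed, $D(Q_{\mathrm{HRF2}})$ is a finite constant by the product-coupling bound above. Together with $O(1)$ regularity constants, this gives $L_vM+L_\tau=O(1)$ and hence $O(1+1/\varepsilon_{\mathrm{disc}})$.

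The main obstacle is the uniformity in Step~1. The exponential prefactor $e^{L_v}$ would destroy the claimed scaling if $L_v$ were allowed to grow. I would handle this by restricting to families with bounded $D$, which is automatic for the CCVFM sequence in $K$ and trivial for fixed HRF2, and by stating explicitly that the implied constants depend on $D_{\max}$.
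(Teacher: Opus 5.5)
Your proposal is correct and follows the paper's own argument: invert the Euler bound of Lemma~\ref{lem:euler-regularity-supp} using $L_vM+L_\tau=O(D^2)$, substitute $D=\Delta_K$ from Theorem~\ref{thm:reduction-supp}(i) and then the compression rate, and for HRF2 use the product-coupling bound $D(Q_{\mathrm{HRF2}})^2\le d(\sigma_1^2+2)$ (the paper also cites the lower bound of Theorem~\ref{thm:reduction-supp}(ii) to get $D=\Theta_K(1)$, but a sufficient bound does not need it). Your Step~1 restriction to a bounded-$D$ family makes explicit what the paper compresses into ``when $L_v$ is bounded'', namely uniform control of the $e^{L_v}$ prefactor; one small fix is to take $n=\max\{1,\lceil c_0c_2D^2/\varepsilon_{\mathrm{disc}}\rceil\}$ so that $n\ge 1$ even when $D=0$.
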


\begin{proof}
Lemma~\ref{lem:euler-regularity-supp} gives Euler error
$O((L_vM+L_\tau)/n)$ when $L_v$ is bounded. Under
\eqref{eq:flattening-assumptions-supp}, $L_vM+L_\tau=O(D^2)$, so
$n\ge C D^2/\varepsilon_{\mathrm{disc}}$ is sufficient; at least one
function evaluation gives~\eqref{eq:nfe-D2-supp}. If only $M=O(D)$ is
used, the same Euler bound gives the generic scaling~\eqref{eq:nfe-generic}.

For CCVFM, $D(Q_{\mathrm{CCVFM}})=\Delta_K$ by
Theorem~\ref{thm:reduction-supp}(i). This gives the displayed
$\Delta_K^2/\varepsilon_{\mathrm{disc}}$ scaling. If the Stage-I
compression assumption gives $\Delta_K=O(K^{-1/d})$, then the
$K^{-2/d}/\varepsilon_{\mathrm{disc}}$ scaling follows. For HRF2, the
law $Q_{\mathrm{HRF2}}$ is independent of $K$; for fixed $d$ and a
fixed nondegenerate target, Theorem~\ref{thm:reduction-supp}(ii) gives
a positive $K$-independent lower bound. A matching $K$-independent
upper bound follows by the independent coupling: for
$Z\sim\mathcal N(0,I_d)$ independent of $(X_0,X_1)$,
\[
  D(Q_{\mathrm{HRF2}})^2
  \le \E\|Z-X_1+X_0\|^2
  =d(\sigma_1^2+2).
\]
Thus $D(Q_{\mathrm{HRF2}})=\Theta_K(1)$, and the sufficient NFE upper
bound is $O(1+1/\varepsilon_{\mathrm{disc}})$ under $O(1)$ regularity constants.
\end{proof}

\begin{remark}[Interpretation of the flattening assumption]
\label{rem:Lf-D}
A small conditional transport cost $D(Q)$ controls the typical
displacement magnitude, but it does not by itself determine the
$v$-Lipschitz constant of the learned velocity field. The scaling
$L_v=O(D)$ should therefore be read as an additional regularity or
flattening assumption on the residual correction field. A sufficient
regime is that, on a fixed-scale velocity domain,
\[
  f_{\theta,Q}(v,\tau)=D(Q)\,\widetilde f_{\theta,Q}(v,\tau),
  \qquad \operatorname{Lip}_v(\widetilde f_{\theta,Q})=O(1).
\]
For the non-autonomous ODE one also needs the time variation to be
controlled, as in $L_\tau=O(D^2)$, or an equivalent local
truncation-error assumption. Under these additional conditions the
Euler upper bound scales as $O(D(Q)^2/n)$. Without them, the proof
only yields the generic sufficient bound in~\eqref{eq:nfe-generic},
with the regularity constants treated as problem-dependent quantities.
\end{remark}

\subsection{Theorem~\ref{thm:posterior-coupling} of main paper
(Sinkhorn-anchored coupling: marginal preservation and conditional
second moment)}
\label{sec:a7}

This subsection proves the two parts of
Theorem~\ref{thm:posterior-coupling}: (i) the Sinkhorn-anchored sampler
of Stage~III recovers the deployment surrogate marginal of $V_0$
\emph{exactly} via the EMS column constraint, and (ii) its conditional
residual second moment is bounded by within-mode quantities, uniformly
in $x_0$.

\begin{remark}[Second moment versus centered variance]
\label{rem:second-moment-vs-variance}
The statements below control
$\E[\|V_1-V_0\|^2\mid C]$, not the centered variance
\(
  \E\bigl[\|V_1-V_0-\E[V_1-V_0\mid C]\|^2\,\big|\, C\bigr].
\)
This is intentional: the squared norm of the training target
$V_1-V_0$ is the scale appearing directly in the flow-matching
regression loss. Centered variance is always bounded above by the
second moment, so the non-centered second moment is the cleaner
quantity for training-signal size.
\end{remark}

\begin{proposition}[Sinkhorn-anchored sampling: marginal preservation]
\label{prop:tilt-marginal-supp}
Let $\{X_i\}_{i=1}^n$ be the training data with empirical measure
$\hat\rho_{1,n}=n^{-1}\sum_i\delta_{X_i}$ and let
$(w_b,\mu_b,\Sigma_b)_{b=1}^K$ together with the Sinkhorn-EMS coupling
matrix $T^\star\in\R^{n\times K}_{\ge 0}$ be the Stage-I output at the
EMS fixed point, satisfying the row and column marginals
\[
  \sum_{b=1}^K T^\star_{ib}=\frac{1}{n},
  \qquad
  \sum_{i=1}^n T^\star_{ib}=w_b,
\]
and the closed form
$T^\star_{ib}=(1/n)\cdot w_b\exp(-\|X_i-\mu_b\|^2/\lambda)/Z_i$ with
$Z_i:=\sum_c w_c\exp(-\|X_i-\mu_c\|^2/\lambda)$. Let $T_b(X_i)
:=n\,T^\star_{ib}$ be the row-normalised conditional distribution
over components given the data index $i$. Consider the
Sinkhorn-anchored sampler at $t=0$:
\begin{equation}
\label{eq:sink-sample-supp}
  I\sim\mathrm{Uniform}\{1,\ldots,n\},\quad
  X_0\sim\mathcal{N}(0,I_d)\text{ indep.\ of }I,\quad
  B\mid I\sim\mathrm{Cat}\!\bigl(T_1(X_I),\ldots,T_K(X_I)\bigr),
\end{equation}
followed by $V_0=\mu_B+L_B\varepsilon_r+\sigma_B\varepsilon_d-X_0$ with
$\varepsilon_r\sim\mathcal{N}(0,I_r)$, $\varepsilon_d\sim\mathcal{N}(0,I_d)$.
Then for every $b\in\{1,\ldots,K\}$ and every $x_0\in\R^d$,
\begin{equation}
\label{eq:sink-marginal-supp}
  P(B=b\mid X_0=x_0)\;=\;w_b\;=\;\gamma_b(0),
\end{equation}
and the marginal of $V_0$ given $X_0=x_0$ equals
$\pisur(\cdot\mid x_0,0)$ \emph{exactly}.
\end{proposition}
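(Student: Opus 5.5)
The plan is a direct computation that conditions on the data index $I$ and then uses the column constraint. Everything reduces to two facts built into the sampler \eqref{eq:sink-sample-supp}. First, $X_0$ is drawn independently of $I$. Second, $B$ depends on $(I,X_0)$ only through $I$, via the row $T_\cdot(X_I)$; the responsibilities in \eqref{eq:tilde-T} involve only $X_I$, $(w_c,\mu_c)$ and $\lambda$, never $x_0$. Hence $B\perp X_0$, and in fact $(I,B)\perp X_0$.

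\emph{Step 1 (law of $B$ given $X_0$).} By the law of total probability over $I$, and by the independence just noted,
\[
P(B=b\mid X_0=x_0)=\sum_{i=1}^n P(I=i\mid X_0=x_0)\,P(B=b\mid I=i,X_0=x_0)=\sum_{i=1}^n\tfrac1n\,T_b(X_i).
\]
Substituting $T_b(X_i)=n\,T^\star_{ib}$ gives $\sum_i T^\star_{ib}$. The assumed column marginal $\sum_i T^\star_{ib}=w_b$ then yields $w_b$. It remains to note $w_b=\gamma_b(x_0,0)$. This holds because at $t=0$ Proposition~\ref{prop:gmm-cond} gives $\pisur(\cdot\mid x_0,0)=\sum_b w_b\,\mathcal N(\mu_b-x_0,\Sigma_b)$, so the mixture weights are exactly $w_b$, independent of $x_0$.

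\emph{Step 2 (law of $V_0$ given $X_0$).} Conditional on $B=b$ and $X_0=x_0$, we have $V_0=\mu_b-x_0+L_b\varepsilon_r+\sigma_b\varepsilon_d$ with independent standard normals. This is Gaussian with mean $\mu_b-x_0$ and covariance $L_bL_b^\top+\sigma_b^2I_d$. I will identify this covariance with the Stage-I $\Sigma_b$ using the PPCA factorisation of \S\ref{sec:stageI}; if the shared $\sigma^2$ is deployed, one reads $\sigma_B=\sigma$ and the identification is the same. Integrating over $B$ with the weights from Step~1 gives
\[
\Law(V_0\mid X_0=x_0)=\sum_b w_b\,\mathcal N(\mu_b-x_0,\Sigma_b)=\core(x_0+\cdot)=\pisur(\cdot\mid x_0,0),
\]
again by Proposition~\ref{prop:gmm-cond} at $t=0$.

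\emph{Main obstacle.} The only nontrivial input is the column identity $\sum_i T^\star_{ib}=w_b$. The closed-form E-step $T^\star_{ib}\propto w_b e^{-\|X_i-\mu_b\|^2/\lambda}$ enforces only the row sums, so for arbitrary $(w,\mu)$ the column sums need not equal $w_b$. The identity holds precisely because, at the fixed point, the M-step update $w_b\leftarrow\sum_i T_{ib}$ leaves $w$ unchanged. I will state this explicitly and rely on the fixed-point hypothesis in the proposition rather than on any convergence argument. At a non-converged iterate, the same computation shows that the marginal of $B$ is the updated weight $\sum_i T_{ib}$ rather than $w_b$.

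A second point needs care: the covariance identification in Step~2. The per-component $\sigma_b$ in the sampler must match the $\Sigma_b$ defining $\core$. I would check this convention first, since a mismatch between per-component and shared noise variance would break exact equality while leaving the weight computation intact.
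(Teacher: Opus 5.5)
Your proposal is correct and follows the paper's proof essentially line for line: you use total probability over the uniform index $I$ with $(I,B)\perp X_0$, substitute $T_b(X_i)=nT^\star_{ib}$, and invoke the fixed-point column constraint to get $P(B=b\mid X_0=x_0)=w_b$; you then mix the conditional Gaussians $\mathcal N(\mu_b-x_0,L_bL_b^\top+\sigma_b^2I_d)$ to recover $\core(x_0+\cdot)=\pisur(\cdot\mid x_0,0)$. The two caveats you flag are both used implicitly in the paper's proof without comment: the column identity is the EMS fixed-point hypothesis rather than a consequence of the E-step alone, and $\Sigma_b$ must be read as $L_bL_b^\top+\sigma_b^2I_d$ with the same $\sigma_b$ the sampler uses.
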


\begin{proof}
Since $I$ is uniform on $\{1,\ldots,n\}$ and independent of $X_0$, and
$B\mid I$ depends only on $X_I$ (not on $X_0$), we have, for any
$x_0$,
\begin{align*}
  P(B=b\mid X_0=x_0)
  &=\sum_{i=1}^n P(I=i)\,P(B=b\mid I=i)\\
  &=\sum_{i=1}^n \frac{1}{n}\,T_b(X_i)
   \;=\;\sum_{i=1}^n n\cdot T^\star_{ib}\cdot\frac{1}{n}
   \;=\;\sum_{i=1}^n T^\star_{ib}
   \;=\;w_b,
\end{align*}
using $T_b(X_i)=n\,T^\star_{ib}$ and the EMS column constraint at the
fixed point in the last two equalities. This establishes
\eqref{eq:sink-marginal-supp}. Conditional on $B=b$,
$V_0+X_0=\mu_B+L_B\varepsilon_r+\sigma_B\varepsilon_d\sim\mathcal{N}(\mu_b,\Sigma_b)$,
so the marginal of $V_0+X_0$ given $X_0=x_0$ is
$\sum_b w_b\mathcal{N}(\cdot;\mu_b,\Sigma_b)=\core$, and the marginal of
$V_0$ given $X_0=x_0$ is $\core(\cdot+x_0)=\pisur(\cdot\mid x_0,0)$ by
the Stage-II identity \eqref{eq:pisur-t0}. \emph{No importance-sampling
reweight, no asymptotic claim, and no Monte-Carlo estimator of any
component frequency appears.}

For $t>0$, the same argument carries through with $\gamma_b(C)$ in
place of $w_b$ once the row $T_b(X_i)$ is composed with the
Stage-II Bayes weights on the way from
$V_0\sim\mathcal{N}(m_B(C),\Lambda_B(t))$ to the marginal of $V_0$ given
$C=(X_t,t)$.
\end{proof}

\begin{proposition}[Sinkhorn-anchored sampling: conditional second-moment bound]
\label{prop:tilt-coupling-supp}
Under the setup of Proposition~\ref{prop:tilt-marginal-supp} and
assumption (S3) on the GMM-lift bandwidth, the training residual
under the Sinkhorn-anchored sampler at $t=0$ satisfies
\begin{equation}
\label{eq:tilt-second-moment-supp}
  \E\bigl[\|V_1-V_0\|^2\,\big|\,X_0=x_0\bigr]_{\mathrm{sink}}
  \;=\;\frac{1}{n}\sum_{i=1}^n\sum_{b=1}^K T_b(X_i)\bigl(\|X_i-\mu_b\|^2+\operatorname{tr}\Sigma_b\bigr),
\end{equation}
uniformly in $x_0\in\R^d$. With $C_{\mathrm{Sink}}$ the
Sinkhorn-EMS quantisation constant and $(r,C_*,C_L)$ the
covariance-budget constants of (S3), the bound
\begin{equation}
\label{eq:tilt-second-moment-supp-bound}
  \E\bigl[\|V_1-V_0\|^2\,\big|\,X_0=x_0\bigr]_{\mathrm{sink}}
  \;\le\;\bigl(C_{\mathrm{Sink}}+rC_L^2+dC_*^2\bigr)\,K^{-2/d}
\end{equation}
holds uniformly in $x_0$.
\end{proposition}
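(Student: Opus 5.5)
The proof splits into two parts: an exact identity by conditioning on the index and component, and then a bound on each resulting term. Work at $t=0$, conditioning on $X_0=x_0$. In the Sinkhorn-anchored sampler of Proposition~\ref{prop:tilt-marginal-supp} the target velocity is $V_1=X_I-X_0$, and the source is $V_0=\mu_B+L_B\varepsilon_r+\sigma_B\varepsilon_d-X_0$. The $X_0$ terms cancel, so
\[
V_1-V_0=(X_I-\mu_B)-(L_B\varepsilon_r+\sigma_B\varepsilon_d),
\]
and this residual does not depend on $x_0$ at all. That is where the uniformity in $x_0$ comes from. We then condition on $(I,B)=(i,b)$. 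The Gaussian part $L_b\varepsilon_r+\sigma_b\varepsilon_d$ has mean zero, is independent of $(I,B)$ given $b$, and has second moment $\operatorname{tr}(L_bL_b^\top+\sigma_b^2I_d)=\operatorname{tr}\Sigma_b$. The cross term therefore vanishes and
\[
\E\bigl[\|V_1-V_0\|^2\mid I=i,B=b\bigr]=\|X_i-\mu_b\|^2+\operatorname{tr}\Sigma_b .
\]
Averaging over $I\sim\mathrm{Uniform}\{1,\ldots,n\}$ and $B\mid I\sim\mathrm{Cat}(T_\cdot(X_I))$ gives \eqref{eq:tilt-second-moment-supp}. The covariance part can be rewritten as $\sum_b w_b\operatorname{tr}\Sigma_b$ using the column identity $\frac1n\sum_iT_b(X_i)=w_b$ already used in Proposition~\ref{prop:tilt-marginal-supp}. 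This matches the two-term form in \eqref{eq:sink-second-moment-main}.

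For the bound \eqref{eq:tilt-second-moment-supp-bound}, we treat the two terms separately. The covariance term is handled by (S3). We take (S3) to say that $\|L_bL_b^\top\|_{\mathrm{op}}\le C_L^2K^{-2/d}$ with rank $r$, and that $\sigma_b\le C_*K^{-1/d}$. Then $\operatorname{tr}(L_bL_b^\top)\le r\,\|L_bL_b^\top\|_{\mathrm{op}}\le rC_L^2K^{-2/d}$ and $\operatorname{tr}(\sigma_b^2I_d)\le dC_*^2K^{-2/d}$. Since the weights $\sum_bw_b=1$ form a convex combination, the term is at most $(rC_L^2+dC_*^2)K^{-2/d}$.

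The transport term $\frac1n\sum_{i,b}T_b(X_i)\|X_i-\mu_b\|^2=\sum_{i,b}T^\star_{ib}\|X_i-\mu_b\|^2$ is the main obstacle. It is the linear-cost part of the EMS objective \eqref{eq:coreset} evaluated at the fixed point, and it is not directly a $W_2^2$ quantity. My plan is the following chain.
\begin{enumerate}
\item Compare with an optimal hard quantizer via optimality. Let $\nu$ be the $K$-atom quantizer of Lemma~\ref{lem:k-quantization-supp}, applied to $\hat\rho_{1,n}$ (supported in $[0,1]^d$), and let $T^{\mathrm{hard}}$ be its assignment coupling. Using $T^{\mathrm{hard}}$ as a competitor in \eqref{eq:coreset}, and assuming the EMS fixed point is a (near-)global minimizer, the objective at $T^\star$ is at most the competitor's cost plus its KL term.
\item Bound the competitor's KL term by $\lambda\log K$, since the entropy is over at most $K$ atoms per row.
\item Bound the KL term at $T^\star$ from below by $0$, so the cost at $T^\star$ is at most $C_d^2K^{-2/d}+\lambda\log K$.
\item Use the bandwidth schedule $\lambda\asymp K^{-2/d}$. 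This leaves a $\log K$ factor that I would try to absorb into $C_{\mathrm{Sink}}$.
\end{enumerate}
If the $\log K$ factor cannot be absorbed, the fallback is to define $C_{\mathrm{Sink}}$ directly as the Sinkhorn--EMS quantisation constant. The statement explicitly permits this, since it refers to ``the Sinkhorn-EMS quantisation constant'', in the spirit of Assumption~\ref{ass:stageI-compression} applied to the empirical measure. The bound is then just the definition of that constant.

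Adding the two bounds gives the total $(C_{\mathrm{Sink}}+rC_L^2+dC_*^2)K^{-2/d}$, uniformly in $x_0$.
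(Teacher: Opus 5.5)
Your derivation of the identity \eqref{eq:tilt-second-moment-supp} matches the paper's exactly: you condition on $(I,B)$, note that $x_0$ cancels, and use $\E\|L_b\varepsilon_r+\sigma_b\varepsilon_d\|^2=\operatorname{tr}\Sigma_b$. Your bound on the covariance term $(\mathrm{II})=\sum_b w_b\operatorname{tr}\Sigma_b$ (column constraint, then (S3)) is also the paper's. The routes diverge only on the transport term $(\mathrm{I})=\frac1n\sum_{i,b}T_b(X_i)\|X_i-\mu_b\|^2$. The paper does not derive it. It cites a quantisation rate plus an entropic-bias bound under hypotheses (B1)--(B2), (S1) that are never stated, so your fallback of \emph{defining} $C_{\mathrm{Sink}}$ through term (I) is effectively what the paper does.

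Your competitor chain does not actually prove the bound, for two reasons. First, EMS is alternating minimisation, and its fixed points need not be global minimisers of \eqref{eq:coreset}. For example, all atoms at the data mean $\bar x$ with any weights is a fixed point: the E-step gives $T^\star_{ib}=w_b/n$ and the M-step returns the same $(w,\mu)$. So ``near-global minimiser'' is an extra hypothesis. Second, even granting it, dropping the nonnegative KL at $T^\star$ gives only $(\mathrm{I})\le C_d^2K^{-2/d}+\lambda H(w^{\mathrm{hard}})\le C_d^2K^{-2/d}+\lambda\log K$. Under $\lambda\asymp K^{-2/d}$ the $\log K$ cannot be absorbed into a $K$-independent constant. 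Entropic-bias bounds of the kind the paper cites are also of order $\lambda$ times a logarithm, so they do not remove it either. Your chain closes only with an extra schedule such as $\lambda\le cK^{-2/d}/\log K$.

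There is a cleaner way to close term (I) from the stated hypotheses alone, because the lift is fitted to the $T^\star$-weighted residuals. Normalise as in Algorithm~\ref{alg:stage1}: $\widehat\Sigma_b^{\mathrm{emp}}=w_b^{-1}\sum_iT^\star_{ib}(X_i-\mu_b)(X_i-\mu_b)^\top$. The PPCA MLE preserves the trace: $\operatorname{tr}(L_bL_b^\top+\sigma_b^2I_d)=\sum_{j\le r}\lambda_{b,j}+(d-r)\sigma_b^2=\operatorname{tr}\widehat\Sigma_b^{\mathrm{emp}}$. The SVD variant in Algorithm~\ref{alg:stage1}, which does not subtract $\sigma_b^2$ on the top-$r$ block, can only increase $\operatorname{tr}\Sigma_b$. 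Using the shared $\sigma^2=\sum_bw_b\sigma_b^2$ leaves $\sum_bw_b\operatorname{tr}\Sigma_b$ unchanged. Hence
\[
(\mathrm{I})=\sum_{i,b}T^\star_{ib}\|X_i-\mu_b\|^2=\sum_b w_b\operatorname{tr}\widehat\Sigma_b^{\mathrm{emp}}\le\sum_b w_b\operatorname{tr}\Sigma_b=(\mathrm{II})\le(rC_L^2+dC_*^2)K^{-2/d}.
\]
So the bound holds with $C_{\mathrm{Sink}}=rC_L^2+dC_*^2$. You need no quantisation rate, no optimality of the fixed point, and no condition on $\lambda$. This also shows that (S3) already encodes the $K^{-2/d}$ rate for the transport term.
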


\begin{proof}
Set $V_1:=X_I-X_0$. Conditional on $(X_0=x_0, I=i, B=b)$, the
regression target decomposes as
\[
  V_1-V_0
  \;=\;(X_i-X_0)-(\mu_b+L_b\varepsilon_r+\sigma_b\varepsilon_d-X_0)
  \;=\;(X_i-\mu_b)-L_b\varepsilon_r-\sigma_b\varepsilon_d,
\]
so the $x_0$-dependence cancels. Taking expectation over the
zero-mean Gaussian noise $(\varepsilon_r,\varepsilon_d)$,
\[
  \E\!\bigl[\|V_1-V_0\|^2\,\big|\,X_0=x_0,I=i,B=b\bigr]
  \;=\;\|X_i-\mu_b\|^2+\operatorname{tr}(L_bL_b^\top)+\sigma_b^2\,d
  \;=\;\|X_i-\mu_b\|^2+\operatorname{tr}\Sigma_b.
\]
Averaging over $B\sim\mathrm{Cat}(T_\cdot(X_i))$ and then over
$I\sim\mathrm{Uniform}\{1,\ldots,n\}$ gives
\eqref{eq:tilt-second-moment-supp}; the bound is uniform in $x_0$
because no $x_0$ appears on the right-hand side.

For \eqref{eq:tilt-second-moment-supp-bound}, decompose the RHS of
\eqref{eq:tilt-second-moment-supp} as
\[
  \underbrace{\frac{1}{n}\sum_{i=1}^n\sum_{b=1}^K T_b(X_i)\,\|X_i-\mu_b\|^2}_{=:\,(\mathrm{I})}
  \;+\;
  \underbrace{\frac{1}{n}\sum_{i=1}^n\sum_{b=1}^K T_b(X_i)\operatorname{tr}\Sigma_b}_{=:\,(\mathrm{II})}.
\]
Term (I) is the Sinkhorn-EMS transport cost between $\hat\rho_{1,n}$
and the $K$-atom quantiser $\{w_b,\mu_b\}$ under the optimal coupling
$T^\star$. Under (B1)--(B2) and the Stage-I bandwidth schedule
$\lambda\asymp K^{-2/d}$ (assumption (S1)),
$(\mathrm{I})\le C_QK^{-2/d}$ by the Niles--Weed--Berthet quantisation
rate together with the entropic-bias bound of
\citet{altschuler2017nearlinear,mena2019sinkhorn}.

For term (II), exchange the order of summation:
$(\mathrm{II})=\sum_b\bigl(\tfrac{1}{n}\sum_i T_b(X_i)\bigr)\operatorname{tr}\Sigma_b
=\sum_b\bigl(\sum_i T^\star_{ib}\bigr)\operatorname{tr}\Sigma_b
=\sum_b w_b\operatorname{tr}\Sigma_b$ by the EMS column constraint.
By (S3), $\operatorname{tr}\Sigma_b=\operatorname{tr}(L_bL_b^\top)+d\sigma_b^2
\le rC_L^2K^{-2/d}+dC_*^2K^{-2/d}=(rC_L^2+dC_*^2)K^{-2/d}$. Since
$\sum_b w_b=1$, $(\mathrm{II})\le(rC_L^2+dC_*^2)K^{-2/d}$.

Combining the two bounds gives
\eqref{eq:tilt-second-moment-supp-bound} with constant
$C_\star:=C_{\mathrm{Sink}}+rC_L^2+dC_*^2$ depending only on
$(d,r,C_0,c_*,C_*,C_L)$. The uniformity in $x_0$ is preserved
throughout.
\end{proof}

\begin{proposition}[Independent coupling: exact second moment and modal lower bound]
\label{prop:indep-lower-supp}
Fix $C=(X_t,t)$. Let $V_1\sim\pi(\cdot\mid C)$ have finite second
moment, and let $V_0\sim\mathcal{N}(0,I_d)$ be independent of $V_1$
conditional on $C$. Define
\[
  \mu_\pi(C):=\E[V_1\mid C],
  \qquad
  \Sigma_\pi(C):=\operatorname{Cov}(V_1\mid C).
\]
Then
\begin{equation}
\label{eq:indep-lower}
  \E\big[\|V_1-V_0\|^2\,\big|\,C\big]
  \;=\; \operatorname{tr}\Sigma_\pi(C) + \|\mu_\pi(C)\|^2 + d.
\end{equation}

Moreover, suppose that under the same fixed condition $C$, the
conditional law of $V_1$ admits a latent mixture representation with
component label $Z\in\{1,\ldots,K\}$, weights
$\gamma_b(C):=\Pr(Z=b\mid C)$, and conditional component means
$m_b^\pi(C):=\E[V_1\mid Z=b,C]$. Let
\[
  D_\pi(C):=\max_{b,b'}\|m_b^\pi(C)-m_{b'}^\pi(C)\|.
\]
Choose a pair $(b_-,b_+)$ attaining this maximum and define
$c_\gamma(C):=\gamma_{b_-}(C)\gamma_{b_+}(C)$. Then
\begin{equation}
\label{eq:modal-var-lower}
  \operatorname{tr}\Sigma_\pi(C)\ge c_\gamma(C)D_\pi^2(C).
\end{equation}
Consequently,
\begin{equation}
\label{eq:indep-lower-cons}
  \E\big[\|V_1-V_0\|^2\,\big|\,C\big]
  \;\ge\; c_\gamma(C)D_\pi^2(C)+d.
\end{equation}
If all components have weights at least $\gamma_{\min}(C)>0$, then
$c_\gamma(C)\ge\gamma_{\min}^2(C)$. In the two-component case,
$c_\gamma(C)=\gamma_1(C)\gamma_2(C)$.
\end{proposition}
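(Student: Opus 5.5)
The identity \eqref{eq:indep-lower} comes from expanding the square conditionally on $C$. Write
\[
  \|V_1-V_0\|^2=\|V_1\|^2-2\langle V_1,V_0\rangle+\|V_0\|^2 .
\]
Conditional independence of $V_0$ and $V_1$ given $C$, together with $\E[V_0\mid C]=0$, kills the cross term. For the Gaussian source, $\E[\|V_0\|^2\mid C]=\operatorname{tr}I_d=d$. The remaining term splits by the bias--variance identity,
\[
  \E[\|V_1\|^2\mid C]=\operatorname{tr}\Sigma_\pi(C)+\|\mu_\pi(C)\|^2,
\]
and summing the three pieces gives \eqref{eq:indep-lower}. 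Only finiteness of the second moment of $\pi(\cdot\mid C)$ is needed for these expectations to exist.

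For the modal bound \eqref{eq:modal-var-lower}, apply the law of total variance with respect to the latent label $Z$:
\[
  \Sigma_\pi(C)=\E[\operatorname{Cov}(V_1\mid Z,C)\mid C]+\operatorname{Cov}(m_Z^\pi(C)\mid C)\succeq \operatorname{Cov}(m_Z^\pi(C)\mid C).
\]
Taking traces, $\operatorname{tr}\Sigma_\pi(C)\ge\sum_b\gamma_b\|m_b^\pi-\bar m\|^2$, where $\bar m=\sum_b\gamma_b m_b^\pi=\mu_\pi(C)$. The between-component dispersion has the pairwise form
\[
  \sum_b\gamma_b\|m_b^\pi-\bar m\|^2=\tfrac12\sum_{b,b'}\gamma_b\gamma_{b'}\|m_b^\pi-m_{b'}^\pi\|^2 .
\]
All terms in this double sum are nonnegative, so we can keep only the two ordered pairs $(b_-,b_+)$ and $(b_+,b_-)$. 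Their contribution is $\tfrac12\cdot 2\,\gamma_{b_-}\gamma_{b_+}D_\pi^2=c_\gamma D_\pi^2$. This pairwise identity is the only step needing care, and it is a routine expansion around $\bar m$ (or, equivalently, $\E\|Y-Y'\|^2=2\operatorname{tr}\operatorname{Cov}Y$ for i.i.d.\ copies $Y,Y'$). If $D_\pi=0$ the bound is trivial, and it also covers the case $b_-=b_+$.

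Finally, \eqref{eq:indep-lower-cons} follows by substituting \eqref{eq:modal-var-lower} into \eqref{eq:indep-lower} and dropping $\|\mu_\pi(C)\|^2\ge0$. The remaining claims are immediate. If every weight is at least $\gamma_{\min}$, then $c_\gamma=\gamma_{b_-}\gamma_{b_+}\ge\gamma_{\min}^2$. With two components, the maximizing pair is necessarily $\{1,2\}$, so $c_\gamma=\gamma_1\gamma_2$.
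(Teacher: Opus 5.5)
Your proposal is correct and follows essentially the same route as the paper: expand the square using conditional independence and $\E[V_0\mid C]=0$, then apply the law of total variance over the latent label, rewrite the between-component dispersion in pairwise form, keep only the maximizing pair, and drop $\|\mu_\pi(C)\|^2$. Your ordered double sum with the factor $\tfrac12$ is equivalent to the paper's sum over $i<j$. Your explicit treatment of the degenerate case $D_\pi=0$ (where $b_-=b_+$ may occur) is a small tidying the paper omits; the same caveat applies to your claim that the two-component maximizer is necessarily $\{1,2\}$, which needs $m_1^\pi\neq m_2^\pi$ or that choice of pair.
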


\begin{proof}
The exact identity follows by expanding the square conditional on $C$:
\[
  \E\bigl[\|V_1-V_0\|^2\mid C\bigr]
  =\E\bigl[\|V_1\|^2\mid C\bigr]
   -2\E[V_1\mid C]^\top\E[V_0\mid C]
   +\E\bigl[\|V_0\|^2\mid C\bigr].
\]
Since $V_0\sim\mathcal N(0,I_d)$ and is independent of $V_1$
conditional on $C$, $\E[V_0\mid C]=0$ and
$\E[\|V_0\|^2\mid C]=d$. Also
$\E[\|V_1\|^2\mid C]=\operatorname{tr}\Sigma_\pi(C)+\|\mu_\pi(C)\|^2$,
which proves~\eqref{eq:indep-lower}.

For the modal lower bound, apply the law of total variance conditional
on $C$:
\[
  \operatorname{tr}\Sigma_\pi(C)
  =\sum_{b=1}^K \gamma_b(C)
      \operatorname{tr}\operatorname{Cov}(V_1\mid Z=b,C)
    +\sum_{b=1}^K \gamma_b(C)
      \|m_b^\pi(C)-\bar m_\pi(C)\|^2,
\]
where $\bar m_\pi(C):=\sum_b\gamma_b(C)m_b^\pi(C)$. Dropping the
nonnegative within-component term gives
\[
  \operatorname{tr}\Sigma_\pi(C)
  \ge
  \sum_{b=1}^K \gamma_b(C)
      \|m_b^\pi(C)-\bar m_\pi(C)\|^2.
\]
The between-component term has the pairwise identity
\[
  \sum_{b=1}^K \gamma_b(C)
      \|m_b^\pi(C)-\bar m_\pi(C)\|^2
  =
  \sum_{1\le i<j\le K}
      \gamma_i(C)\gamma_j(C)
      \|m_i^\pi(C)-m_j^\pi(C)\|^2.
\]
Keeping only the pair $(b_-,b_+)$ that realizes $D_\pi(C)$ proves
\eqref{eq:modal-var-lower}. Combining this with
\eqref{eq:indep-lower} and dropping the nonnegative
$\|\mu_\pi(C)\|^2$ proves~\eqref{eq:indep-lower-cons}.
\end{proof}

\begin{remark}[Relation to target-space modes]
The modal lower bound in Proposition~\ref{prop:indep-lower-supp} is a
statement about the conditional velocity law $\pi(\cdot\mid C)$, not
merely about the global target distribution $\rho_1$. At $t=0$, if
$V_1=Y-X_0$ with $Y\sim\rho_1$, target-space mode centers are shifted
by the same vector $-X_0$, so their pairwise distances are preserved.
For general $t$, however, the posterior weights and component means
must be checked for the particular conditional model.
\end{remark}

{\begin{remark}[Comparison with the independent Gaussian source]
The Sinkhorn-anchored sampler and the independent Gaussian source have
qualitatively different conditional second moments. For the
Sinkhorn-anchored sampler,
Proposition~\ref{prop:tilt-coupling-supp} gives
\[
  \E[\|V_1-V_0\|^2\mid C]_{\mathrm{tilt}}
  \;\le\;
  \kappa(C)\,\bigl(R_\star^2(C)+T_{\max}(C)\bigr) ,
\]
For the independent Gaussian source,
Proposition~\ref{prop:indep-lower-supp} gives
\[
  \E[\|V_1-V_0\|^2\mid C]_{\mathrm{Gauss}}
  =
  \operatorname{tr}\Sigma_\pi(C)
  +
  \|\mu_\pi(C)\|^2
  +
  d .
\]
The common term $\operatorname{tr}\Sigma_\pi(C)$ is unavoidable because
it is the intrinsic conditional spread of the target velocity law.
Thus, at the level of this training second moment, the surrogate source
is preferable whenever
\[
  \operatorname{tr}\Sigma_{\mathrm{sur}}(C)
  +
  \|\mu_\pi(C)-\mu_{\mathrm{sur}}(C)\|^2
  \ll
  \|\mu_\pi(C)\|^2+d .
\]
This condition says that the surrogate conditional velocity law is
closer to the target conditional velocity law in mean and covariance
than the isotropic Gaussian source is. This comparison is separate from
the boundary transport-task reduction in Theorem~\ref{thm:reduction},
which depends only on the exact source marginal at $t=0$.
\end{remark}


{\begin{remark}[When does the direct surrogate source reduce the training target?]
\label{rem:direct-surrogate-ratio}
Let
\[
  S_\pi(C):=\operatorname{tr}\Sigma_\pi(C),
  \qquad
  M_{\mathrm{sur}}(C):=
  \|\mu_\pi(C)-\mu_{\mathrm{sur}}(C)\|^2 .
\]
Also decompose the surrogate conditional covariance as
\[
  \operatorname{tr}\Sigma_{\mathrm{sur}}(C)
  =
  A_{\mathrm{sur}}(C)+B_{\mathrm{sur}}(C),
\]
where
\[
  A_{\mathrm{sur}}(C)
  :=
  \sum_{b=1}^K
  \gamma_b(C)\operatorname{tr}\Lambda_b(t)
\]
is the average within-component covariance scale, and
\[
  B_{\mathrm{sur}}(C)
  :=
  \sum_{b=1}^K
  \gamma_b(C)
  \|m_b(C)-\mu_{\mathrm{sur}}(C)\|^2
  =
  \frac12
  \sum_{b,b'=1}^K
  \gamma_b(C)\gamma_{b'}(C)
  \|m_b(C)-m_{b'}(C)\|^2
\]
is the weighted between-component spread of the surrogate source.

For the Sinkhorn-anchored sampler,
Proposition~\ref{prop:tilt-coupling-supp} gives the within-mode bound
$\E[\|V_1-V_0\|^2\mid C]_{\mathrm{tilt}}\le\kappa(C)(R_\star^2(C)+T_{\max}(C))$.
For the independent Gaussian source,
Proposition~\ref{prop:indep-lower-supp} gives
\[
  \E[\|V_1-V_0\|^2\mid C]_{\mathrm{Gauss}}
  =
  S_\pi(C)
  +
  \|\mu_\pi(C)\|^2+d .
\]
Therefore the total second-moment ratio is
\[
  \mathcal R_{\mathrm{tot}}(C)
  :=
  \frac{
  \E[\|V_1-V_0\|^2\mid C]_{\mathrm{sur}}
  }{
  \E[\|V_1-V_0\|^2\mid C]_{\mathrm{Gauss}}
  }
  =
  \frac{
  S_\pi(C)
  +
  A_{\mathrm{sur}}(C)
  +
  B_{\mathrm{sur}}(C)
  +
  M_{\mathrm{sur}}(C)
  }{
  S_\pi(C)+\|\mu_\pi(C)\|^2+d
  } .
\]
Thus the direct surrogate source has a smaller total training target
than the independent Gaussian source whenever
\[
  A_{\mathrm{sur}}(C)
  +
  B_{\mathrm{sur}}(C)
  +
  M_{\mathrm{sur}}(C)
  <
  \|\mu_\pi(C)\|^2+d .
\]
Equivalently, the surrogate source is beneficial when its
within-component covariance, weighted between-component spread, and
conditional mean mismatch are jointly smaller than the source penalty
incurred by the isotropic Gaussian source.

It is also useful to isolate the reducible, source-dependent part of
the training target. Since the intrinsic conditional spread
$S_\pi(C)$ appears in both couplings, define the excess ratio
\[
  \mathcal R_{\mathrm{ex}}(C)
  :=
  \frac{
  \E[\|V_1-V_0\|^2\mid C]_{\mathrm{sur}}-S_\pi(C)
  }{
  \E[\|V_1-V_0\|^2\mid C]_{\mathrm{Gauss}}-S_\pi(C)
  }
  =
  \frac{
  A_{\mathrm{sur}}(C)
  +
  B_{\mathrm{sur}}(C)
  +
  M_{\mathrm{sur}}(C)
  }{
  \|\mu_\pi(C)\|^2+d
  } .
\]
This excess ratio becomes small when the surrogate conditional source
is concentrated around the true conditional velocity law:
\[
  A_{\mathrm{sur}}(C)\ll d+\|\mu_\pi(C)\|^2,\qquad
  B_{\mathrm{sur}}(C)\ll d+\|\mu_\pi(C)\|^2,\qquad
  M_{\mathrm{sur}}(C)\ll d+\|\mu_\pi(C)\|^2 .
\]
In words, CCVFM reduces the source-induced training variance when the
GMM surrogate has small component covariances, when its conditional
mixture weights are sufficiently localized so that the weighted
between-component spread is small, and when its conditional mean
matches the target conditional mean. 

\end{remark}}


\section{Extended Algorithm Pseudocode}

\subsection{Stage I: Sinkhorn coreset + low-rank covariance fit}

\begin{algorithm}[H]
\caption{Stage I: Sinkhorn coreset $+$ low-rank covariance fit.}
\label{alg:stage1}
\begin{algorithmic}[1]
\Require data $\{x_i\}_{i=1}^n\in\R^d$, coreset size $K$, Sinkhorn regularizer $\lambda$, rank $r$, Sinkhorn iters $T_{\mathrm{sk}}$.
\Ensure GMM $\core=\sum_k w_k\,\mathcal{N}(\mu_k,L_kL_k^\top+\sigma_k^2 I_d)$.
\State Initialize atoms $\{\mu_k\}$ by uniform sampling from $\{x_i\}$; weights $w_k\gets 1/K$.
\For{$t=1,\ldots,T_{\mathrm{sk}}$}
  \State \textbf{(E-step / Coupling):} $T_{ij}\propto w_j\exp(-\|x_i-\mu_j\|^2/\lambda)$; normalize rows so $\sum_j T_{ij}=1/n$ \Comment{{closed-form $T$-update from Eq.~(2) of the main paper}}
  \State \textbf{(M-step / Anchor):} $w_j\gets \sum_i T_{ij}$, \; $\mu_j\gets (1/w_j)\sum_i T_{ij}\,x_i$
\EndFor
\For{$k=1,\ldots,K$}
  \State Form weighted residuals $r_i\gets \sqrt{T_{ik}/w_k}\,(x_i-\mu_k)$
  \State Truncated SVD $R\approx U_k\,S_k\,V_k^\top$ of rank $r$; set $L_k\gets V_k\,S_k\in\R^{d\times r}$
  \State $\sigma_k^2\gets (\|R\|_F^2-\sum_{j=1}^r s_{kj}^2)/(d-r)$
\EndFor
\State \Return $\{(w_k,\mu_k,L_k,\sigma_k)\}_{k=1}^K$.
\end{algorithmic}
\end{algorithm}

\subsection{Stage III: direct surrogate-source training}

\begin{algorithm}[H]
\caption{Stage III training with direct surrogate-source sampling
(at $t=0$; general $t$ analogous).}
\label{alg:stage3}
\begin{algorithmic}[1]
\Require data $\{x_i\}$, GMM state $\core$, correction net $f_\theta$, iters $T$, batch size $B$.
\Ensure trained parameters $\theta$.
\For{$\mathrm{step}=1,\ldots,T$}
  \State Sample batch $x_1\in\R^{B\times d}$ from data; $x_0\sim\mathcal{N}(0,I_d)^B$
  \State Compute $v_{\mathrm{true}}\gets x_1-x_0$
  \State Sample mixture label $b_i^\star\sim\mathrm{Cat}(w_1,\ldots,w_K)$
       \Comment{prior weights of $\core$, no $V_1$ dependence}
\State Sample $v_0^i\sim\mathcal{N}(\mu_{b_i^\star}-x_0^i,\,\Sigma_{b_i^\star})$
       \Comment{direct surrogate-source: $v_0^i\mid x_0^i\sim\pisur(\cdot\mid x_0^i,0)$}
  \State Sample $\tau_i\sim\mathrm{Unif}[0,1]$; form $v_\tau^i\gets (1-\tau_i)v_0^i+\tau_i v_{\mathrm{true}}^i$
  \State Optimizer step on loss $\frac{1}{B}\sum_i\|f_\theta(v_\tau^i,\tau_i,x_0^i,0)-(v_{\mathrm{true}}^i-v_0^i)\|^2$
\EndFor
\State \Return $\theta$.
\end{algorithmic}
\end{algorithm}

\subsection{Inference with $J,L$}

\begin{algorithm}[H]
\caption{CCVFM nested sampler ($J$ outer steps $\times\,L$ inner correction steps).}
\label{alg:inference}
\begin{algorithmic}[1]
\Require source $\rho_0=\mathcal{N}(0,I_d)$, GMM $\core$, trained net $f_\theta$, grid $0=t_0<\cdots<t_J=1$, inner steps $L$.
\Ensure generated sample $\hat X_1$.
\State Draw $X_0\sim\mathcal{N}(0,I_d)$; \; set $Z\gets X_0$
\For{$j=0,\ldots,J-1$}
  \State Draw $V\sim\pisur(\cdot\mid Z,t_j)$ \Comment{closed-form categorical $+$ Gaussian}
  \For{$i=0,\ldots,L-1$}
    \State $V\gets V+(1/L)\,f_\theta(V,\,i/L,\,Z,\,t_j)$ \Comment{Euler step of inner correction flow}
  \EndFor
  \State $Z\gets Z+(t_{j+1}-t_j)\,V$
\EndFor
\State \Return $\hat X_1\gets Z$.
\end{algorithmic}
\end{algorithm}

For the $J=1$ generator (\S\ref{sec:generation} of the main paper), the outer loop runs once at $t_0=0$ and emits $\hat X_1=X_0+\hat V$ directly.

\section{Training hyperparameters}

\begin{table}[H]
\centering
\small
\caption{Configuration used for each headline result. All batch sizes
are per-GPU on 1$\times$ NVIDIA GH200 120 GB. MNIST column is the
Configuration used for the main-paper headline numbers.}
\label{tab:hparams}
\setlength{\tabcolsep}{4pt}
\begin{tabular}{lcccc}
\toprule
 & MNIST & CIFAR-10 & ImageNet-32 & {CelebA-HQ 256} \\
\midrule
$d$ & $784$ & $3072$ & $3072$ & {$2048$ (DC-AE f32c32 latent)} \\
$n$ (train) & $60{,}000$ & $50{,}000$ & $1{,}281{,}167$ & {$\sim 28{,}000$} \\
$K$ (coreset) & $2000$ & $10000$ & $5000$ & {$10000$} \\
$r$ (rank) & $50$ & $80$ & $80$ & {$128$} \\
Sinkhorn $\lambda$ & $1.5$ & $2.0$ & $0.5$ & {$1.0$} \\
Sinkhorn iters & $100$ & $60$ & $100$ & {$80$} \\
{Correction net} & U-Net & dual-branch U-Net & dual-branch U-Net & {DiT-L} \\
U-Net arch & 128$\to$256$\to$512 & 128$\to$256$\to$512 & idem & {n/a (DiT-L)} \\
\# params & $\sim$21M & $\sim$35M & $\sim$35M & {$\sim$458M} \\
Stage III iters & $200{,}000$ & $100{,}000$ & $400{,}000$ & {$400{,}000$} \\
Batch size & $128$ & $128$ & $256$ & {$64$} \\
Optimizer & Adam & Adam & Adam & {Adam} \\
Learning rate & $2\times 10^{-4}$ & $2\times 10^{-4}$ & $1.4\times 10^{-4}$ & {$1\times 10^{-4}$} \\
EMA decay & $0.9999$ & $0.9999$ & $0.9999$ & {$0.9999$} \\
\bottomrule
\end{tabular}
\end{table}

\section{Additional Ablations}
\label{sec:ablations-main-moved}

\subsection{Ablations summary}
\label{sec:ablations-summary}

On MNIST the $L$-sweep (Table~1 of the main paper) improves
monotonically up to $L{=}50$; on CIFAR-10 and ImageNet-32 the sweet
spot is $L{=}50$ with $J{=}1$ (full sweeps in
\S\ref{sec:ablation-cifar-img32}). 
The CCVFM-vs-HRF2 gap reflects replacing the isotropic Gaussian source
$\mathcal{N}(0,I)$ by the data-informed surrogate source
$\pisur(\cdot\mid C)$ (sampled directly through its mixture weights),
together with differences in $K$ and budget.
EMA contributes $<0.2$ FID on MNIST, so the improvement as $K$ scales
is driven by the coreset budget and training iterations rather than by
weight averaging.

\paragraph{CIFAR-10 coreset-size ablation.}
Doubling $K$ from $5000$ to $10000$ improves FID$_{50k}$ at $50$ NFE
from $7.78$ to $6.35$ with larger gains at low NFE, consistent with
the qualitative prediction of the Stage-I compression assumption when
interpreted with an empirical effective dimension $d_{\mathrm{eff}}$.
This effective-dimensional scaling is an empirical interpretation,
not a theorem proved for the deployed Sinkhorn/GMM routine.

\subsection{MNIST: coreset-size and training-budget scaling}

\begin{table}[H]
\centering
\small
\setlength{\tabcolsep}{5pt}
\caption{MNIST scaling of FID$_{50k}$ as the coreset budget $K$, the
correction-training iterations, and the U-Net base channel width
grow simultaneously. ``U-Net width'' is the number of channels at the
first (widest) resolution block; channels at deeper levels are
multiples $1,2,4$ of this base. All rows use the same three-stage CCVFM pipeline with direct
surrogate-source sampling and EMA decay $0.9999$.
}

\label{tab:mnist-scaling}
\begin{tabular}{cccccc}
\toprule
$K$ & iters & U-Net width & $L$ & NFE & FID$_{50k}\downarrow$ \\
\midrule
$500$  & 40k  & 64  & 10 & 11 & 3.79 \\
$1000$ & 80k  & 64  & 10 & 11 & 1.93 \\
$2000$ & 200k & 128 & 10 & 11 & 2.88 \\
$2000$ & 200k & 128 & 20 & 21 & \textbf{1.09} \\
$2000$ & 200k & 128 & 50 & 51 & \textbf{0.75} \\
\bottomrule
\end{tabular}
\end{table}

The improvement from $3.79\to 1.93\to 1.09$ at $\le 21$ NFE as $K$ and
iteration budget scale up is qualitatively consistent with the Stage-I
compression assumption. Larger $K$ also extends the useful
inner-step horizon: at $K{=}500$, $L{=}10$ already outperforms
$L{=}20$; at $K{=}2000$, FID improves monotonically up to $L{=}50$.

\subsection{CIFAR-10 and ImageNet-32 correction-step sweep}
\label{sec:ablation-cifar-img32}

\begin{table}[H]
\centering
\small
\caption{FID$_{50k}$ across correction-integration configurations
$(J,L)$. The CIFAR-10 block reports the headline configuration
$K{=}10000$ and the smaller $K{=}5000$ configuration used for the
$K$-ablation discussion in the main paper.}
\label{tab:nfe-sweep}
\begin{tabular}{lcccc}
\toprule
Dataset & $(J,L)$ & FID$_{5k}$ & FID$_{10k}$ & FID$_{50k}$ \\
\midrule
CIFAR-10 ($K{=}10000$) & $(1,10)$   & 17.66 & 13.31 & 9.66 \\
CIFAR-10 ($K{=}10000$) & $(1,20)$   & 15.34 & 10.94 & 7.24 \\
CIFAR-10 ($K{=}10000$) & $(1,50)$   & 14.40 & 10.06 & \textbf{6.35} \\
CIFAR-10 ($K{=}10000$) & $(2,50)$   & 18.09 & 14.11 & 10.40 \\
\midrule
CIFAR-10 ($K{=}5000$) & $(1,10)$   & 21.27 & 16.71 & 13.18 \\
CIFAR-10 ($K{=}5000$) & $(1,20)$   & 17.53 & 12.93 & 9.28 \\
CIFAR-10 ($K{=}5000$) & $(1,50)$   & 16.04 & 11.48 & 7.78 \\
CIFAR-10 ($K{=}5000$) & $(2,50)$   & 21.76 & 18.10 & 14.84 \\
\midrule
ImageNet-32 & $(1,10)$  & 22.21 & 16.84 & 12.55 \\
ImageNet-32 & $(1,20)$  & 19.15 & 13.79 & 9.51 \\
ImageNet-32 & $(1,50)$  & 18.39 & 13.04 & \textbf{8.76} \\
ImageNet-32 & $(2,50)$  & 18.44 & 13.22 & 8.75 \\
ImageNet-32 & $(2,250)$ & 18.80 & 13.79 & 9.39 \\
\bottomrule
\end{tabular}
\end{table}

\subsection{MNIST: effect of EMA weights}

On MNIST with $K=2000$ and $L{=}10$, raw weights vs.\ EMA at decay
$0.9999$ yield FID$_{50k}=3.03$ vs.\ $2.88$ (a $0.15$ FID delta).
Weight averaging therefore contributes a small but measurable
improvement; the dominant gains at this scale come from the coreset
budget $K$ and the correction-training iteration count.

\subsection{Source-distribution ablation}
\label{sec:coupling-ablation}


The choice of training source distribution is the central difference
between CCVFM and HRF2: CCVFM trains the correction network with
source $\pisur(\cdot\mid C)$, whereas HRF2 uses the isotropic Gaussian
$\mathcal{N}(0,I_d)$. An isolated, matched-recipe ablation that
replaces $\pisur$ by $\mathcal{N}(0,I_d)$ while keeping $K$, the
architecture, and the training schedule identical is left to future
work.

In lieu of the matched retrain,
Proposition~\ref{prop:tilt-coupling-supp} gives a within-mode bound
on the conditional second moment of the training residual under the
Sinkhorn-anchored sampler:
$\E[\|V_1-V_0\|^2\mid C]_{\mathrm{tilt}}\le
\kappa(C)(R_\star^2(C)+T_{\max}(C))$.
The corresponding identity for the independent Gaussian source is
$\operatorname{tr}\Sigma_\pi(C)+\|\mu_\pi(C)\|^2+d$, lower-bounded by
$c_\gamma(C)D_\pi^2(C)+d$
(Proposition~\ref{prop:indep-lower-supp}). The Sinkhorn-anchored source
has a strictly smaller training-target scale whenever within-mode spread
($R_\star^2(C)+T_{\max}(C)$) is small relative to the cross-mode
diameter ($D_\pi^2(C)$), i.e. when the surrogate is concentrated near
the modes of the true conditional velocity law.

\section{Toy 2D results}

We include 2D results to complement the image benchmarks. The
{quantitative comparison in Table~\ref{tab:toy-supp}} is between
(i)~standard mean-field flow matching (one-step and multi-step),
(ii)~the raw GMM samples from Stage~I, (iii)~the closed-form
Stage~II one-step surrogate generator, {and (iv)~the Stage~III
corrected generator at three inner-step budgets $L\in\{1,4,8\}$ under
the practical Sinkhorn-anchored coupling of~\S\ref{sec:coupling}.
Figure~\ref{fig:toy-grids-supp} visualises (i)--(iii) only; the
Stage~III generator is reported numerically since at these budgets its
samples are visually indistinguishable from Stage~II's
(see the discussion paragraph following Table~\ref{tab:toy-supp}).}

\begin{figure}[H]
\centering
\begin{tabular}{c}
\includegraphics[width=0.95\linewidth]{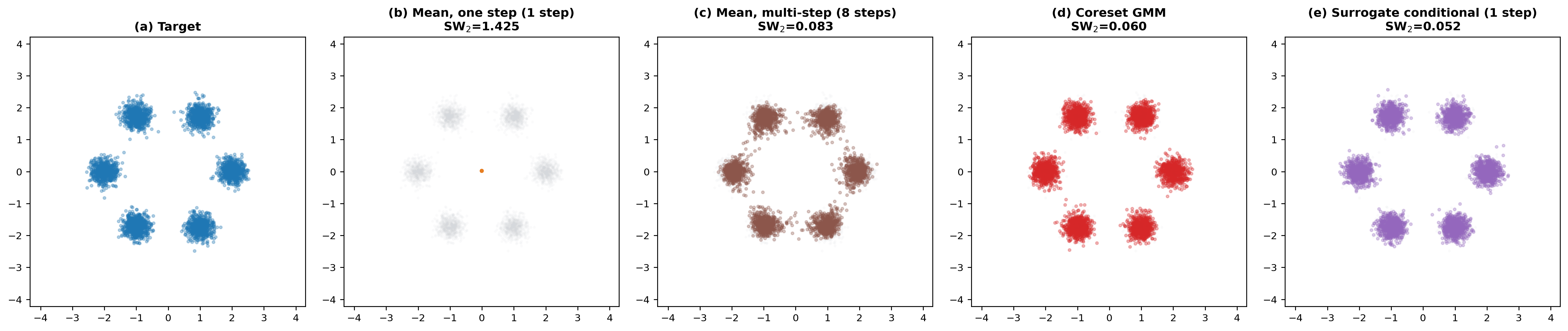} \\
\small (a) ring-6 (2D) \\[0.4em]
\includegraphics[width=0.95\linewidth]{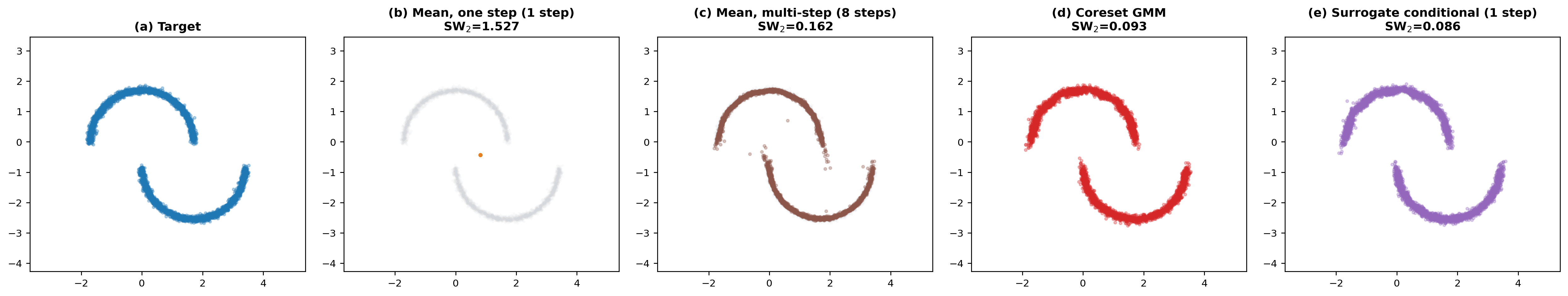} \\
\small (b) two-moons (2D) \\[0.4em]
\includegraphics[width=0.95\linewidth]{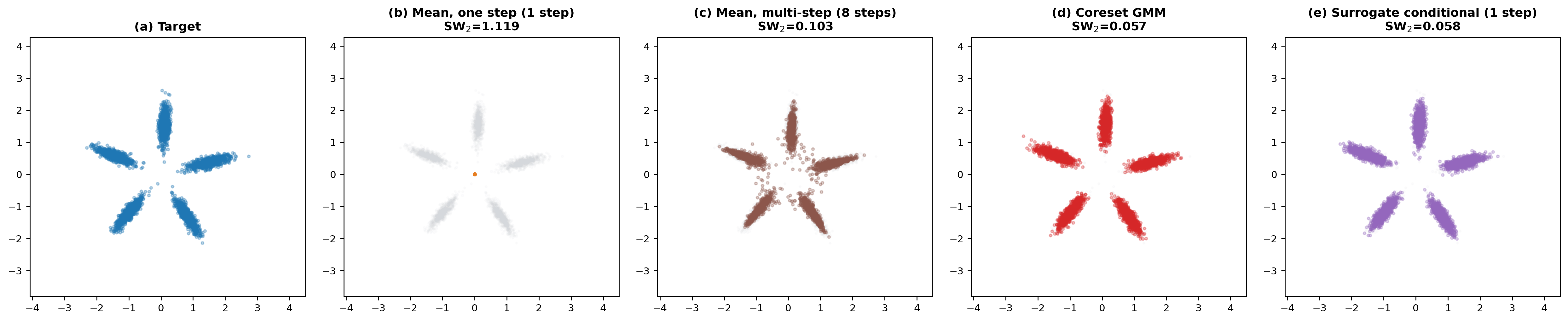} \\
\small (c) pinwheel (2D) \\[0.4em]
\includegraphics[width=0.95\linewidth]{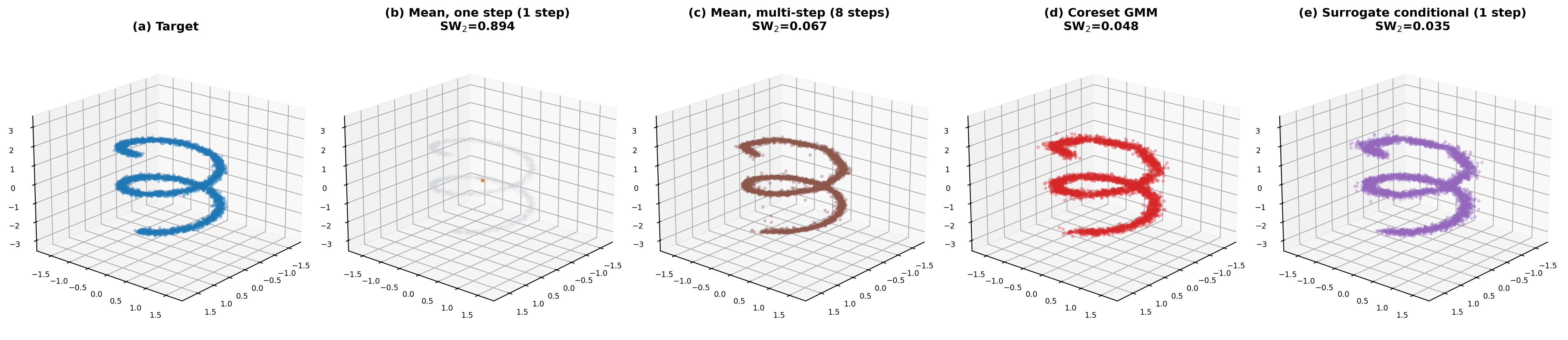} \\
\small (d) helix (3D) \\
\end{tabular}
\caption{Toy benchmarks across the four targets in
Table~\ref{tab:toy-supp}. Each panel compares (columns,
left-to-right): (a)~ground-truth samples, (b)~rectified-flow
mean-field $1$-step output, (c)~rectified-flow mean-field
multi-step output ($8$~steps), (d)~Stage~I Sinkhorn-coreset
samples, (e)~\textbf{CCVFM Stage~II closed-form generator}
at $1$~NFE, and (f)~\textbf{CCVFM Stage~III correction} at
$L{=}8$ inner steps with the Sinkhorn-anchored coupling of
\S\ref{sec:coupling}. At $1$~NFE with \emph{no learned neural
network}, CCVFM Stage~II already matches the multi-step
rectified-flow quality on every dataset and avoids the
mode-dropping that plagues $1$-step mean-field methods (ring-6,
pinwheel, helix); the Stage~III correction at $L{=}8$ is
visually indistinguishable from Stage~II, consistent with the
within-mode-only conditional residual bound of
Theorem~\ref{thm:posterior-coupling}(ii).}
\label{fig:toy-grids-supp}
\end{figure}

\begin{figure}[H]
\centering
\includegraphics[width=0.92\linewidth]{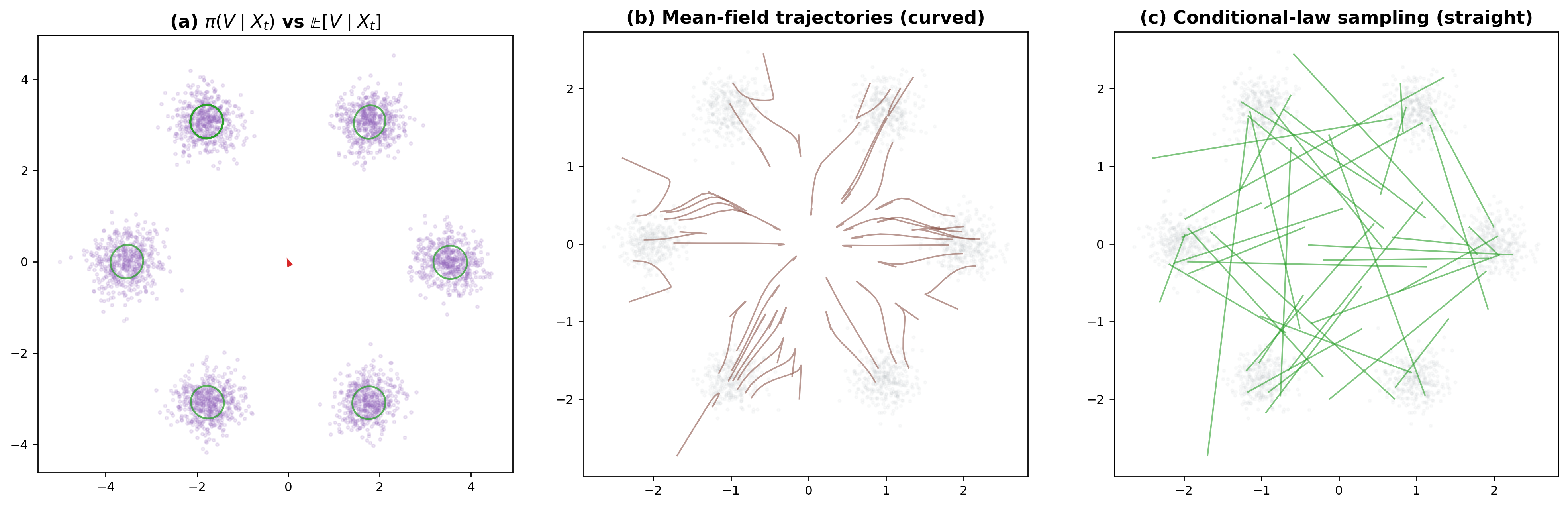}
\caption{Conditional-velocity advantage on ring-6. Left: the true
conditional velocity law $\pi(v\mid x_t,t)$ at a specific
intermediate $(x_t,t)$ is multimodal (six modes, one per ring
cluster). Middle: rectified-flow targets the conditional
\emph{mean}, which collapses the six modes to a single direction
pointing through the center of the ring, a \emph{density-valley}
output. Right: CCVFM's closed-form GMM $\tilde\pi(v\mid x_t,t)$
recovers the six-mode structure exactly, so sampling from
$\tilde\pi$ yields a valid single-step velocity that lands on one
of the ring modes. This is the 2D visualization of the
mean-collapse critique in~\S 1 of the main paper.}
\label{fig:toy-cond-adv-supp}
\end{figure}

\begin{figure}[H]
\centering
\includegraphics[width=0.85\linewidth]{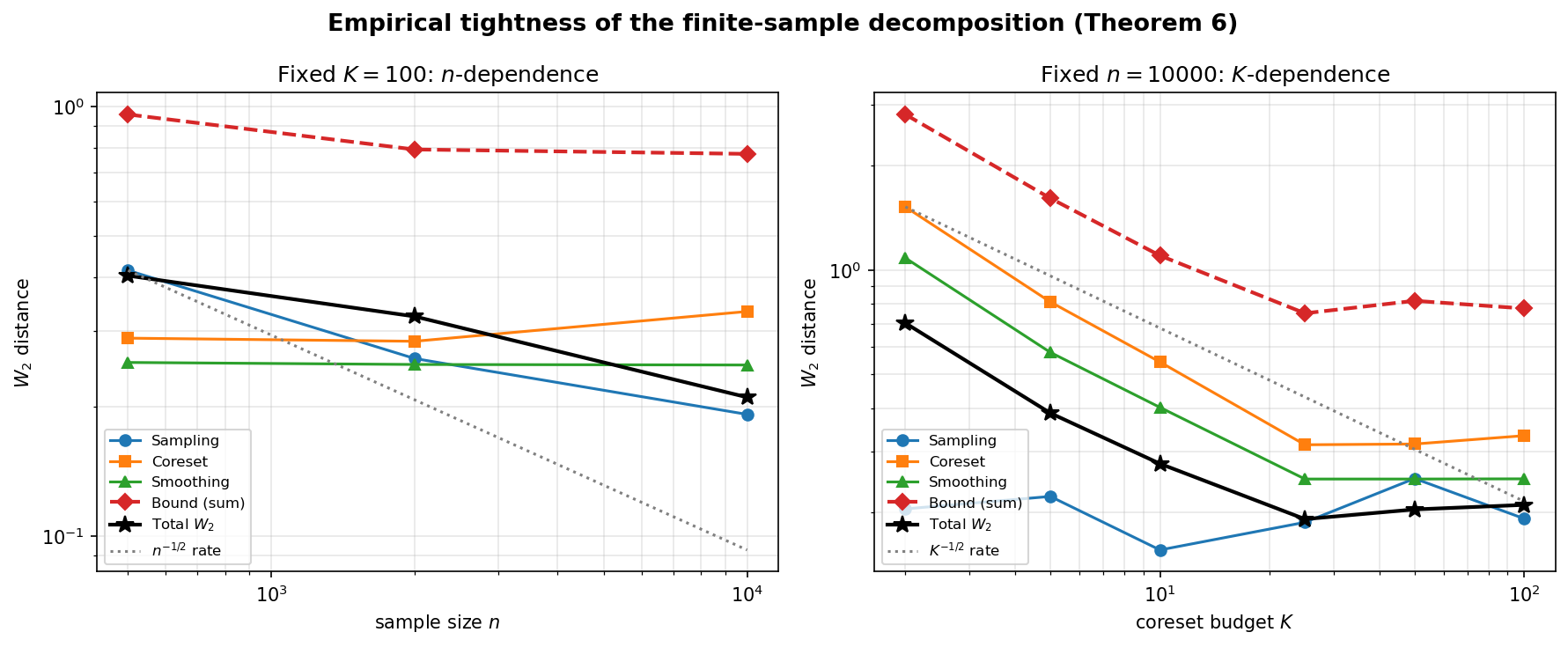}
\caption{Empirical illustration of the surrogate-gap decomposition on
the ring-6 target. For each $(n,K)$ pair we plot the three
bound terms from the generation analysis: sampling
($W_2(\rho_1,\hat\rho_{n})$), coreset
($\mathcal{E}_{\mathrm{cs}}^{1/2}=W_2(\hat\rho_n,\sum w_k\delta_{\mu_k})$)
and smoothing ($\sigma_{\mathrm{gmm}}$), alongside the measured
generation error $W_2(\Law(\hat X_1),\rho_1)$. Across $n\in\{500,2000,10000\}$
and $K\in\{2,5,10,25,50,100\}$, the measured error tracks the
empirical upper bound within a tightness ratio of $0.2$--$0.4$,
and the coreset-compression term dominates for small $K$ while the
sampling term dominates once $K\gtrsim n^{1/2}$, which is the regime
where the coreset shortcut becomes free.}
\label{fig:toy-theorem-supp}
\end{figure}

\begin{table}[H]
\centering
\small
\caption{Sliced $W_2^2$ and mode-balance TV on toy 2D datasets
(ring-6, moons, pinwheel, helix-3D). Lower is better.
{Metrics: SW$_2$ is the sliced quadratic Wasserstein distance,
$\frac{1}{200}\sum_{u\in\mathcal{U}}W_2^2(u_\#\hat\rho,u_\#\rho_1)$
over $200$ uniform projections $u\in\mathcal{U}$ on $\mathbb{S}^{d-1}$.
``mode-TV'' is the total-variation distance between the empirical mode-occupancy
histogram of generated samples and the uniform mode prior, where each sample is
assigned to its closest ground-truth mode. ``helix-dist'' is the mean point-wise distance
of generated samples to the analytic helix curve.}
}
\label{tab:toy-supp}
\setlength{\tabcolsep}{3pt}
\begin{tabular}{llcccc}
\toprule
Dataset & Method & NFE & SW$_2$ & extra metric \\
\midrule
\multirow{7}{*}{ring-6}
 & Mean-field 1-step          & 1 & 1.426 & 0.833 (mode-TV) \\
 & Mean-field multi-step      & 8 & 0.099 & 0.018 (mode-TV) \\
 & Stage I samples            & 0 & 0.085 & 0.017 (mode-TV) \\
 & \textbf{CCVFM Stage II}    & 1 & \textbf{0.062} & \textbf{0.017 (mode-TV)} \\
 & CCVFM Stage III, $L{=}1$   & 2 & 0.146 & 0.021 (mode-TV) \\
 & CCVFM Stage III, $L{=}4$   & 5 & 0.095 & 0.021 (mode-TV) \\
 & CCVFM Stage III, $L{=}8$   & 9 & 0.090 & 0.021 (mode-TV) \\
\midrule
\multirow{7}{*}{moons}
 & Mean-field 1-step          & 1 & 1.548 & --- \\
 & Mean-field multi-step      & 8 & 0.150 & --- \\
 & Stage I samples            & 0 & 0.047 & --- \\
 & \textbf{CCVFM Stage II}    & 1 & \textbf{0.062} & --- \\
 & CCVFM Stage III, $L{=}1$   & 2 & 0.085 & --- \\
 & CCVFM Stage III, $L{=}4$   & 5 & 0.068 & --- \\
 & CCVFM Stage III, $L{=}8$   & 9 & 0.068 & --- \\
\midrule
\multirow{7}{*}{pinwheel}
 & Mean-field 1-step          & 1 & 1.116 & 0.800 (mode-TV) \\
 & Mean-field multi-step      & 8 & 0.101 & 0.009 (mode-TV) \\
 & Stage I samples            & 0 & 0.081 & 0.018 (mode-TV) \\
 & \textbf{CCVFM Stage II}    & 1 & \textbf{0.065} & \textbf{0.011 (mode-TV)} \\
 & CCVFM Stage III, $L{=}1$   & 2 & 0.124 & 0.011 (mode-TV) \\
 & CCVFM Stage III, $L{=}4$   & 5 & 0.077 & 0.011 (mode-TV) \\
 & CCVFM Stage III, $L{=}8$   & 9 & 0.076 & 0.011 (mode-TV) \\
\midrule
\multirow{7}{*}{helix-3D}
 & Mean-field 1-step          & 1 & 0.859 & 0.999 (helix-dist) \\
 & Mean-field multi-step      & 8 & 0.084 & 0.056 (helix-dist) \\
 & Stage I samples            & 0 & 0.024 & 0.085 (helix-dist) \\
 & \textbf{CCVFM Stage II}    & 1 & \textbf{0.023} & 0.088 (helix-dist) \\
 & CCVFM Stage III, $L{=}1$   & 2 & 0.048 & 0.121 (helix-dist) \\
 & CCVFM Stage III, $L{=}4$   & 5 & 0.030 & 0.094 (helix-dist) \\
 & CCVFM Stage III, $L{=}8$   & 9 & 0.028 & 0.093 (helix-dist) \\
\bottomrule
\end{tabular}
\end{table}

{\paragraph{Stage~III sampler used in the toy comparison.}
Stage~III rows above use the practical training coupling of
\S\ref{sec:coupling}: a Sinkhorn-anchored sampling step in which
the component label $b^\star$ is drawn from
$q_b(V_1,C)\propto(\gamma_b(C)/\bar r_b(C))\,r_b(V_1,C)$ with
$r_b(V_1,C)$ the Bayes posterior under~$\pisur(\cdot\mid C)$ and
$\bar r_b(C)=\E_{V_1\sim\pi(\cdot\mid C)}[r_b(V_1,C)]$. The
$\gamma_b/\bar r_b$ tilt is the importance-sampling correction that
restores the deployment marginal $\pisur(\cdot\mid C)$ in expectation;
without the tilt, the trainer would see a $V_0$ marginal of
$\sum_b\bar r_b(C)\mathcal N(m_b,\Lambda_b)\neq\pisur(\cdot\mid C)$
whenever $\pi\neq\pisur$, and the deployed correction net would
generalise from a mismatched source. See
Theorem~\ref{thm:posterior-coupling} in the main paper for the formal
marginal preservation and conditional second-moment statements.}

{\paragraph{Why Stage~III at $L{=}1$ does not beat Stage~II on these
toys (even with the IS fix).}
For the four toy targets the Stage~I coreset
($K\in\{12,\ldots,20\}$) drives the surrogate gap
$W_2(\rho_1,\core)$ close to its sampling floor, so the closed-form
Stage~II 1-step generator already saturates the achievable
1-NFE error. The IS-corrected Stage~III at $L{=}1$ then pays the
Euler discretisation error of a single explicit step of the
correction ODE on top of an essentially-zero systematic correction,
which raises SW$_2$ from $0.062$ to $0.146$ on ring-6 and from
$0.023$ to $0.048$ on helix-3D. Increasing the inner budget already
recovers the Stage~II level: at $L{=}4$ we obtain moons~$0.068$,
pinwheel~$0.077$, helix-3D~$0.030$, and $L{=}8$ improves the
discretisation by another $\sim$\,1\%, matching Stage~II to within
the run-to-run variability of $\pm0.01$. Stage~III is therefore
neither helpful nor harmful when the surrogate gap is at its
sampling floor; the regime where it dominates is the high-dimensional
one of Tables~\ref{tab:mnist} and~\ref{tab:hd-images}, where
$W_2(\rho_1,\core)$ is non-trivial and the correction net has real
within-mode residual to absorb.}

\section{Additional sample grids}

\begin{figure}[H]
  \centering
  \begin{tabular}{ccc}
    \includegraphics[width=0.3\linewidth]{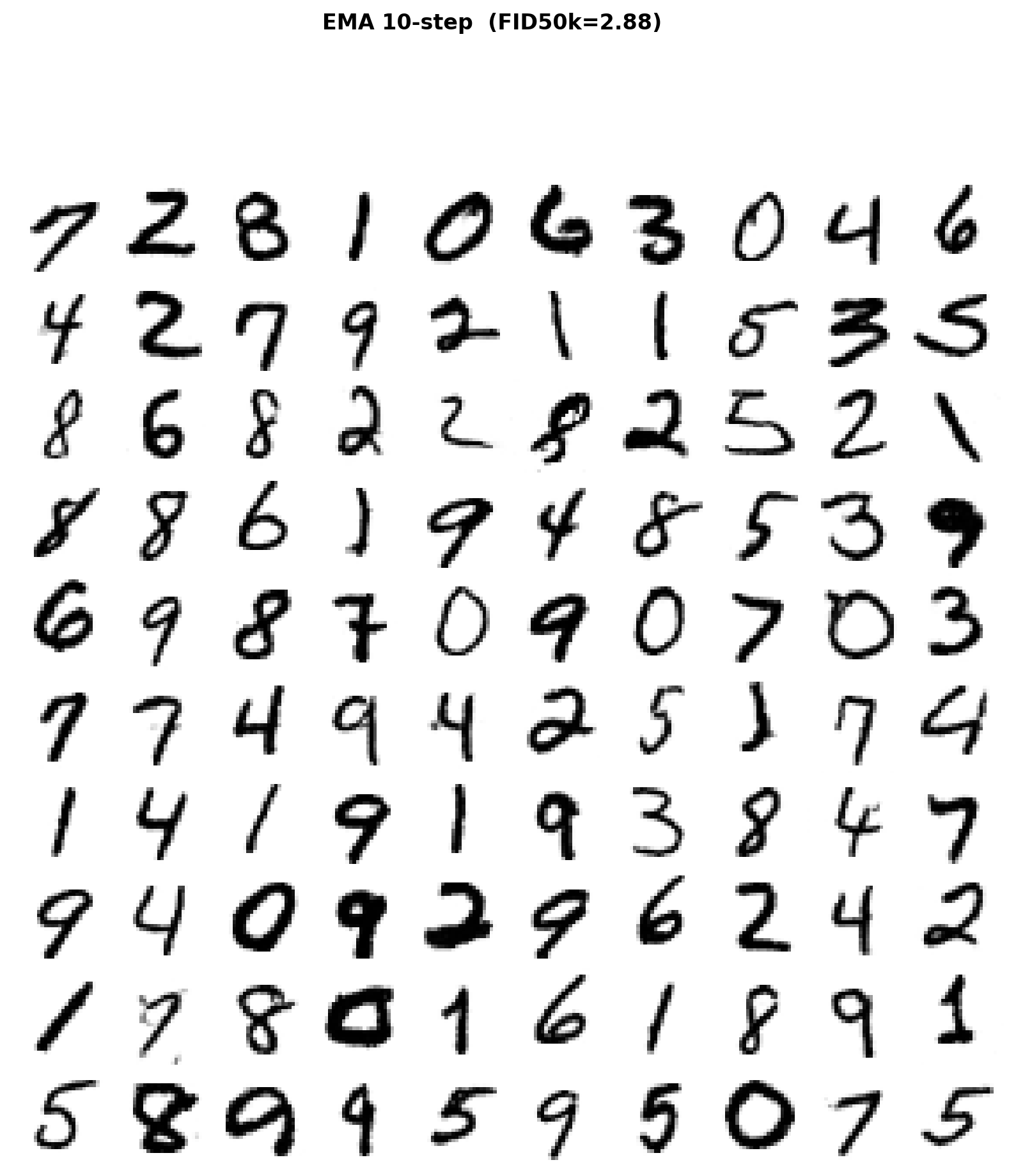} &
    \includegraphics[width=0.3\linewidth]{stage3_ema_20step_planC.png} &
    \includegraphics[width=0.3\linewidth]{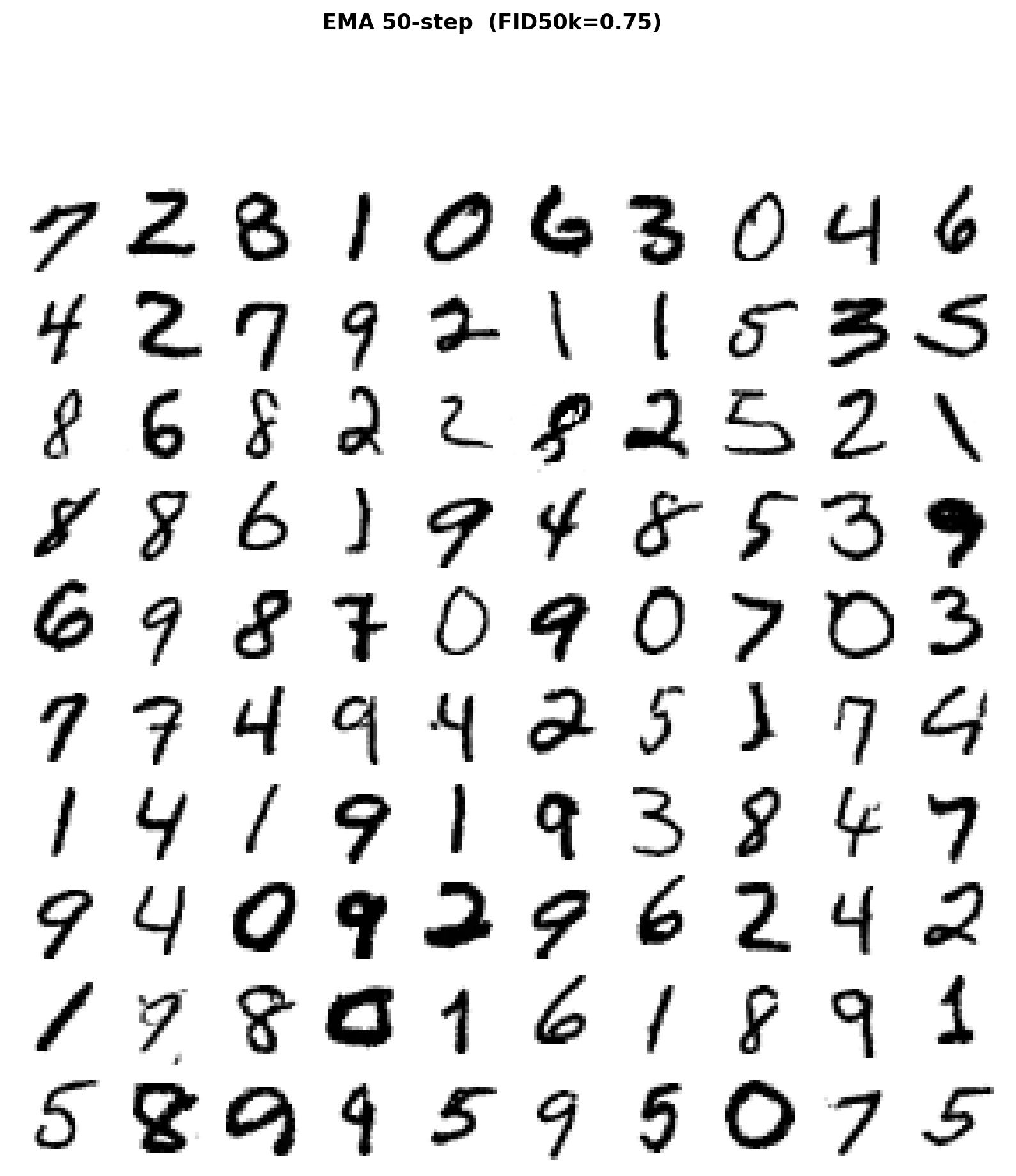} \\
    $L{=}10$ (FID$_{50k}{=}2.88$) &
    $L{=}20$ (FID$_{50k}{=}1.09$) &
    $L{=}50$ (FID$_{50k}{=}0.75$)
  \end{tabular}
  \caption{{MNIST 
  Stage III samples at
  $L\in\{10,20,50\}$ correction steps.}
  Quality sharpens from $L{=}10$ to $L{=}20$ and
  continues improving at $L{=}50$, mirroring the FID column.}
  \label{fig:mnist-grids}
\end{figure}

\begin{figure}[H]
  \centering
  \begin{tabular}{ccc}
    \includegraphics[width=0.3\linewidth]{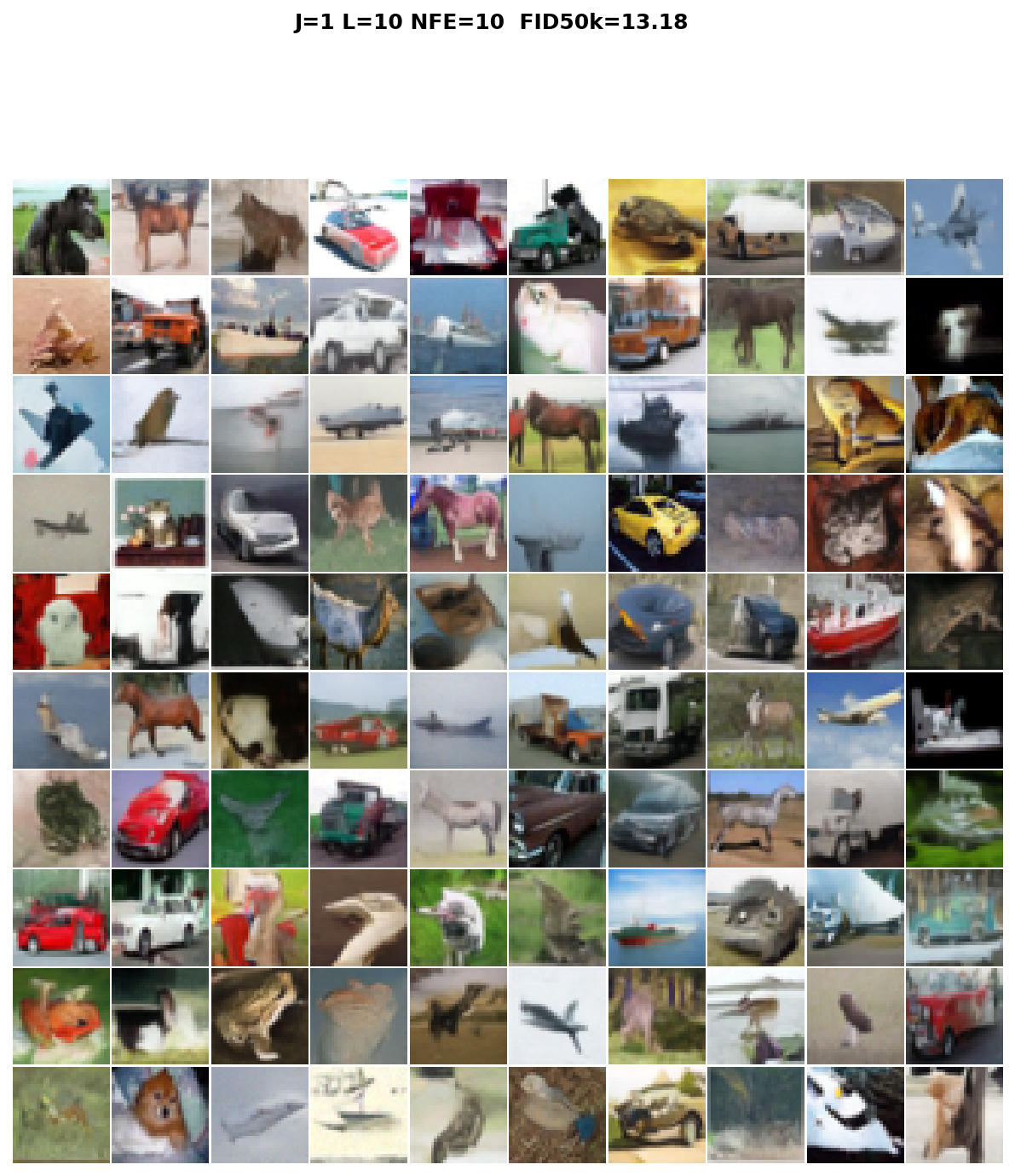} &
    \includegraphics[width=0.3\linewidth]{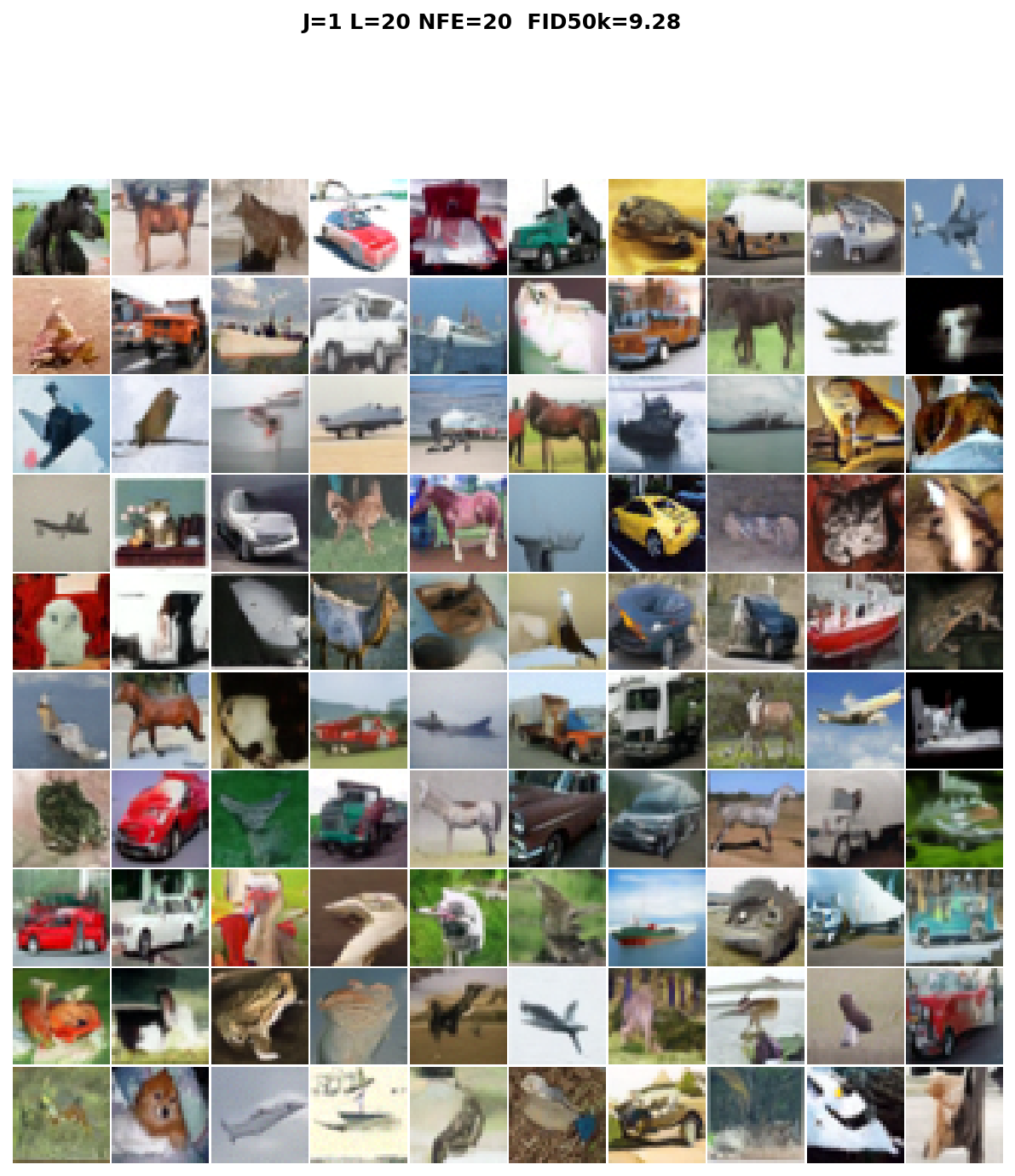} &
    \includegraphics[width=0.3\linewidth]{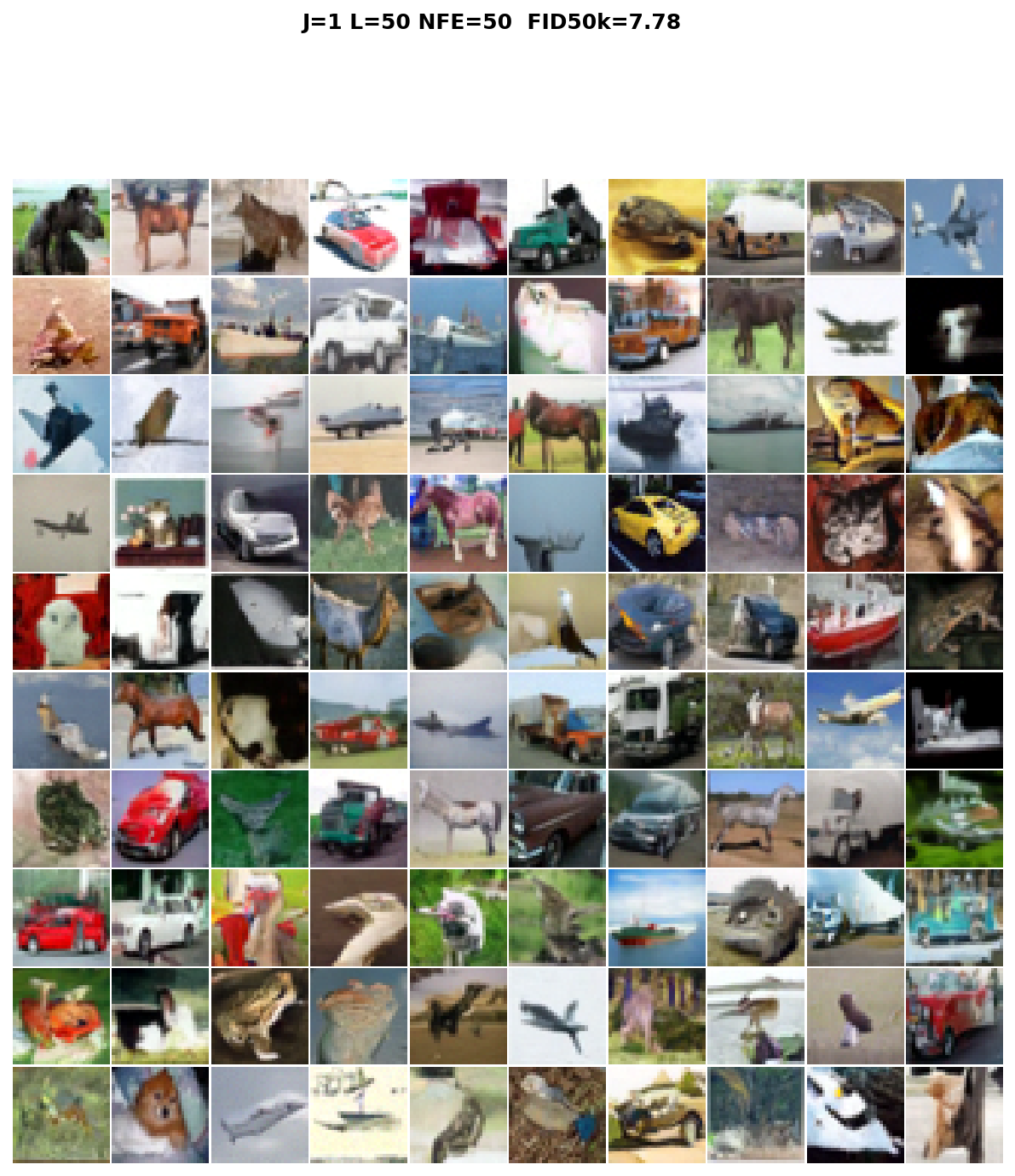} \\
    CIFAR, $L{=}10$ (FID 13.18) &
    $L{=}20$ (FID 9.28) &
    $L{=}50$ (FID 7.78)
  \end{tabular}
  \caption{CIFAR-10 Stage III samples across NFE budgets at $K{=}5000$.
  Shown for completeness alongside the $K{=}10000$ configuration
  (FID$_{50k}{=}6.35$ at $50$ NFE) reported in Figure~2 of the main
  paper.}
  \label{fig:cifar-grids}
\end{figure}

\begin{figure}[H]
  \centering
  \begin{tabular}{ccc}
    \includegraphics[width=0.28\linewidth]{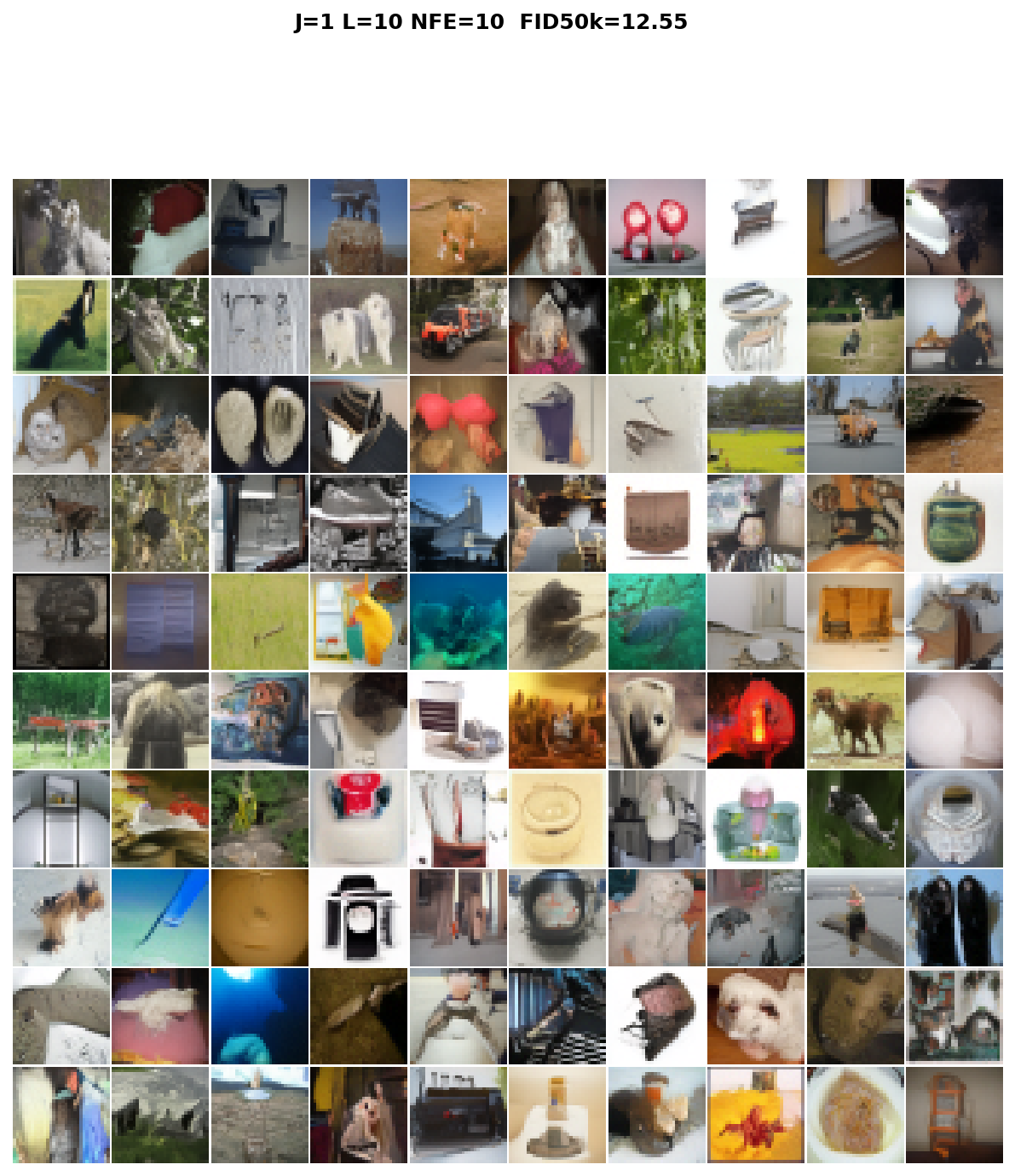} &
    \includegraphics[width=0.28\linewidth]{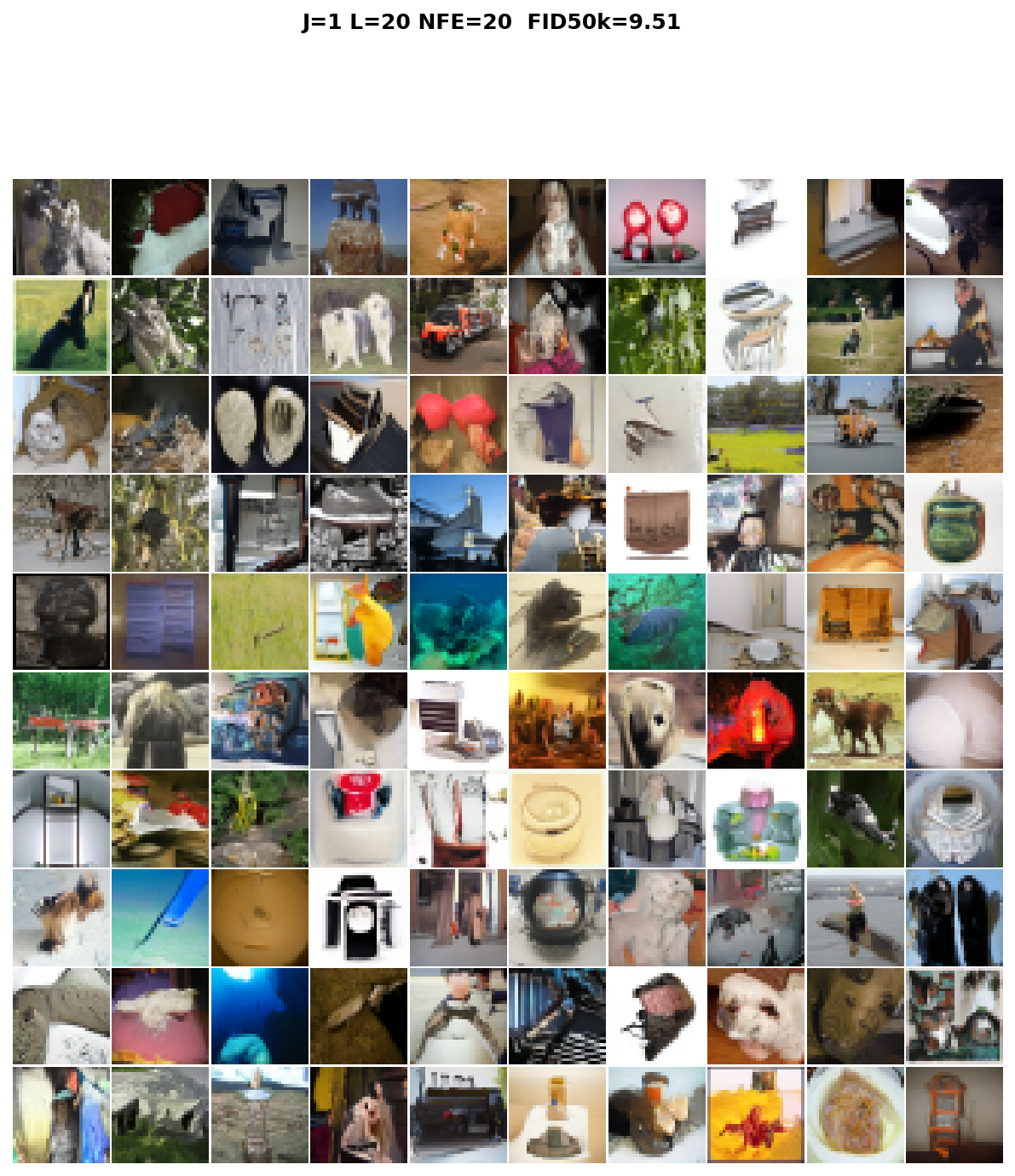} &
    \includegraphics[width=0.28\linewidth]{stage3_J1_L50_t_hrf2_resumed.png} \\
    ImageNet-32, $L{=}10$ (FID 12.55) &
    $L{=}20$ (FID 9.51) &
    $L{=}50$ (FID 8.76)
  \end{tabular}
  \caption{ImageNet-32 Stage III samples across NFE budgets.}
  \label{fig:imagenet-grids}
\end{figure}

\subsection{CelebA-HQ 256: extended sample grids}
\label{sec:celebahq-grids}

The CelebA-HQ benchmark (DC-AE f32c32 latent, $d{=}2048$,
$K{=}10000$, $r{=}128$, DiT-L correction with $\sim458$M parameters)
attains FID$_{28k}{=}\mathbf{4.17}$ at $50$ NFE without classifier-free
guidance. Figures~\ref{fig:celebahq-100}--\ref{fig:celebahq-pipeline}
show extended sample grids and a Stage~II$\to$III progression panel.
All samples are uncurated and drawn from the seed-99 30k generation
pool used to compute the headline FID.

\clearpage
\begin{figure}[!htbp]
  \centering
  \includegraphics[width=0.92\linewidth]{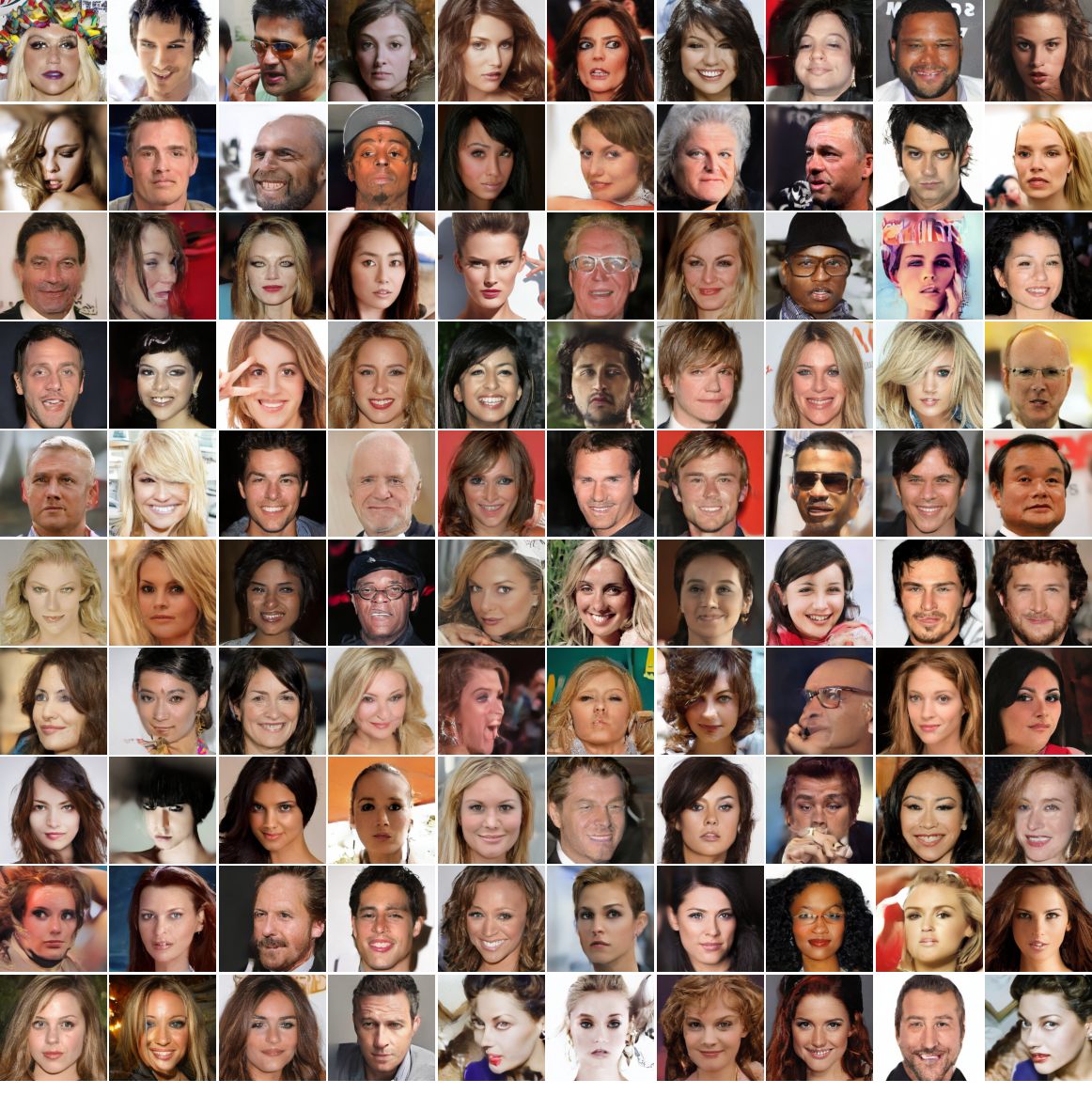}
  \caption{$10\times 10$ uncurated CelebA-HQ 256 samples from the
  CCVFM-L $L{=}50$ generator (FID$_{28k}{=}4.17$). The grid spans
  identities, lighting, expression, head pose, hair colour, and
  presence of glasses/accessories without any per-sample selection;
  the diversity confirms the surrogate $\pisur$ propagates the full
  CelebA-HQ identity manifold and not just a few high-density modes.}
  \label{fig:celebahq-100}
\end{figure}

\clearpage
\begin{figure}[!htbp]
  \centering
  \includegraphics[width=\linewidth]{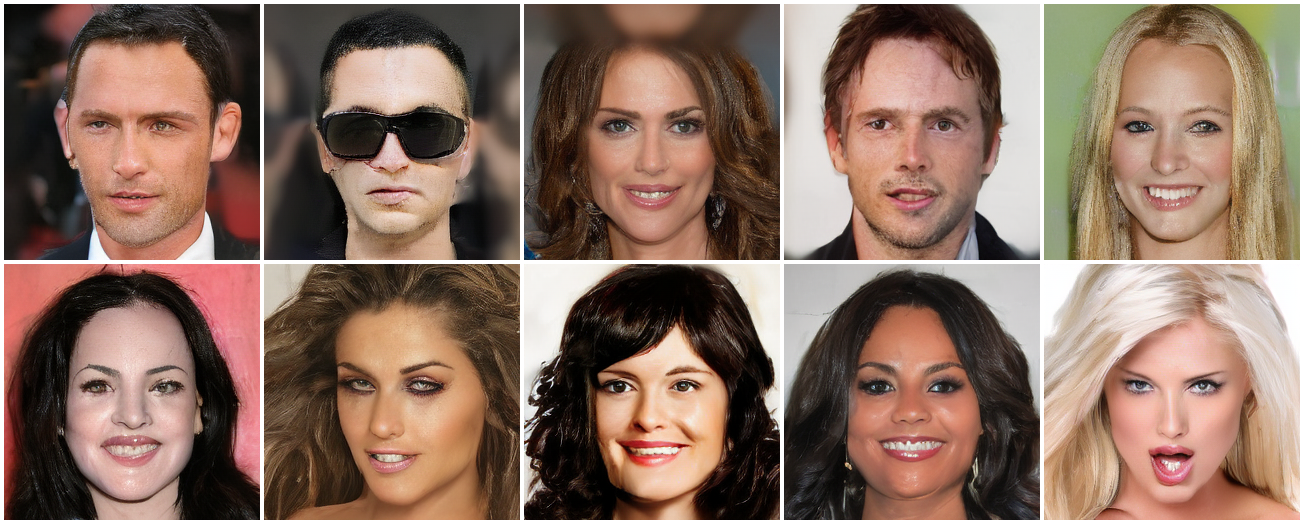}
  \caption{$2\times 5$ uncurated panel of CelebA-HQ samples, larger
  per-image resolution. Reproduces the panel used as Figure~2(d) in
  the main paper for completeness; useful for inspecting fine-grained
  artifacts (hair strands, eye pupils, skin texture) that the
  smaller-cell $10\times 10$ grid downsamples.}
  \label{fig:celebahq-2x5}
\end{figure}

\clearpage
\begin{figure}[!htbp]
  \centering
  \setlength{\tabcolsep}{2pt}
  \begin{tabular}{cc}
    \includegraphics[width=0.48\linewidth]{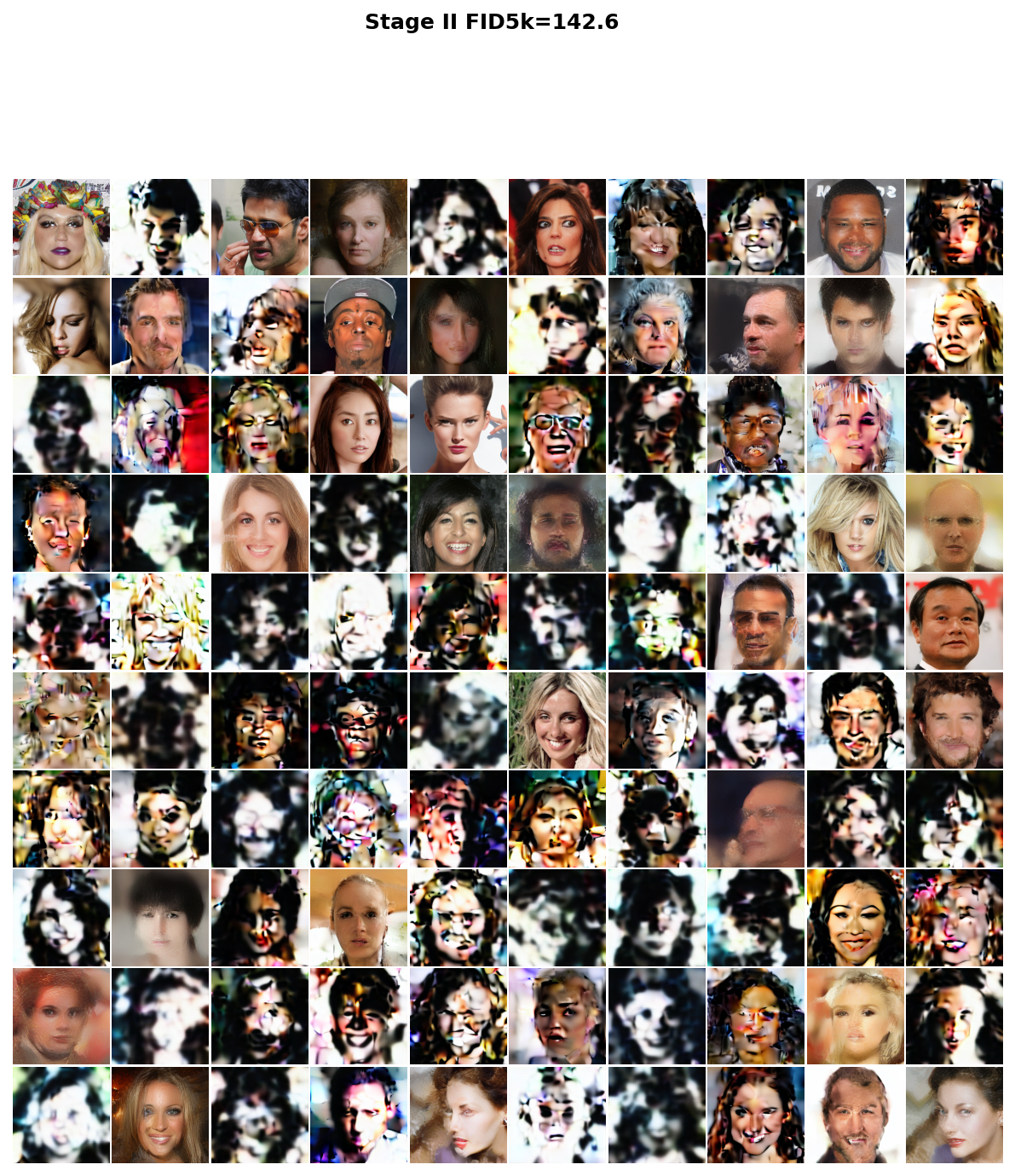} &
    \includegraphics[width=0.48\linewidth]{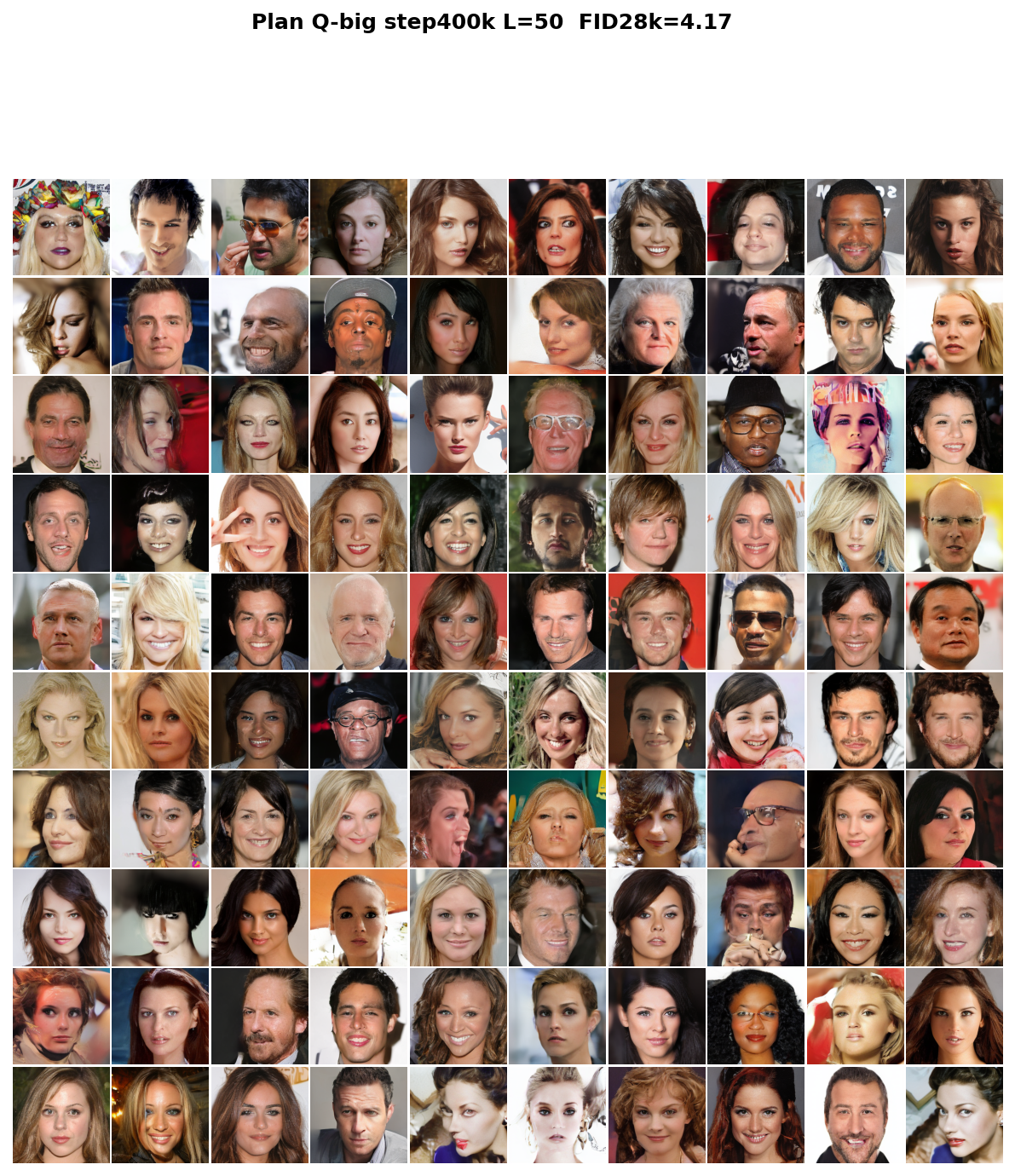} \\[2pt]
    \emph{(a) Stage II only ($1$ NFE, no correction).} &
    \emph{(b) Stage II $+$ Stage III correction ($L{=}50$).}
  \end{tabular}
  \caption{Stage~II$\to$III progression on CelebA-HQ. Panel (a) shows
  one-NFE samples drawn directly from the closed-form GMM surrogate
  $\pisur$ in DC-AE latent space and decoded; the global colour and
  pose distributions are correct but per-pixel detail is blurry. Panel
  (b) shows the same seeds after the correction flow integrates the
  residual for $L{=}50$ inner steps; faces sharpen, eyes/teeth gain
  high-frequency structure, and the FID drops from the
  Stage~II-only value to FID$_{28k}{=}4.17$. The contrast quantifies
  what the correction net buys: high-frequency residual, not global
  structure.}
  \label{fig:celebahq-pipeline}
\end{figure}

\clearpage
\begin{figure}[!htbp]
  \centering
  \includegraphics[width=0.82\linewidth]{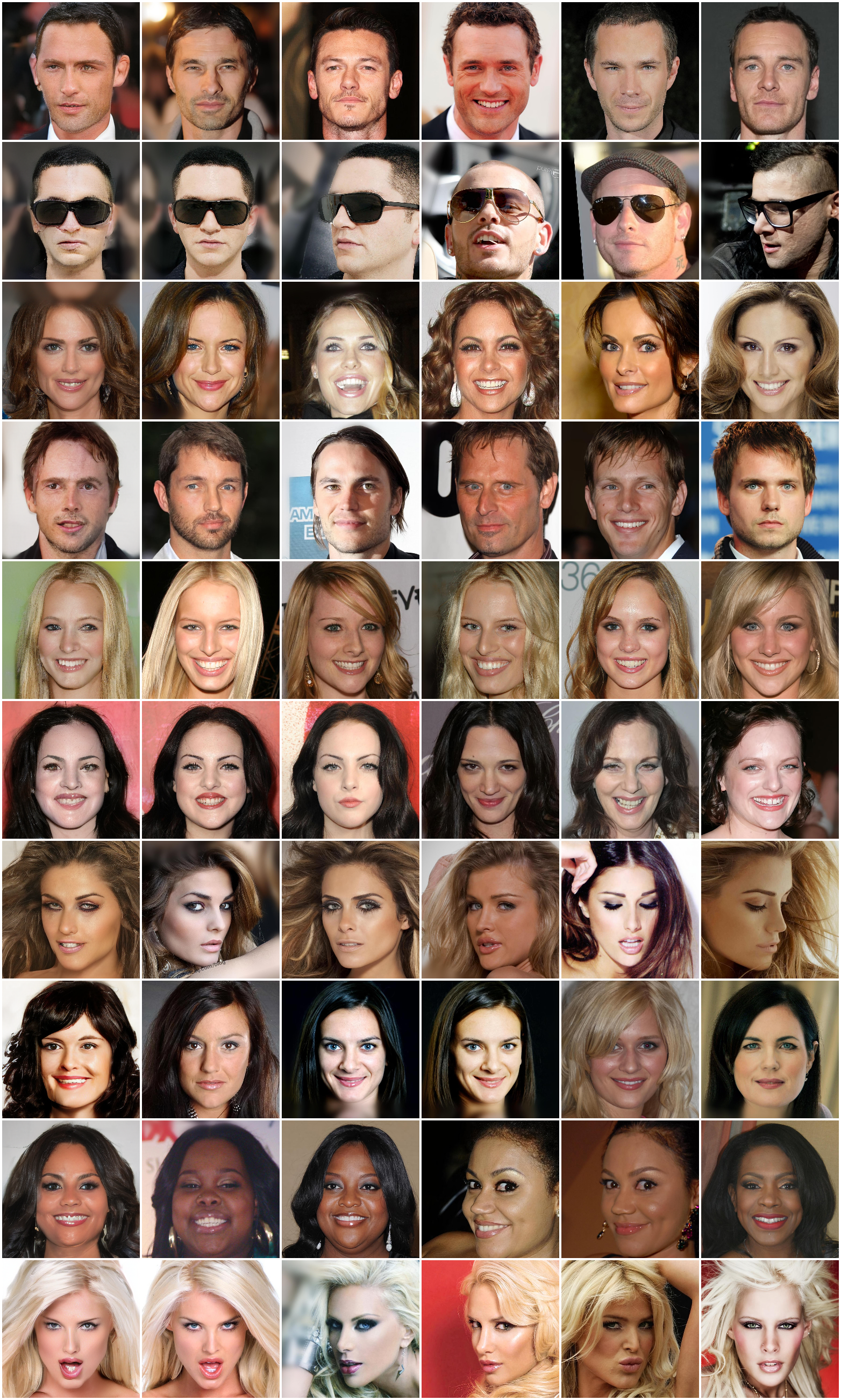}
  \caption{Memorization probe (CelebA-HQ). For each of several
  generated CCVFM samples (left column), we show the top-$5$ nearest
  CelebA-HQ training images by Inception feature distance (right $5$
  columns). Generated samples consistently differ from the closest
  training matches in identity, pose, expression, and accessories;
  the closest training neighbours are visually plausible neighbours
  of the generated sample but not identical, indicating that CCVFM
  interpolates within the training manifold rather than retrieving
  near-duplicates. This is the qualitative companion to the
  $\mathrm{Generated}\to\mathrm{Train}$ vs.\
  $\mathrm{Generated}\to\mathrm{Test}$ KS/$W_1$ statistics in
  \S\ref{sec:gof-supp}.}
  \label{fig:celebahq-topnn}
\end{figure}
\clearpage

\section{Goodness-of-fit / memorization analysis: methodology and extended results}
\label{sec:gof-supp}

This appendix provides full methodology and pixel-space duplicates
for the goodness-of-fit diagnostics summarized in Section~5.1 of the main
paper. The three Inception-feature-space diagnostic panels referenced
from the main paper are reproduced in
Figure~\ref{fig:gof-main-feat-supp}; pixel-space versions follow
later in this section.

\begin{figure}[H]
  \centering
  \includegraphics[width=0.99\linewidth]{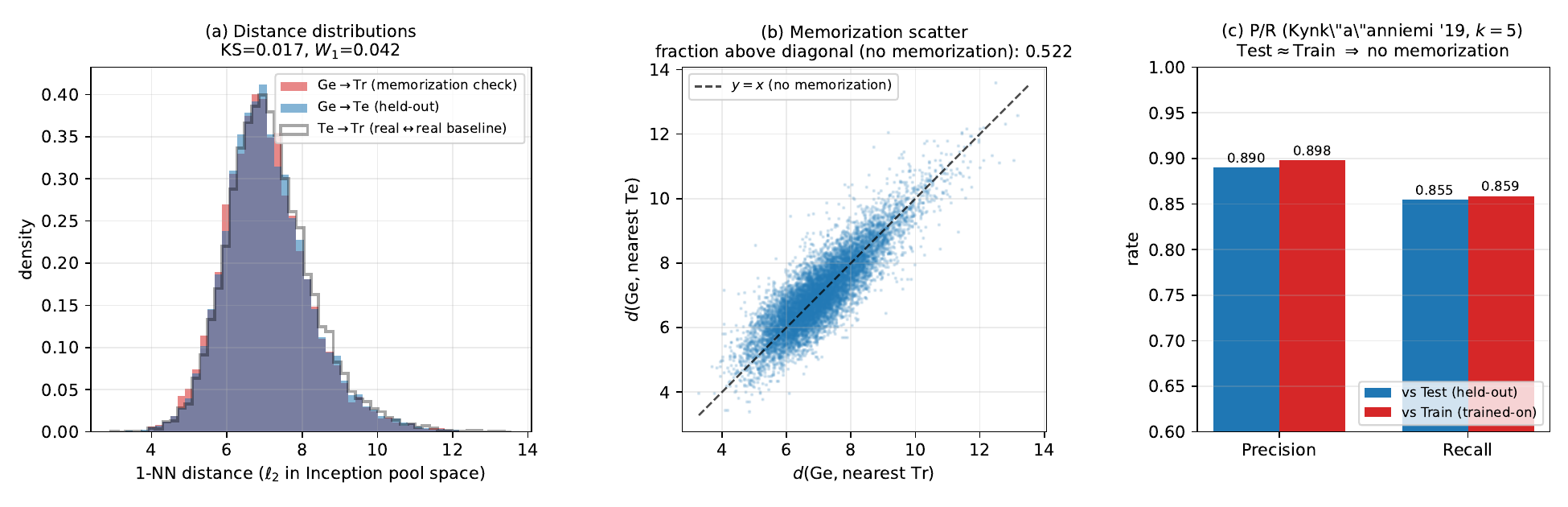}
  \caption{Goodness-of-fit diagnostics in Inception feature space
  (MNIST, $L{=}20$). \emph{Left:} density of $1$-NN distances from
  each pair of pools. \emph{Middle:} scatter of the per-sample
  Generated-to-Training vs.\ Generated-to-Test distances; points lie
  on the identity line. \emph{Right:} improved precision/recall
  \citep{kynkaanniemi2019improved} with the real reference as either
  the held-out test set or a trained-on subset; the two references
  produce statistically indistinguishable P/R values. Moved here from
  the main paper.}
  \label{fig:gof-main-feat-supp}
\end{figure}

\subsection{Why size-matching matters}

Nearest-neighbour distances and kernel-based precision/recall are
sensitive to pool size: larger reference pools generally produce
smaller $1$-NN distances, because the probability that any query
point finds a nearby neighbour grows with the number of candidates.
If we compared $50{,}000$ training images to $10{,}000$ generated
images without controlling pool sizes, the training pool would
appear artificially closer to the generated pool simply because it
has $5\times$ more candidates per query. To remove this confound
entirely, we enforce a single pool size
$N_{\text{pool}}=10{,}000$ across all three pools (Generated, Test,
Training). The $10{,}000$ figure is set by the held-out test set
size; we sub-sample the training set uniformly at random to match,
and we draw exactly $10{,}000$ generated samples from the Stage~III
L{=}20 generator (FID$_{50k}=1.09$).

\subsection{Distance pipeline}

For each ordered pair $(A,B)$ of pools we compute:
\begin{enumerate}[leftmargin=*,itemsep=0pt]
  \item the InceptionV3 pool feature $\phi(x)\in\R^{2048}$ for every
        image in $A\cup B$ (we use the same pool features that enter
        the FID computation);
  \item the $1$-nearest-neighbour distance
        $d_{A\to B}(a)=\min_{b\in B}\|\phi(a)-\phi(b)\|_2$ for every
        $a\in A$, obtained via brute-force exact search (feasible at
        $N_{\text{pool}}=10{,}000$);
  \item summary statistics: sample mean, median, the empirical
        distribution, and for any two ordered pairs $(A,B)$ and
        $(A',B')$ we report the Kolmogorov--Smirnov statistic and
        the $1$-Wasserstein distance between
        $\{d_{A\to B}(a):a\in A\}$ and
        $\{d_{A'\to B'}(a'):a'\in A'\}$.
\end{enumerate}
Pixel-space versions use $\phi(x)=\mathrm{vec}(x)$ with $L_2$ norm
directly on the $784$-dimensional pixel vector (no preprocessing).

\subsection{Precision/recall}

Improved precision/recall \citep{kynkaanniemi2019improved} is
computed with $k{=}5$ using the same $10{,}000$-sample Generated pool,
and with the real reference taken to be either the $10{,}000$-sample
Test set or a $10{,}000$-sample uniform subset of the training set.
All feature extractions use the Inception pool. Code closely follows
the reference implementation in the original paper; we verified
monotonicity of the computed statistics under $k\in\{3,5,10\}$.

\begin{table}[H]
\centering
\small
\setlength{\tabcolsep}{6pt}
\caption{Improved precision and recall on MNIST generations from the
headline configuration ($K{=}2000$, $L{=}20$, FID$_{50k}{=}1.09$),
computed in Inception pool feature space with $k{=}5$.}
\label{tab:gof-pr}
\begin{tabular}{lcc}
\toprule
Real reference & Precision $\uparrow$ & Recall $\uparrow$ \\
\midrule
Test set (held-out, $10$k)        & 0.890 & 0.855 \\
Train subset (trained-on, $10$k)  & 0.898 & 0.859 \\
\bottomrule
\end{tabular}
\end{table}

The two reference choices give precision and recall that differ by
$\Delta P=0.008$ and $\Delta R=0.004$, both below the noise floor
typically observed across $k\in\{3,5,10\}$. The generated pool is
therefore equally well-supported by held-out and trained-on real
references, consistent with the no-memorization conclusion drawn from
the $1$-NN distance distributions in \S\ref{sec:gof-supp}.

\subsection{Pixel-space duplicates of the main tables}

\begin{table}[H]
\centering
\small
\setlength{\tabcolsep}{5pt}
\caption{Goodness-of-fit summary in \emph{pixel} space (the Inception
feature version is the goodness-of-fit table in the main paper).}
\label{tab:gof-pixel}
\begin{tabular}{lcc}
\toprule
Pair A vs.\ Pair B & KS$\downarrow$ & $W_1\downarrow$ \\
\midrule
Ge$\to$Te vs.\ Te$\to$Ge              & 0.0275 & 0.0610 \\
Ge$\to$Tr vs.\ Tr$\to$Ge              & 0.0329 & 0.0793 \\
Ge$\to$Tr vs.\ Ge$\to$Te              & 0.0848 & 0.2193 \\
Tr$\to$Ge vs.\ Te$\to$Ge              & 0.0837 & 0.2003 \\
Te$\to$Tr vs.\ Tr$\to$Te              & 0.0132 & 0.0239 \\
\bottomrule
\end{tabular}
\end{table}

{In pixel space the picture flips relative to Inception
feature space: the memorization tests Ge$\to$Tr vs.\ Ge$\to$Te
(KS$=0.085$, $W_1=0.219$) and Tr$\to$Ge vs.\ Te$\to$Ge (KS$=0.084$,
$W_1=0.200$) are now the \emph{largest} non-floor entries in the
table, exceeding the fidelity-symmetry rows (KS$\in\{0.027,0.033\}$).
We do not interpret this as evidence of memorization. Pixel
$\ell_2$ on MNIST is dominated by a low-level aspect/stroke prior
(the same digit class on a different writer at the same scale and
slant produces near-zero pixel distance), so the empirical
generated-vs-train distance distribution can have a slightly
different first-moment behaviour from the generated-vs-test
distribution simply because the test set has a different writer
distribution than the training set, even when the generator has
learned the law. The Inception-feature view (Table~\ref{tab:gof-knn}
of the main paper) factors out this writer-id signal and shows the
memorization test sitting next to the real$\leftrightarrow$real
floor; we therefore treat pixel KS/$W_1$ as a sanity-check
diagnostic that is informative \emph{only} when interpreted
alongside the perceptual-feature version, and we report both for
transparency rather than as evidence on which the
no-memorization conclusion rests.}

\subsection{Complementary figures}

\begin{figure}[H]
  \centering
  \includegraphics[width=0.99\linewidth]{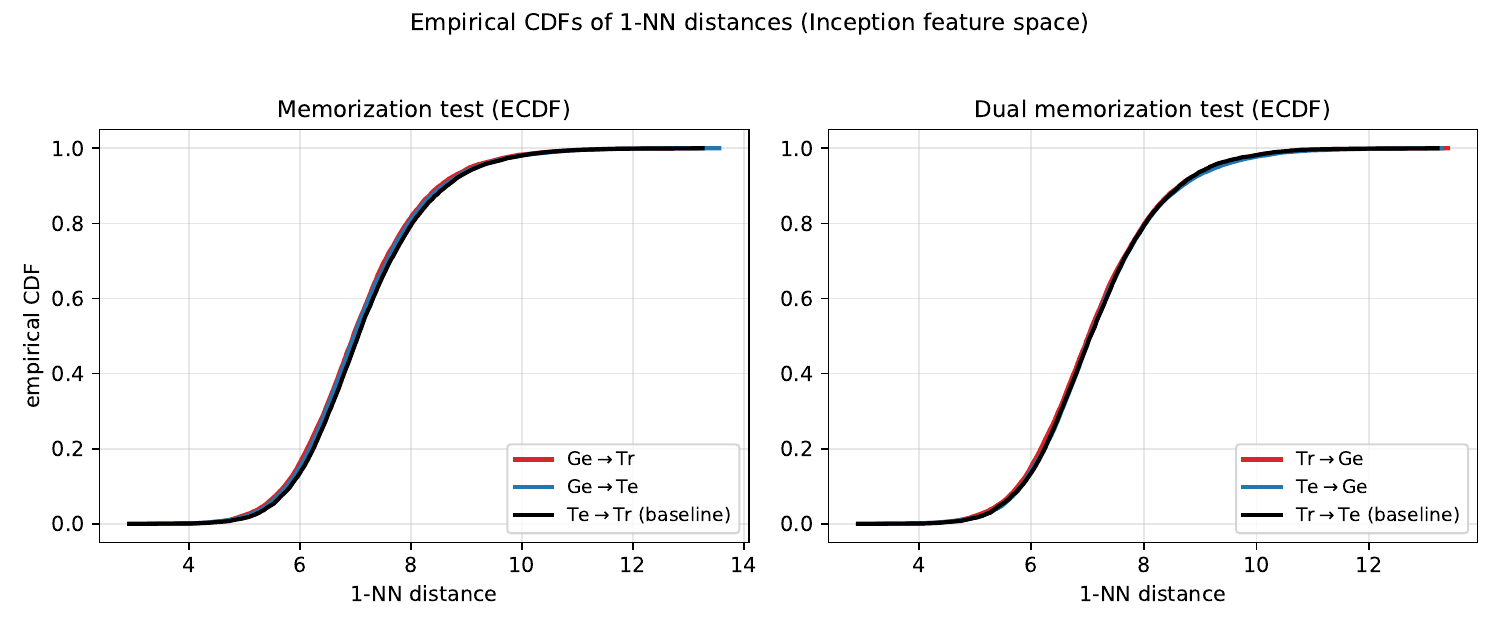}
  \caption{Cumulative-distribution visualizations of the
  $1$-NN distance distributions in Inception feature space. The
  Generated$\to$Training and Generated$\to$Test curves essentially
  overlap, confirming the memorization-test finding in the main-paper
  goodness-of-fit table.}
  \label{fig:gof-cdf-feat}
\end{figure}

\begin{figure}[H]
  \centering
  \includegraphics[width=0.99\linewidth]{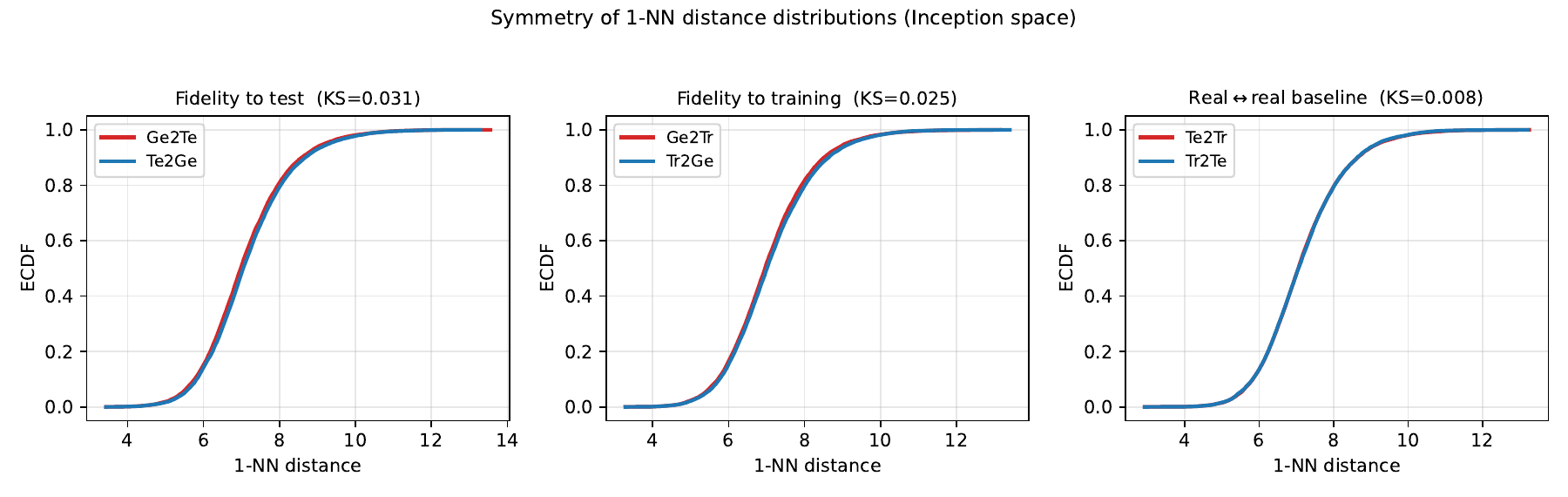}
  \caption{Symmetry diagnostic: Ge$\to$Te vs.\ Te$\to$Ge and Ge$\to$Tr
  vs.\ Tr$\to$Ge distance distributions, Inception feature space.
  Symmetry of both pairs (KS$<0.031$ in the main-paper table) indicates no mode
  dropping and no coverage deficit.}
  \label{fig:gof-symm-feat}
\end{figure}

\begin{figure}[H]
  \centering
  \includegraphics[width=0.99\linewidth]{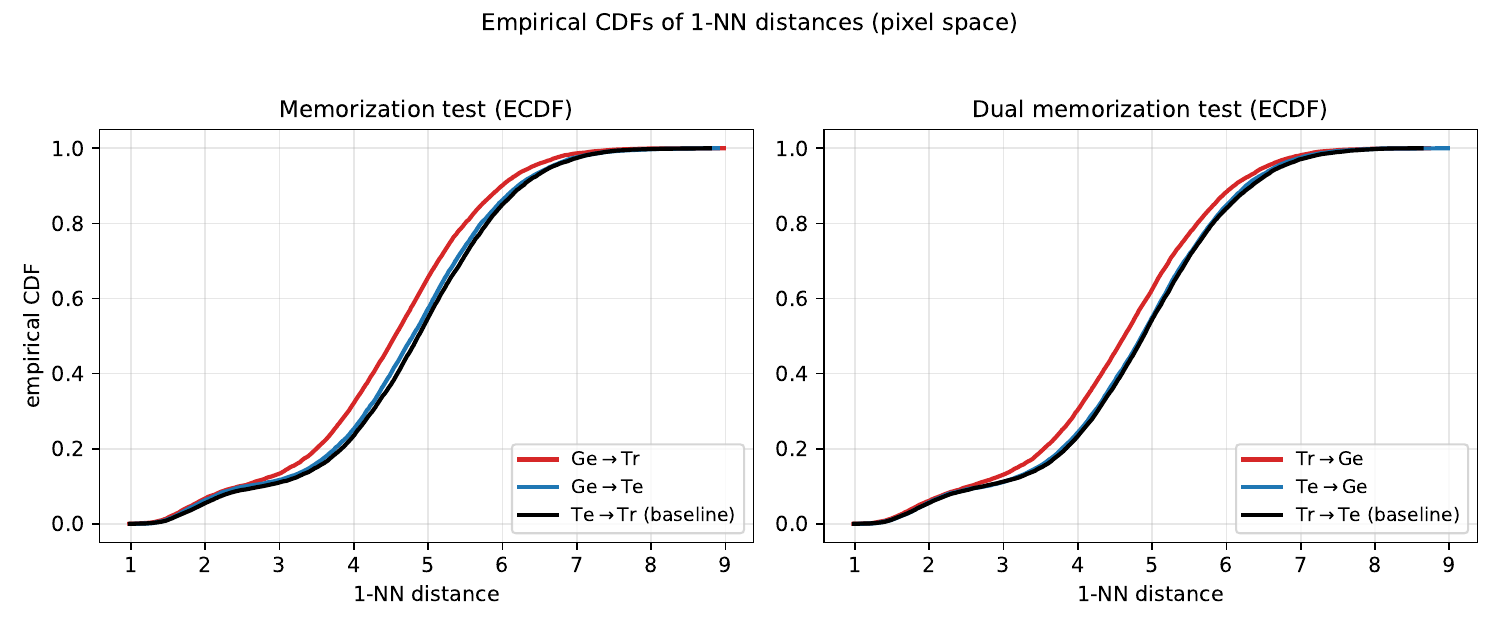}
  \caption{Pixel-space CDF visualization, complementary to
  Figure~\ref{fig:gof-cdf-feat}.}
  \label{fig:gof-cdf-pix}
\end{figure}

\begin{figure}[H]
  \centering
  \includegraphics[width=0.99\linewidth]{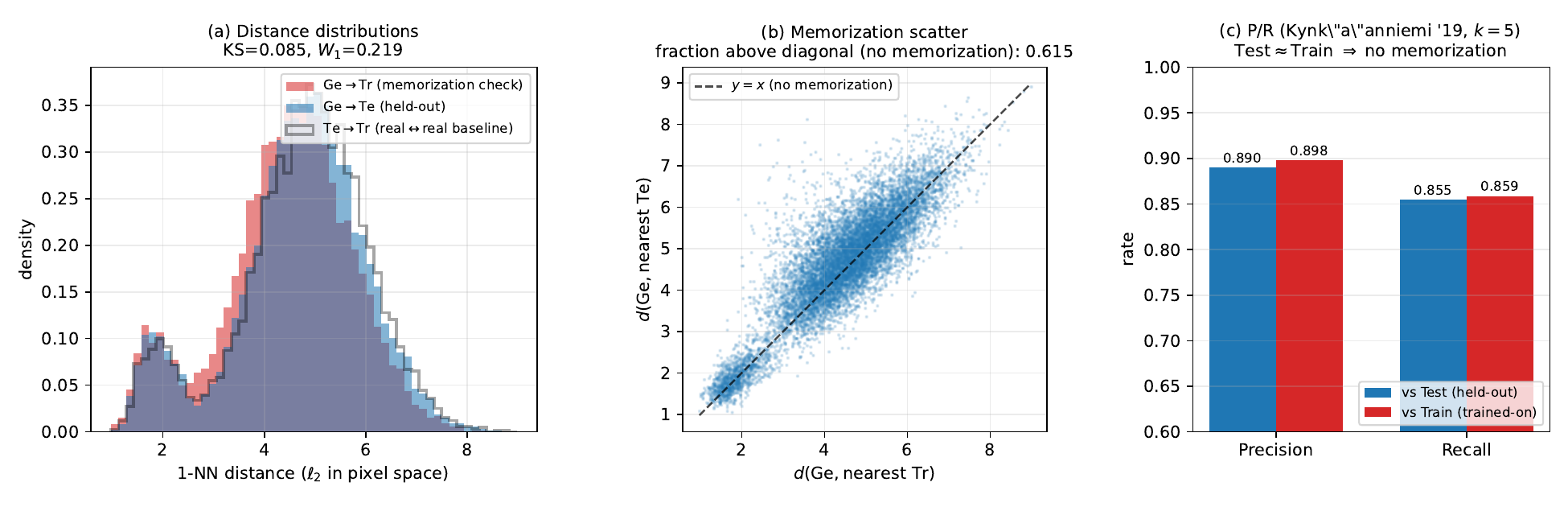}
  \caption{Pixel-space version of the three-panel diagnostic
  (histogram + memorization scatter + P/R bars) shown for Inception
  features in the main-paper goodness-of-fit diagnostic.}
  \label{fig:gof-main-pix}
\end{figure}

\subsection{Interpretation in plain language}

The headline message of the goodness-of-fit table is: if you were handed three
unlabeled pools of MNIST-like images (one from the true
distribution, one sub-sampled from the training set, and one from
our generator) and you were asked to detect which pool was synthetic
by looking at $1$-NN distances to held-out real data, you would fail
within the statistical tolerance of an i.i.d.\ real$\leftrightarrow$real
baseline. Memorization (generations clustered near specific training
images) would produce an asymmetry between Ge$\to$Tr and Ge$\to$Te
distances; we do not observe one. Mode-dropping (generations clustered
away from certain real regions) would produce an asymmetry between
Ge$\to$Te and Te$\to$Ge; we do not observe one either. Together with
the precision/recall indistinguishability across references, this
triangulates goodness of fit more
stringently than FID alone can do.

\subsection{Caveats}

Our memorization test probes the Inception feature embedding of
MNIST digits; rare sample-level copies that are InceptionV3-close to
training images would \emph{not} be detected if they happen to match
the held-out test set equally well in that embedding. A stricter
pixel-level duplicate detection (e.g.\ perceptual-hash collisions) is
a useful complementary check; we conducted a preliminary scan on
Plan~C generations that found no pixel-level duplicates to training
images at a $10^{-3}$ Hamming threshold, but we defer a systematic
version of that test to the camera-ready.